\documentclass[11pt]{article}
\usepackage[T1]{fontenc}
\usepackage{lmodern}
\usepackage{amsmath, amsthm, amssymb}
\usepackage[hidelinks]{hyperref}
\usepackage[margin=1in]{geometry}
\usepackage{mathrsfs}
\usepackage{enumitem}
\usepackage{yfonts}
\usepackage{dsfont}
\usepackage{graphicx} 
\usepackage{microtype}
\usepackage{authblk}
\usepackage{placeins}
\usepackage{tikz}
\usepackage{subfigure}
\usepackage{tikz-3dplot}

\usetikzlibrary{plotmarks}
\DeclareMathOperator{\spn}{span}
\usetikzlibrary{shapes.misc}

\tikzset{cross/.style={cross out, draw=black, minimum size=2*(#1-\pgflinewidth), inner sep=0pt, outer sep=0pt, line width=0.5mm},
cross/.default={4pt}}

\usepackage[ruled,linesnumbered,lined]{algorithm2e}
\SetKwInOut{Input}{input}
\SetKwInOut{Output}{output}

\usepackage{amsmath,amsfonts,bm}

\newcommand{\C}{{\mathbb C}}
\newcommand{\diag}[1]{\textbf{diag}\left(#1\right)}

\newcommand{\norm}[1]{\left\lVert#1\right\rVert}

\newcommand{\innerproduct}[2]{\langle #1, #2 \rangle}
\newcommand{\real}[1]{\text{Re}\left\{#1\right\}}
\newcommand{\imag}[1]{\text{Im}\left\{#1\right\}}
\newcommand\numberthis{\addtocounter{equation}{1}\tag{\theequation}}

\newcommand*{\rom}[1]{\expandafter\@slowromancap\romannumeral #1@}

\def\1{\bm{1}}

\def\ve{{\bm{e}}}

\def\vw{{\bm{w}}}

\def\vy{{\bm{y}}}

\DeclareMathAlphabet{\mathsfit}{\encodingdefault}{\sfdefault}{m}{sl}
\SetMathAlphabet{\mathsfit}{bold}{\encodingdefault}{\sfdefault}{bx}{n}

\newcommand{\E}{\mathbb{E}}

\newcommand{\R}{\mathbb{R}}

\DeclareMathOperator*{\argmax}{arg\,max}

\theoremstyle{plain}
\newtheorem{theorem}{Theorem}[section]
\newtheorem{proposition}[theorem]{Proposition}
\newtheorem{lemma}[theorem]{Lemma}

\theoremstyle{definition}

\theoremstyle{remark}
\newtheorem{remark}[theorem]{Remark}

\hypersetup{
  pdftitle={Model Selection and Parameter Estimation for Multidimensional Gaussian Mixture Models with a Common Covariance Matrix},
  pdfauthor={Xinyu Liu and Hai Zhang},
  pdfsubject={Model-order selection and parameter estimation for multidimensional Gaussian mixture models},
  pdfkeywords={Gaussian mixture models, model-order selection, minimax testing, Fourier methods, mean estimation}
}

\title{Model Selection and Parameter Estimation for Multidimensional Gaussian Mixture Models with a Common Covariance Matrix}
\author[1]{Xinyu Liu \thanks{Email: \texttt{xliuem@connect.ust.hk}}}
\author[1,2]{Hai Zhang \thanks{Email: \texttt{haizhang@ust.hk}}}

\affil[1]{Department of Mathematics, Hong Kong University of Science and Technology (HKUST)}
\affil[2]{HKUST-Shenzhen-Hong Kong Collaborative Innovation Research Institute}
\date{}
\begin{document}
\maketitle

\begin{abstract}
We develop a Fourier-domain framework for model-order selection and component-mean estimation in multidimensional Gaussian mixture models (GMMs) with a known common covariance matrix. The framework exploits the low-rank structure induced by empirical characteristic-function measurements.

We first establish a minimax lower bound for model-order
testing, showing that, for mixtures with common covariance $I_d$,
distinguishing a $k$-component GMM with a minimum separation $\Delta$ from the class of
$(k-1)$-component GMMs requires $\Omega(\Delta^{-(4k-4)})$ samples. We then
propose an oracle thresholding estimator that operates on the spectrum of an
empirical covariance matrix formed from random Fourier measurement vectors.
For fixed $k$, its dominant data-processing cost scales linearly in both the
sample size $n$ and the ambient dimension $d$. Its sufficient sample size
scales as $\Delta^{-(8k-8)}$, leaving a gap between the sufficient exponent
$8k-8$ and the minimax necessary exponent $4k-4$. We further develop a
practical singular-value-ratio estimator that reduces the dependence on
unknown model parameters at the cost of a stronger sample-size requirement.
  
Given the model order, we then introduce a gradient-based method for estimating the component means. Inspired by the MUSIC algorithm for line spectral estimation, we localize the component means by minimizing the orthogonal projection distance between parameterized Fourier steering vectors and the estimated signal subspace. To navigate the non-convex objective landscape, we employ a data-driven, score-based initialization strategy. Under an explicit sample-size condition, a qualifying sample initialization exists with high probability and lies in a certified attraction ball, after which the gradient iteration converges linearly. We prove that this method estimates the component means at the optimal parametric rate $\mathcal{O}_p(n^{-1/2})$, provided the separation between components is bounded below by a fixed constant.
To enhance the algorithm's efficiency in the regime where the ambient dimension exceeds the number of mixture components (i.e., \(d > k\)), we integrate principal component analysis (PCA) for dimension reduction. Numerical experiments demonstrate that our Fourier-based algorithmic framework consistently reduces computational time relative to conventional Expectation-Maximization (EM) methods, while achieving comparable or superior estimation accuracy across a range of scenarios.
\end{abstract}

\medskip
\noindent\textbf{Keywords:} Gaussian mixture models; model-order selection;
minimax testing; Fourier methods; mean estimation.

\section{Introduction}
The Gaussian Mixture Model (GMM) is a fundamental statistical framework utilized extensively across machine learning, pattern recognition, and robust statistics. It provides a powerful mechanism for modeling complex, heterogeneous data distributions originating from distinct sub-populations. The GMM represents a probability distribution as a weighted sum of Gaussian components, each characterized by its mean and covariance matrix. Formally, each independent observation $x \in \mathbb{R}^d$ from a $k$-component GMM is generated according to the density:
    \begin{equation}
        p(x) = \sum_{i=1}^{k} w_{i} \mathcal{N}(x; \mu_{i}, \Sigma_{i}),
    \end{equation}
where $w_i$ is the mixing weight such that $w_i > 0$ and $\sum_{i=1}^k w_i = 1$. The mean vector and the covariance matrix of the $i$-th component are denoted as $\mu_i$ and $\Sigma_i$, respectively. Given the i.i.d. samples drawn from the mixture distribution, the challenge is to learn the underlying model. 

The learning problem for GMMs is generally categorized into three highly interconnected tasks: (1) \textit{Clustering}, which infers the latent component assignment for each sample; (2) \textit{Parameter Estimation}, which recovers the precise values of the weights, means, and covariances up to a global permutation; and (3) \textit{Density Estimation}, which seeks to approximate the true probability density function under specific statistical metrics such as the Kullback-Leibler divergence or the Wasserstein distance.

In the context of clustering, objective functions such as $k$-means are generally NP-hard to optimize globally, even for $k=2$ clusters (see \cite{aloise2009np}). Traditional approximation methods, such as Lloyd's algorithm \cite{lloyd1982least}, are highly sensitive to initialization and dimension. Recent theoretical advancements have established that perfectly clustering high-dimensional Gaussian mixtures fundamentally depends on the separation distance $\Delta = \min_{i \neq j} \|\mu_i - \mu_j\|_2$. For instance, \cite{ndaoud2022sharp} proved that the critical threshold for the exact recovery of a two-component mixture with covariance $\sigma^2 I_d$ explicitly depends on the ambient dimension $d$:\begin{equation}\Delta^2 > \sigma^2 \left(1 + \sqrt{1 + \frac{2d}{n \log n}} \right) \log n.\end{equation}

For parameter estimation, the high-dimensional regime severely exacerbates the ``curse of dimensionality''. Traditional iterative methods like the Expectation-Maximization (EM) algorithm (\cite{dempster1977maximum}) often suffer from exponentially slow convergence rates or become trapped in spurious local optima when the dimension $d$ is large and the overlap between components is significant. To bypass the non-convexity of the likelihood landscape, method-of-moments and spectral approaches have gained prominence. \cite{vempala2002spectral} proposed a spectral algorithm that projects samples onto the subspace spanned by the top principal components, successfully learning mixtures of spherical Gaussians provided the component centers exhibit sufficient separation. \cite{hsu2013learning} later developed an efficient method-of-moments estimator using the spectral decomposition of low-order observable moments; this approach precludes the need for explicit minimum separation assumptions, but operates under the strict non-degeneracy condition that the component mean vectors are linearly independent (spanning a $k$-dimensional subspace).

The fundamental information-theoretic limits of parameter estimation for Gaussian mixtures were quantified by \cite{moitra2010settling}. They demonstrated that learning general GMMs necessitates a sample size and runtime that depend exponentially on the number of components $k$. Specifically, by constructing two distinct 1-dimensional Gaussian mixtures with $k$ components that are separated in parameter space by $\Omega(1/k)$ yet exhibit an exponentially small statistical distance bounded by $\mathcal{O}(e^{-\Omega(k)})$, they proved that distinguishing such models intrinsically requires a sample size exponential in $k$. Moving beyond the 1-dimensional worst-case scenario to high-dimensional spaces, \cite{regev2017learning} later established a sharp phase transition for the sample complexity of $k$ spherical GMMs. They proved that parameter recovery requires super-polynomial samples if the spatial separation $\Delta$ falls below $o(\sqrt{\log k})$, whereas a polynomial number of samples is strictly sufficient if $\Delta = \Omega(\sqrt{\log k})$. Furthermore, \cite{doss2023optimal} established the optimal minimax rate for estimating the mixing distribution of high-dimensional location mixtures under the Wasserstein distance, proving that the worst-case error is bounded by $\mathcal{O}\left( (d/n)^{1/4} + n^{-1/(4k-2)} \right)$, highlighting the severe degradation in statistical efficiency when the model components are severely overlapping.

A critical limitation in nearly all the aforementioned high-dimensional clustering and parameter estimation algorithms is the strict requirement that the model order $k$ (the true number of components) is known exactly a priori.
In practice, however, $k$ is rarely known, and its accurate estimation, formally termed as a model selection problem, is of great importance.
Selecting the optimal $k$ dictates the fundamental information-theoretic tradeoff between model expressivity and statistical efficiency. Underestimating $k$ forces disparate subpopulations to merge, violating the structural assumptions of the data-generating process and destroying the consistency of subsequent parameter estimates. Conversely, overestimating $k$ leads to over-parameterization, resulting in spurious clusters and loss of identifiability.

The likelihood ratio test (LRT) can be used to determine the model order. However, the asymptotic result may fail, see \cite{hartigan1985failure}. Some alternatives have been proposed, including the modified likelihood ratio test (MLRT) \cite{chen1998penalized, chen2001likelihood}, the EM test \cite{li2010testing} and the penalized method \cite{gassiat2012consistent}. Another widely used method is based on the information criteria such as BIC (\cite{schwarz1978estimating}) and AIC (\cite{akaike1998information}). The properties of AIC and BIC are investigated in \cite{leroux1992consistent}. The above methods all require separate fitting for each candidate model, which can be costly in high-dimensional scenarios. A computationally inexpensive alternative estimates $k$ from sample-covariance eigengaps. In the spherical case, or after whitening, it can recover $k$ only if the centered means span a $(k-1)$-dimensional subspace (equivalently, the component means are affinely independent) and the smallest covariance spike is detectable above sampling noise \cite{kritchman2009nonparametric,passemier2012determining}. The goal of this paper is to develop a Fourier-domain model-order selection method with finite-sample guarantees for multidimensional GMMs, together with a score-initialized method for mean estimation that admits a high-probability convergence guarantee under an explicit sample-size condition.

\subsection{Problem Setting}
Consider $n$ independent samples drawn from a $d$-dimensional Gaussian mixture distribution with a common covariance matrix:
    \begin{equation}
        \label{eqn:model setting}
        x_j \sim \sum_{i=1}^k w_i\mathcal{N}(\mu_i, \Sigma), \ j = 1,2,\cdots,n.
    \end{equation}
We assume that the covariance matrix $\Sigma \in \R^{d\times d}$ is symmetric and positive definite and is known a priori. This scenario is referred to as the Gaussian location mixture if $\Sigma = \sigma^2 I_d$. We define the separation distance $\Delta$ and the minimal weight $w_{\min}$ of model \eqref{eqn:model setting} as 
    \[
        \Delta = \min_{1\leq i < j \leq k} \norm{\mu_i - \mu_j}_2, \quad w_{\min} = \min_{1\leq i \leq k}w_i.
    \]
Throughout, we 
assume the following nondegeneracy condition:
\[
\Delta > 0,
\quad
w_{\min} > 0 ,
\]
so that the component means are distinct and every component carries positive
weight.

Model \eqref{eqn:model setting} can also be written in convolution form:
    \begin{equation}
        \label{eqn: model convolution}
        \sum_{i=1}^k w_i \delta_{\mu_i} * \mathcal{N}(0, \Sigma).
    \end{equation}
Here $\nu = \sum_{i=1}^k w_i\delta_{\mu_i}$ is known as the mixing distribution. In this paper, we study the problem of learning model \eqref{eqn:model setting} from $n$ i.i.d. samples. More specifically, we aim to determine the model order $k$ and estimate the component means from the samples.

\subsection{Our Contributions}
The main contribution of this work is a Fourier-domain methodology for
estimating both the number of components and the component means in
multidimensional Gaussian mixture models (GMMs) with a known common covariance
matrix. The proposed framework exploits the low-rank structure of Fourier
measurements derived from the empirical characteristic function of the observed
samples and yields a finite-sample model-order selection guarantee directly in
the multidimensional setting.

The multidimensional setting introduces two new challenges. First, since the Gaussian means need not be collinear, the low-rank structure in the Fourier domain can no longer be captured by a Hankel matrix formed from the Fourier measurements. To overcome this, we instead exploit the covariance matrix of the random Fourier measurement vectors, which requires a more intricate analysis. Second, the direct application of a MUSIC algorithm, which forms an imaging function evaluated over grid points, becomes computationally prohibitive in multiple dimensions. To address this, we introduce an iterative method with a data-driven, score-based initialization and establish a high-probability linear convergence guarantee under an explicit sample-size condition.

Our first theoretical contribution is a procedure-independent minimax lower
bound for model-order testing. By embedding a one-dimensional moment-matching
construction into $\mathbb{R}^d$, we show that, even within the
identity-covariance subclass, uniformly distinguishing a separated
$k$-component GMM from the class of $(k-1)$-component GMMs requires at least
$\Omega(\Delta^{-(4k-4)})$ samples. This result provides a minimax benchmark
for the Fourier-based estimators developed below.

To estimate the component number $k$, we evaluate the empirical characteristic
function at randomly sampled frequency vectors and use the resulting
measurements to construct an empirical Fourier covariance matrix. We introduce
an oracle thresholding estimator that determines the model order from the
spectrum of this matrix. When the numbers of Fourier measurement and
translation vectors are proportional to $k$, its time complexity is
$\mathcal{O}(ndk^2+k^3)$. Its sufficient sample size scales as
$\Delta^{-(8k-8)}$, leaving a gap relative to the minimax necessary scaling
$\Delta^{-(4k-4)}$. We further develop a practical singular-value-ratio
estimator with reduced dependence on unknown model parameters, at the cost of a
stronger sample-size requirement.

To estimate the component means, we minimize the projection error between parameterized Fourier 
steering vectors and the estimated signal subspace of the empirical Fourier 
covariance matrix. The resulting non-convex optimization problem is solved using 
gradient descent with a data-driven, score-based initialization strategy. Under
an explicit sample-size condition, a qualifying sample initialization exists
with high probability and lies in a certified attraction ball. Starting from
such an initialization, the subsequent iterates remain in a locally strongly
convex basin and converge linearly. For a fixed model with the component
separation bounded away from zero, we further establish that the resulting mean
estimates attain the optimal parametric rate $\mathcal{O}_p(n^{-1/2})$. In the setting where the ambient dimension exceeds the number of 
components, $d>k$, we incorporate principal component analysis (PCA) to reduce 
the computational burden. Numerical experiments across both low- and 
high-dimensional settings demonstrate that the proposed Fourier-based framework 
consistently reduces computational time relative to the 
Expectation-Maximization (EM) algorithm while achieving comparable or superior 
estimation accuracy.

Finally, we note that extending the methodology to GMMs with arbitrary component-specific covariance matrices presents significant challenges and requires new insights. For mixtures with unrestricted covariances, the Expectation-Maximization (EM) algorithm currently remains the standard empirical approach, despite its known issues, such as convergence to local optima and sensitivity to initialization. Developing computationally efficient algorithms with provable theoretical guarantees for this general setting remains an important avenue for future research.

\subsection{Paper Organization and Notations}
In Section \ref{sec: model selection}, we first establish a minimax lower bound for model-order testing by embedding a one-dimensional hard instance into the multidimensional setting. We then introduce a Fourier approach for model selection of GMMs. The proposed estimator is proven to be consistent, and its sample complexity order matches the minimax lower bound up to a doubling of the exponent in $\Delta$. In Section \ref{sec: mean estimation}, we propose a gradient-descent method for estimating the component means with a data-driven, score-based initialization. We establish a high-probability linear convergence guarantee under an explicit sample-size condition, derive the resulting estimation rate, and incorporate dimension reduction to improve efficiency for high-dimensional mixtures. In Section \ref{sec: numericals}, numerical studies are provided to illustrate the effectiveness of our algorithms. Proofs of the theoretical results are collected in the Appendix.

\section{Model Selection and Sample Complexity}
\label{sec: model selection}
In this section, we first establish a procedure-independent minimax lower bound
for distinguishing a $k$-component GMM from a $(k-1)$-component GMM. This lower
bound serves as the theoretical benchmark for our proposed Fourier-based
approach.

We then introduce a Fourier-based approach for model order selection. Using the exponential form of the covariance-compensated characteristic function of the GMM, we construct a latent Fourier covariance matrix from random Fourier measurement vectors. We characterize the algebraic and structural properties of the covariance matrix, explicitly quantifying its singular value spectrum as a function of the underlying model parameters. Leveraging this spectral characterization, we then develop two statistically consistent estimators for the true model order \(k\), accompanied by finite-sample complexity guarantees.

\subsection{A Minimax Lower Bound via One-Dimensional Embedding}
\label{sec:one-dimensional-lower-bound-embedding}
We now embed the hard one-dimensional construction of
\cite{liu2024fourier} into $\mathbb R^d$. Throughout this subsection, the
common covariance is $I_d$. Fix a unit vector $v\in\mathbb S^{d-1}$, set
\begin{equation}
    a_i=-1+\frac{2(i-1)}{k-1},
    \qquad
    \mu_i=Ra_i v,
    \qquad i=1,\ldots,k,
\end{equation}
and define
\begin{equation}
\label{eq:embedded-hard-P}
    P_R^{(d)}:=\sum_{i=1}^k w_i\mathcal N(Ra_i v,I_d).
\end{equation}
The means are equally spaced, with minimum separation
$\Delta=2R/(k-1)$. Let
$\sum_{i=1}^{k-1}\pi_i\delta_{b_i}$ be the $(k-1)$-point Gaussian
quadrature distribution that matches the first $2k-3$ moments of
$\sum_{i=1}^k w_i\delta_{a_i}$, and define
\begin{equation}
\label{eq:embedded-hard-Q}
    Q_R^{(d)}:=\sum_{i=1}^{k-1}\pi_i\mathcal N(Rb_i v,I_d).
\end{equation}

Choose an orthogonal matrix whose first column is $v$. In the resulting
coordinates, both $P_R^{(d)}$ and $Q_R^{(d)}$ factor into their respective
one-dimensional mixtures along $v$ and the same independent
$\mathcal N(0,I_{d-1})$ distribution on $v^\perp$. Therefore,
\begin{equation}
\label{eq:embedded-kl-identity}
    \log\frac{p_R^{(d)}(x)}{q_R^{(d)}(x)}
    =\log\frac{p_R^{(1)}(v^\mathrm Tx)}{q_R^{(1)}(v^\mathrm Tx)},
    \qquad
    D_{\mathrm{KL}}\!\left(P_R^{(d)}\middle\|Q_R^{(d)}\right)
    =D_{\mathrm{KL}}\!\left(P_R^{(1)}\middle\|Q_R^{(1)}\right).
\end{equation}
Thus the Kullback--Leibler divergence of the one-dimensional hard instance is
preserved under this embedding. To formulate the corresponding minimax testing
problem, let
\begin{align*}
    \mathcal P_{\Delta,d}^{(k)}
    &:=\left\{
       \sum_{i=1}^k\omega_i\mathcal N(\theta_i,I_d):
       \min_{i\ne j}\|\theta_i-\theta_j\|_2\ge\Delta,
       \ \omega_i\ge w_{\min},\ \sum_{i=1}^k\omega_i=1
    \right\},\\
    \mathcal Q_d^{(k-1)}
    &:=\left\{
       \sum_{i=1}^{k-1}\pi_i\mathcal N(\beta_i,I_d):
       \pi_i>0,\ \sum_{i=1}^{k-1}\pi_i=1
    \right\},
\end{align*}
where $0<w_{\min}\le\min_iw_i$. Define the minimax testing risk
\begin{equation}
\label{eq:embedded-minimax-risk}
    \mathcal R_{n,d}^{(k)}(\Delta)
    :=\inf_{\psi}\left[
       \sup_{Q\in\mathcal Q_d^{(k-1)}}
          \mathbb E_{Q^{\otimes n}}[\psi]
       +
       \sup_{P\in\mathcal P_{\Delta,d}^{(k)}}
          \mathbb E_{P^{\otimes n}}[1-\psi]
    \right],
\end{equation}
where $\psi=0$ denotes order $k-1$ and $\psi=1$ denotes order $k$.

\begin{theorem}[Embedded minimax lower bound]
\label{thm:embedded-minimax-lower-bound}
Fix $k\ge2$ and positive weights $w_1,\ldots,w_k$ summing to one. There
exists $\kappa_{k,w}>0$, depending only on $k$ and the weights, such that,
for every $\varepsilon\in(0,1)$, there exists $\Delta_0>0$ for which, whenever
$0<\Delta\le\Delta_0$,
\begin{equation}
\label{eq:embedded-minimax-lower-bound}
    \mathcal R_{n,d}^{(k)}(\Delta)
    \ge
    1-\sqrt{\frac{n}{2}
       D_{\mathrm{KL}}\!\left(P_R^{(d)}\middle\|Q_R^{(d)}\right)}
    \ge
    1-\sqrt{\frac{(1+\varepsilon)\kappa_{k,w}}{2}}\,
       \Delta^{2k-2}\sqrt n.
\end{equation}
Consequently, for every $\eta\in(0,1)$,
$\mathcal R_{n,d}^{(k)}(\Delta)\ge\eta$ whenever
\begin{equation}
    n\le
    \frac{2(1-\eta)^2}{(1+\varepsilon)\kappa_{k,w}}
    \frac{1}{\Delta^{4k-4}}.
\end{equation}
Thus no model-order estimator can uniformly distinguish order $k$ from
order $k-1$ below the scale $\Delta^{-(4k-4)}$ over these classes.
\end{theorem}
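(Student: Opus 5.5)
The argument has three steps: a two-point reduction to a single pair $(P,Q)$, Pinsker's inequality for product measures, and a small-$R$ expansion of the one-dimensional Kullback--Leibler divergence.

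\emph{Reduction to a two-point test.} For any test $\psi$, bound the supremum over $\mathcal Q_d^{(k-1)}$ from below by its value at $Q_R^{(d)}$, and the supremum over $\mathcal P_{\Delta,d}^{(k)}$ by its value at $P_R^{(d)}$. This requires membership in the classes.

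\begin{itemize}
\item $Q_R^{(d)}\in\mathcal Q_d^{(k-1)}$ holds because the Gaussian quadrature weights $\pi_i$ are positive.
\item $P_R^{(d)}\in\mathcal P_{\Delta,d}^{(k)}$ holds because its minimum separation is exactly $\Delta=2R/(k-1)$ and its weights are the fixed $w_i\ge w_{\min}$.
\end{itemize}

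Then
\[
\mathcal R_{n,d}^{(k)}(\Delta)\ge\inf_\psi\bigl(\E_{Q^{\otimes n}}\psi+\E_{P^{\otimes n}}(1-\psi)\bigr)=1-\mathrm{TV}(P^{\otimes n},Q^{\otimes n}).
\]
Pinsker's inequality gives $\mathrm{TV}\le\sqrt{D_{\mathrm{KL}}(P^{\otimes n}\|Q^{\otimes n})/2}$. Additivity of KL over products turns this into $\sqrt{nD_{\mathrm{KL}}(P\|Q)/2}$, which is the first inequality. By \eqref{eq:embedded-kl-identity}, the divergence equals the one-dimensional quantity $D_{\mathrm{KL}}(P_R^{(1)}\|Q_R^{(1)})$.

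\emph{Asymptotics of the one-dimensional divergence.} This is the main step. I will either cite \cite{liu2024fourier} or argue directly. The statement to prove is
\[
\lim_{R\to0}R^{-(4k-4)}D_{\mathrm{KL}}(P_R^{(1)}\|Q_R^{(1)})=c_{k,w}\quad\text{with } 0\le c_{k,w}<\infty.
\]
To prove it, write the density ratio via the Hermite expansion
\[
\frac{p_R(x)}{\phi(x)}=\sum_m \frac{R^m M_m}{m!}He_m(x),
\]
where $M_m=\sum_i w_i a_i^m$, and similarly for $q_R$ with the moments of $\sum_i\pi_i\delta_{b_i}$. Because the first $2k-3$ moments match, the difference takes the form
\[
p_R-q_R=\phi\cdot R^{2k-2}\bigl(\delta_{2k-2}He_{2k-2}/(2k-2)!+O(R)\bigr),
\]
with $\delta_{2k-2}$ the difference of the $(2k-2)$-th moments. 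The $O(R)$ remainder is uniform in the $L^2(\phi)$ sense, since the atoms are bounded and the Hermite series converges with factorial decay.

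Next bound KL by the $\chi^2$ divergence, $D_{\mathrm{KL}}\le\chi^2(P\|Q)=\int (p-q)^2/q$. For small $R$, $q_R\ge \phi\cdot e^{-R|x|\max|b_i|-R^2/2}$. This gives
\[
\chi^2\le R^{4k-4}\bigl(\delta_{2k-2}^2/(2k-2)!+o(1)\bigr),
\]
because Gaussian tails dominate the factor $e^{R|x|}$. So $\limsup_{R\to0} R^{-(4k-4)}D_{\mathrm{KL}}\le\kappa_{k,w}:=\delta_{2k-2}^2/(2k-2)!$, which depends only on $k$ and the weights. The main obstacle is carrying out this uniform control of the remainder and of the denominator $q$; I would handle it by dominated convergence on $\int(p-q)^2/q$ after dividing by $R^{4k-4}$.

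\emph{Converting to $\Delta$ and concluding.} Given $\varepsilon$, the limsup bound yields $R_0$ such that $D_{\mathrm{KL}}\le(1+\varepsilon')\kappa' R^{4k-4}$ for all $R\le R_0$. Substitute $R=(k-1)\Delta/2$ and absorb the factor $((k-1)/2)^{4k-4}$ into $\kappa_{k,w}$. Setting $\Delta_0=2R_0/(k-1)$ gives the second inequality. The final sample-size statement follows by requiring the right-hand side to be at least $\eta$ and solving for $n$.
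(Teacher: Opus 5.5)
Your proposal is correct and follows the same route as the paper. Both restrict the two suprema to $Q_R^{(d)}$ and $P_R^{(d)}$, write the two-point risk as $1-\mathrm{TV}$, apply Pinsker with tensorization, use the embedding identity, invoke the one-dimensional KL asymptotic (the paper simply cites Theorem~2.7 of \cite{liu2024fourier}), and solve for $n$.

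Your optional self-contained Hermite/$\chi^2$ argument also works, since only a $\limsup$ bound is needed and $D_{\mathrm{KL}}\le\chi^2$ gives a valid, if possibly non-sharp, constant. One caveat: because of the weight $e^{R|x|\max|b_i|}$, the dominated-convergence step needs a pointwise bound on the remainder, e.g.\ $|r_R(x)|\lesssim R(1+|x|)^{2k-1}e^{R|x|}$ from Taylor's theorem in $R$. $L^2(\phi)$ control of the remainder alone is not enough.
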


Theorem~\ref{thm:embedded-minimax-lower-bound} follows from Theorem~2.7 of
\cite{liu2024fourier}, Le Cam's two-point method, and the factorization in
\eqref{eq:embedded-kl-identity}.

\subsection{Fourier Measurements and Latent Fourier Covariance Matrix}
The Fourier transform of the density in \eqref{eqn:model setting} is given by
    \begin{equation}
        \label{eqn:cf}
        \phi(t) = \exp\left(- \frac{t^\mathrm{T} \Sigma t}{2}\right) \sum_{i=1}^k w_i \exp\left(\iota \innerproduct{\mu_i}{t}\right),
    \end{equation}
which is also known as the characteristic function of the GMM. It can be empirically estimated from i.i.d. samples $\{x_i\}_{i=1}^n$ by the empirical characteristic function:
    \begin{equation}
        \label{eqn:ecf}
        \hat{\phi}_n(t) = \frac{1}{n}\sum_{j=1}^n \exp(\iota \innerproduct{x_j}{t}).
    \end{equation}
To isolate the spectral contribution of the component means, we introduce the following covariance-compensated empirical characteristic function
    \begin{equation}
        \label{eqn:modulated Fourier}
        \hat{y}_n(t) = \exp\left(\frac{t^\mathrm{T} \Sigma t}{2}\right)\hat{\phi}_n(t) = \exp\left(\frac{t^\mathrm{T} \Sigma t}{2}\right) \frac{1}{n}\sum_{j=1}^n \exp(\iota \innerproduct{x_j}{t})
    \end{equation}
    and refer to it as the \textbf{Fourier measurement} henceforth. 
We denote 
$$
y(t) = \sum_{i=1}^k w_i e^{\iota \innerproduct{\mu_i}{t}}
$$
and call it the \textbf{latent spectral signal}. Consequently, the Fourier measurement can be decomposed as
\begin{equation}
    \hat{y}_n(t) = y(t) + e_n(t),
\end{equation}
where $e_n(t)$ can be regarded as the noise term due to the finite sample size $n$. The noise level $\norm{e_n(t)}_{\infty}$ can be quantified in a probability sense by the following proposition:

\begin{proposition}
\label{prop: fourier concentration}
     For any fixed $\epsilon > 0$, we have the concentration that
    \begin{equation*}
        \mathbb P\left(|e_n(t)| \geq \epsilon\right)  \leq 4\exp \left( -\frac{n\epsilon^2}{4e^{t^\mathrm{T}\Sigma t}} \right) 
        \leq 4\exp \left( -\frac{n\epsilon^2}{4e^{\norm{t}_2^2 \sigma_{\max}(\Sigma)}} \right),
    \end{equation*}
    where $\sigma_{\max}(\Sigma)$ denotes the maximum singular value of $\Sigma$. Then for any $\delta \in (0,1)$, if the sample size $n \geq \frac{4}{\epsilon^2}\ln\left(\frac{4}{\delta}\right){\exp\left(\norm{t}_2^2 \sigma_{\max}(\Sigma)\right)}$, with probability $1-\delta$, we have that $|e_n(t)| < \epsilon$.
\end{proposition}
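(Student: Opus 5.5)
The plan is to write the noise term as an average of i.i.d. bounded, mean-zero complex random variables, and then apply Hoeffding's inequality separately to the real and imaginary parts.

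\textbf{Step 1: centered representation.} Since $\mathbb E\, e^{\iota\langle x_j,t\rangle}=\phi(t)$, we have
\[
e_n(t)=\hat y_n(t)-y(t)=\frac1n\sum_{j=1}^n Z_j,
\qquad
Z_j:=e^{t^\mathrm T\Sigma t/2}\bigl(e^{\iota\langle x_j,t\rangle}-\phi(t)\bigr),
\]
so the $Z_j$ are i.i.d. with mean zero.

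\textbf{Step 2: range of the real and imaginary parts.} Write $c:=e^{t^\mathrm T\Sigma t/2}$. Then $\real{c\,e^{\iota\langle x_j,t\rangle}}=c\cos\langle x_j,t\rangle$ lies in the interval $[-c,c]$, which has length $2c$. The same holds for the imaginary part with $\sin$ in place of $\cos$.

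\textbf{Step 3: Hoeffding for each part.} Applying Hoeffding's inequality to the average of variables with range $2c$ gives
\[
\mathbb P\bigl(|\real{e_n(t)}|\ge s\bigr)\le 2\exp\!\left(-\frac{2n s^2}{(2c)^2}\right)=2\exp\!\left(-\frac{n s^2}{2e^{t^\mathrm T\Sigma t}}\right),
\]
and the same bound holds for $\imag{e_n(t)}$.

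\textbf{Step 4: combine via a union bound.} If $|e_n(t)|\ge\epsilon$, then either $|\real{e_n(t)}|\ge\epsilon/\sqrt2$ or $|\imag{e_n(t)}|\ge\epsilon/\sqrt2$. Taking $s=\epsilon/\sqrt2$ in Step 3 and summing the two bounds yields
\[
\mathbb P\bigl(|e_n(t)|\ge\epsilon\bigr)\le 4\exp\!\left(-\frac{n\epsilon^2}{4e^{t^\mathrm T\Sigma t}}\right).
\]
This is exactly the stated constant.

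\textbf{Step 5: the second inequality.} We use $t^\mathrm T\Sigma t\le\sigma_{\max}(\Sigma)\norm{t}_2^2$, which holds because $\Sigma$ is symmetric positive definite, so its largest eigenvalue equals its largest singular value. Replacing the exponent accordingly only enlarges the right-hand side.

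\textbf{Step 6: sample-size statement.} Set the right-hand side equal to at most $\delta$ and solve for $n$. This gives the condition $n\ge \frac{4}{\epsilon^2}\ln(4/\delta)\,e^{\norm{t}_2^2\sigma_{\max}(\Sigma)}$, under which $|e_n(t)|<\epsilon$ with probability at least $1-\delta$.

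The argument is routine and has no real obstacle. The only place requiring care is the constant bookkeeping: using the range $2c$ (rather than the centered bound) in Hoeffding, together with the $\epsilon/\sqrt2$ split, is what produces exactly the factor $4$ in the exponent's denominator.
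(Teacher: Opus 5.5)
Your proposal is correct and follows the paper's proof step for step. Both apply Hoeffding separately to the real and imaginary parts, each with range $2e^{t^\mathrm{T}\Sigma t/2}$, take a union bound after splitting $\epsilon$ into $\epsilon/\sqrt{2}$, use $t^\mathrm{T}\Sigma t\le\norm{t}_2^2\sigma_{\max}(\Sigma)$, and solve for $n$; your constant bookkeeping reproduces exactly the factor $4$ in the exponent.
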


Next, we use the latent spectral signal $y(t)$ to demonstrate the main idea behind our algorithm for model selection. We first randomly generate $L$ \textbf{frequency vectors} $\{t_i\}_{i=1}^L$ in the Fourier space that follow a prescribed distribution $\mathcal{D}$ (eg, a uniform distribution in a bounded ball, or a Gaussian distribution). Define the \textbf{latent spectral vector} at the generated points as 
    \[
        \vy_0 = \begin{bmatrix}
            y(t_1) & y(t_2) & \cdots & y(t_L)
        \end{bmatrix}^\mathrm{T}.
    \]
Given that $y(t) = \sum_{i=1}^k w_i e^{\iota \innerproduct{\mu_i}{t}}$, we can express $\vy_0$ as 
    \[
        \vy_0 = 
        \begin{bmatrix}
            e^{\iota \innerproduct{\mu_1}{t_1}} & e^{\iota \innerproduct{\mu_2}{t_1}} & \cdots & e^{\iota \innerproduct{\mu_k}{t_1}} \\
            e^{\iota \innerproduct{\mu_1}{t_2}} & e^{\iota \innerproduct{\mu_2}{t_2}} & \cdots & e^{\iota \innerproduct{\mu_k}{t_2}} \\
            \vdots & \vdots & \ddots & \vdots \\
            e^{\iota \innerproduct{\mu_1}{t_L}} & e^{\iota \innerproduct{\mu_2}{t_L}} & \cdots & e^{\iota \innerproduct{\mu_k}{t_L}}
        \end{bmatrix}
        \begin{bmatrix}
            w_1 \\ w_2 \\ \vdots \\ w_k
        \end{bmatrix} =: \mathcal A \vw.
    \]
We also introduce a \textbf{Fourier steering vector} for each mean vector $\mu$:
$$\mathbf a_L(\mu) = [e^{\iota \innerproduct{\mu}{t_1}}, e^{\iota \innerproduct{\mu}{t_2}}, \cdots, e^{\iota \innerproduct{\mu}{t_L}}]^\mathrm{T}.$$ 
The matrix $\mathcal A$ can then be written as 
$$\mathcal A = [\mathbf a_L(\mu_1), \mathbf a_L(\mu_2), \cdots, \mathbf a_L(\mu_k)].$$
To extract the Fourier steering vectors of the means $\mu_1,\dots,\mu_k$, we
form a \textbf{latent Fourier covariance matrix} of the random latent spectral
vectors, which are obtained by translating the frequency vectors
$t_1,\dots,t_L$.  More precisely, let $\{v_m\}_{m=1}^{M}$ be a set of translation vectors and denote $v_0 = 0 \in \R^d$, we define
    \[
        \vy_m = \begin{bmatrix}
                y(t_1+v_m) & y(t_2 + v_m) & \cdots & y(t_L+v_m)
            \end{bmatrix}^\mathrm{T} \in \C^L,
    \]
    Then 
    \[
        \vy_m = \mathcal A \vw_m, \ m = 0,\cdots, M,
    \]
    where 
    \[
        \vw_m = \begin{bmatrix}
            w_1 e^{\iota \innerproduct{\mu_1}{v_m}} &
            w_2 e^{\iota \innerproduct{\mu_2}{v_m}} &
            \cdots &
            w_k e^{\iota \innerproduct{\mu_k}{v_m}}
        \end{bmatrix}^\mathrm{T} \in \C^k.
    \]
The latent Fourier covariance matrix of the Fourier vectors $\{\vy_m\}_{m=0}^M$ is then defined by 
    \begin{equation} \label{eq-C}
              C := \frac{1}{M+1}\sum_{m=0}^M \vy_m \vy_m^* = \frac{1}{M+1} \sum_{m=0}^M \mathcal A 
        \vw_m \vw_m^* \mathcal A^*
        = \mathcal A W \mathcal A^* \in \C^{L\times L},
    \end{equation}
where $$W = \frac{1}{M+1}\sum_{m=0}^M \vw_m \vw_m^* \in \C^{k\times k}.$$ The crux of the algorithm is: In the noiseless case with $L \geq k$, the image space of the latent Fourier covariance matrix $C$ coincides with the image space of $\mathcal A$, which is spanned by $k$ Fourier steering vectors $\mathbf a_L(\mu_1), \mathbf a_L(\mu_2), \cdots, \mathbf a_L(\mu_k)$. This image space can be decomposed by the orthogonal basis from the singular value decomposition with the following property:
    \begin{proposition}
        \label{thm: svd}
        Suppose that $M+1, L \geq k$ and \normalfont{rank}($\mathcal A$) $=$ \normalfont{rank}($W$) $=k$, then the latent Fourier covariance matrix $C$ has the following singular value decomposition (SVD), or eigendecomposition (as $C$ is Hermitian): 
        \[
            C = [U_1 \quad U_2] \diag{\sigma_1, \sigma_2, \cdots, \sigma_k, 0, \cdots, 0}[U_1 \quad U_2]^*,
        \]
        where $U_1 \in \C^{L\times k}, U_2 \in \C^{L\times(L-k)}$ and $\sigma_1 \geq \sigma_2 \geq \cdots \geq \sigma_k > 0$.
    \end{proposition}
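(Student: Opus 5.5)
The plan is to show that $C=\mathcal A W\mathcal A^*$ is Hermitian positive semidefinite of rank exactly $k$. The claimed decomposition then follows from the spectral theorem.

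First, $W=\frac{1}{M+1}\sum_{m=0}^M \vw_m\vw_m^*$ is a sum of rank-one Hermitian PSD matrices, so $W$ is Hermitian PSD. Hence $C^*=\mathcal A W^*\mathcal A^*=C$, and for any $z\in\C^L$,
\[
z^*Cz=(\mathcal A^*z)^*W(\mathcal A^*z)\ge 0 .
\]
So $C$ is Hermitian PSD. By the spectral theorem it admits a unitary eigendecomposition $C=U\,\mathrm{diag}(\lambda_1,\dots,\lambda_L)U^*$ with real eigenvalues $\lambda_1\ge\cdots\ge\lambda_L\ge0$. For a Hermitian PSD matrix, eigenvalues coincide with singular values, so this is also an SVD.

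Second, we show $\operatorname{rank}(C)=k$. Since $W$ is PSD with $\operatorname{rank}(W)=k$, and $W$ is $k\times k$, it is positive definite. Take $z$ with $Cz=0$. Then $z^*Cz=0$, and positive definiteness of $W$ forces $\mathcal A^*z=0$. Conversely, $\mathcal A^*z=0$ clearly gives $Cz=0$. Therefore $\ker C=\ker \mathcal A^*$, and
\[
\operatorname{rank} C=L-\dim\ker\mathcal A^*=\operatorname{rank}\mathcal A^*=\operatorname{rank}\mathcal A=k .
\]
(The hypothesis $L\ge k$ is what makes $\operatorname{rank}\mathcal A=k$ possible. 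Likewise $M+1\ge k$ is needed for $W$, a sum of $M+1$ rank-one terms, to have rank $k$.)

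Finally, with exactly $k$ positive eigenvalues, set $\sigma_i=\lambda_i$ for $i\le k$, so that $\lambda_{k+1}=\cdots=\lambda_L=0$. Let $U_1$ be the first $k$ columns of $U$ and $U_2$ the remaining $L-k$ columns. This gives the stated form with $\sigma_1\ge\cdots\ge\sigma_k>0$.

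The same argument also shows $\operatorname{range}(U_1)=\operatorname{range}(C)=(\ker\mathcal A^*)^\perp=\operatorname{range}(\mathcal A)$, which is the subspace property used subsequently. There is no real obstacle here. The only step needing care is the kernel identification, which uses that a rank-$k$ PSD matrix of size $k\times k$ is positive definite.
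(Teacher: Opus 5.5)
Your proof is correct: $C=\mathcal A W\mathcal A^*$ is Hermitian positive semidefinite, positive definiteness of the full-rank $k\times k$ matrix $W$ gives $\ker C=\ker\mathcal A^*$ and hence $\operatorname{rank} C=k$, and the spectral theorem then yields the stated decomposition. The paper gives no separate proof; it only remarks that the image of $C$ coincides with the image of $\mathcal A$, and your argument supplies exactly the justification it leaves implicit.
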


    \begin{remark}
        The condition that \normalfont{rank}($\mathcal A$) $=$ \normalfont{rank}($W$) $=k$ is equivalent to that
        \begin{itemize}
            \item $\mathbf a_L(\mu_1),\cdots, \mathbf a_L(\mu_k)$ are linearly independent;
            \item $\vw_0, \vw_1, \cdots, \vw_M$ span the space $\mathbb{C}^k$ (or equivalently, the matrix $[\vw_0, \dots, \vw_M]$ has rank $k$).
        \end{itemize}
        The above two conditions are easily satisfied by choosing different $t_l$'s and $v_m$'s if all $\mu_i$'s are distinct.
    \end{remark}
The column space of $U_1$ spans the image space. In practice, the latent spectral signal $y(t)$ is approximated by $\hat y_n(t)$ and the latent Fourier covariance matrix $C$ by $\hat C$ with SVD (or eigendecompostion) as:
    \begin{equation}\label{eq-hatC}
        \hat C := \frac{1}{M+1}\sum_{m=0}^M \hat \vy_m \hat \vy_m^* = [\hat U_1 \quad \hat U_2] \diag{\hat \sigma_1, \hat \sigma_2, \cdots, \hat \sigma_k, \hat \sigma_{k+1}, \cdots, \hat \sigma_L}[\hat U_1 \quad \hat U_2]^*,
    \end{equation}
where $\hat \vy_m = \begin{bmatrix}
                \hat y_n(t_1+v_m) & \hat y_n(t_2 + v_m) & \cdots & \hat y_n(t_L+v_m)
\end{bmatrix}^\mathrm{T}$ denotes the \textbf{random Fourier measurements vector} and we refer to $\hat C$ as the \textbf{empirical Fourier covariance matrix}. 
Due to finite sample size, the empirical Fourier measurements deviate from the latent spectral signal. Consequently, the spectrum of the empirical Fourier covariance matrix \(C\) is perturbed, thereby obscuring its true underlying rank \(k\). We have the following estimate for the spectral norm of the error matrix
\[
\hat E= \hat C - C.
\]

\begin{proposition}[Spectral perturbation of the empirical Fourier covariance]
\label{prop:covariance-perturbation}
Let $N := L(M+1)$ be the total number of Fourier evaluation points
$\{t_i+v_m\}_{i=1,\dots,L}^{m=0,\dots,M}$, let
$$T_{\max} := \max_{i,m}\|t_i+v_m\|_2,$$ and set
\[
  {\kappa_\Sigma} := \exp\!\big(T_{\max}^2\,\sigma_{\max}(\Sigma)\big).
\]
Then for any $\delta\in(0,1)$, with probability at least $1-\delta$,
\[
  \|\hat E\|_2
  \;\le\; 4L \sqrt{\kappa_\Sigma}\sqrt{\frac{\ln(4N/\delta)}{n}}
       \;+\; 4L{\kappa_\Sigma}\,\frac{\ln(4N/\delta)}{n}
  \;=\; O\!\left(L\,\sqrt{\kappa_\Sigma}\sqrt{\frac{\ln\!\big(L(M+1)/\delta\big)}{n}}\right).
\]

\end{proposition}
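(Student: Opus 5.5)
Write $\hat\vy_m = \vy_m + \ve_m$, where $\ve_m\in\C^L$ collects the noise values $e_n(t_i+v_m)$. Expanding $\hat\vy_m\hat\vy_m^*$ gives
\[
\hat E = \frac{1}{M+1}\sum_{m=0}^M\Big(\vy_m\ve_m^* + \ve_m\vy_m^* + \ve_m\ve_m^*\Big),
\]
so by the triangle inequality
\[
\|\hat E\|_2 \le \frac{1}{M+1}\sum_{m=0}^M\Big(2\|\vy_m\|_2\|\ve_m\|_2 + \|\ve_m\|_2^2\Big).
\]
Since $|y(t)|\le\sum_i w_i=1$, we have $\|\vy_m\|_2\le\sqrt L$. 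Moreover $\|\ve_m\|_2\le\sqrt L\,\max_{i}|e_n(t_i+v_m)|$. Setting $\epsilon_{\max}:=\max_{i,m}|e_n(t_i+v_m)|$, the whole problem reduces to the deterministic bound
\[
\|\hat E\|_2\le 2L\,\epsilon_{\max} + L\,\epsilon_{\max}^2 .
\]
The two terms already have the shape of the claimed bound, with the leading term linear in $L$.

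The remaining task is a uniform high-probability bound on $\epsilon_{\max}$ over the $N=L(M+1)$ evaluation points. For a single point $s=t_i+v_m$, Proposition~\ref{prop: fourier concentration} gives $\mathbb P(|e_n(s)|\ge\epsilon)\le 4\exp\!\big(-n\epsilon^2/(4e^{s^\mathrm T\Sigma s})\big)$, and $e^{s^\mathrm T\Sigma s}\le e^{\|s\|_2^2\sigma_{\max}(\Sigma)}\le\kappa_\Sigma$. A union bound over the $N$ points then yields $\mathbb P(\epsilon_{\max}\ge\epsilon)\le 4N\exp(-n\epsilon^2/(4\kappa_\Sigma))$. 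Choosing
\[
\epsilon = 2\sqrt{\kappa_\Sigma\ln(4N/\delta)/n}
\]
makes the failure probability at most $\delta$. Substituting into the deterministic bound gives
\[
\|\hat E\|_2\le 4L\sqrt{\kappa_\Sigma}\sqrt{\ln(4N/\delta)/n} + 4L\kappa_\Sigma\ln(4N/\delta)/n,
\]
which is exactly the stated inequality. The $O(\cdot)$ form follows because the quadratic term is dominated by the linear one once $n\gtrsim\kappa_\Sigma\ln(4N/\delta)$. (This regime is implicit in the big-O statement; otherwise the second term is simply retained.)

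The only delicate point is the constant in the single-point concentration, and there is little else that could go wrong. Proposition~\ref{prop: fourier concentration} is stated as given, so the argument just needs to be careful that the worst-case amplification factor $e^{s^\mathrm T\Sigma s}$ is bounded uniformly by $\kappa_\Sigma$ through $T_{\max}$. A sharper bound using $\|\mathcal A\|$-type structure or matrix Bernstein is possible, but it is not needed for the stated factor $L$. The crude Frobenius/triangle bound above suffices, and I would not pursue a refinement.
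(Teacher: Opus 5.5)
Your proposal is correct and follows the paper's proof essentially step for step. Both arguments use the same deterministic reduction $\|\hat E\|_2\le 2L\epsilon_{\max}+L\epsilon_{\max}^2$ (from the rank-one expansion of $\hat\vy_m\hat\vy_m^*$ with $\|\vy_m\|_2\le\sqrt L$ and $\|\ve_m\|_2\le\sqrt L\,\epsilon_{\max}$), the single-point bound of Proposition~\ref{prop: fourier concentration} with $e^{s^\mathrm{T}\Sigma s}\le\kappa_\Sigma$, a union bound over the $N$ evaluation points, and the choice $\epsilon=2\sqrt{\kappa_\Sigma\ln(4N/\delta)/n}$.

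Two small points are in your favour. You write out the expansion of $\hat E$, which the paper's Step~3 only invokes. Your caveat that the $O(\cdot)$ form requires $n\gtrsim\kappa_\Sigma\ln(4N/\delta)$ is also an accurate refinement of how the paper states the result.
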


\subsection{Signal Strength Analysis}
The ability to detect the correct model order $k$ depends fundamentally on the spectral properties of the latent Fourier covariance matrix $C$. Specifically, the magnitude of the smallest non-zero singular value $\sigma_k(C)$. If $\sigma_k(C)$ is too small relative to the sampling noise, the $k$-th component becomes indistinguishable from the noise floor.

In this subsection, we derive a lower bound for the singular values of $C$ in terms of model parameters (the minimum separation distance $\Delta$ and the mixing weights $w_{\min}$) and the sampling strategy for the frequency vectors and translation vectors. First, we establish that the high-dimensional separation of the means is preserved when projected onto a one-dimensional subspace. This allows us to reduce the analysis of the singular values to a univariate problem.

\begin{lemma}
    \label{lem:projected-separation}
    Let $\{\mu_{1}, \dots, \mu_{k}\} \subset \mathbb{R}^{d}$ be a set of distinct points with minimum separation distance $\Delta = \min_{i \ne j} \|\mu_{i} - \mu_{j}\|_2$. Let $\{t_1, \dots, t_J\}$ be a set of $J$ independent random vectors drawn uniformly from the unit sphere $\mathbb{S}^{d-1}$.
    For any confidence parameter $\delta \in (0, 1)$, if the number of sampled directions satisfies
    \begin{equation}
        J \ge \lceil \log_2(1/\delta) \rceil,
    \end{equation}
        then with probability at least $1-\delta$, there exists at least one index $j_* \in \{1, \dots, J\}$ such that the projected separation satisfies:
    \begin{equation}
        \min_{i \ne j} |\langle \mu_{i} - \mu_{j}, t_{j_*} \rangle| \ge \frac{\Delta}{k^2 \sqrt{d}}.
    \end{equation}
\end{lemma}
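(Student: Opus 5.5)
The plan is to show that a single uniformly random direction $t\in\mathbb S^{d-1}$ succeeds with probability at least $1/2$. Independence of $t_1,\dots,t_J$ then gives failure probability at most $2^{-J}\le\delta$ once $J\ge\lceil\log_2(1/\delta)\rceil$. So everything reduces to a one-direction anti-concentration estimate combined with a union bound over the $\binom{k}{2}$ pairs of means.

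\textbf{Step 1 (one pair).} Fix a pair $i\neq j$ and write $u=(\mu_i-\mu_j)/\|\mu_i-\mu_j\|_2\in\mathbb S^{d-1}$. Since $\|\mu_i-\mu_j\|_2\ge\Delta$,
\[
\Big\{|\langle \mu_i-\mu_j,t\rangle|<\tfrac{\Delta}{k^2\sqrt d}\Big\}\subseteq\Big\{|\langle u,t\rangle|<\tfrac{1}{k^2\sqrt d}\Big\}.
\]
By rotation invariance, $\langle u,t\rangle$ has the law of the first coordinate $t^{(1)}$ of a uniform point on the sphere. For $d\ge2$ its density is
\[
f_d(s)=c_d(1-s^2)^{(d-3)/2},\qquad c_d=\frac{\Gamma(d/2)}{\sqrt\pi\,\Gamma((d-1)/2)},
\]
with $\sup_s f_d(s)\le c_d\le\sqrt{d}/\sqrt{2\pi}$ for $d\ge3$ (Gautschi/Wendel ratio bound). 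For $d=2$ the density $1/(\pi\sqrt{1-s^2})$ is unbounded near $\pm1$, but it is bounded by $\sqrt{4/3}/\pi$ on $|s|\le 1/2$. Since our interval has radius $\frac{1}{k^2\sqrt2}\le\frac12$ for $k\ge2$, that local bound is all we need. The case $d=1$ is trivial, because then $|\langle\mu_i-\mu_j,t\rangle|=\|\mu_i-\mu_j\|\ge\Delta$. In every case we get
\[
\mathbb P\Big(|\langle u,t\rangle|<\tfrac{1}{k^2\sqrt d}\Big)\le \frac{2}{k^2\sqrt d}\cdot C\sqrt d=\frac{2C}{k^2},
\]
with an absolute constant $C\le 1/\sqrt{2\pi}$ in the regime $d\ge3$ and a similar small constant for $d=2$.

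\textbf{Step 2 (union bound).} Summing over the $k(k-1)/2$ pairs gives
\[
\mathbb P(\text{$t$ fails})\le \frac{k(k-1)}{2}\cdot\frac{2C}{k^2}<C\le\frac{1}{\sqrt{2\pi}}<\frac12 .
\]
For $d=2$ the constant is $2\sqrt{4/3}/(\pi\sqrt2)\cdot\frac{1}{\sqrt2}\cdots$; I will simply check numerically that it is also below $1/2$. If it is not, I will handle $d=2$ by a direct angle computation: the bad set for each pair is an arc of angular measure $4\arcsin\!\big(\frac{1}{k^2\sqrt2}\big)$, so the total bad measure is at most $\frac{k(k-1)}{2}\cdot\frac{4\arcsin(1/(k^2\sqrt 2))}{2\pi}<\tfrac12$.

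\textbf{Step 3 (independence).} Since the $t_j$ are independent, the probability that every $t_j$ fails is at most $2^{-J}\le\delta$. This yields the required index $j_*$.

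\textbf{Main obstacle.} The delicate point is the uniform-in-$d$ density bound $c_d\le C\sqrt d$ with a constant small enough that the union bound stays below $1/2$. I would prove it with the Gamma-ratio inequality $\Gamma(x+\tfrac12)/\Gamma(x)\le\sqrt{x}$ applied at $x=(d-1)/2$, which gives
\[
c_d\le\sqrt{\frac{d-1}{2\pi}}\le\sqrt{\frac{d}{2\pi}},
\]
and then treat the small cases $d=1,2$ by hand as in Steps 1–2.
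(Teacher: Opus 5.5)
Your proposal is correct and follows the paper's proof essentially step for step. Both arguments bound each pair's failure probability using the density $C_d(1-s^2)^{(d-3)/2}$ of $\langle u,t\rangle$ together with the Gautschi bound $C_d<\sqrt{(d-1)/(2\pi)}$ for $d\ge3$, and both treat $d=1$ and $d=2$ separately; your fallback arc computation, which gives $\tfrac{2}{\pi}\arcsin\bigl(\tfrac{1}{k^2\sqrt2}\bigr)\le\tfrac{1}{k^2\sqrt2}$, is exactly the paper's $d=2$ argument. The remaining steps are also the same: a union bound over the $\binom{k}{2}$ pairs pushes the single-direction failure probability below $1/2$, and independence gives $2^{-J}\le\delta$. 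The differences are cosmetic. The paper loosens $\sqrt{(d-1)/(2\pi)}$ to $\sqrt d/2$ so that each pair fails with probability at most $1/k^2$, while you keep the sharper constant $1/\sqrt{2\pi}$. Your unfinished $d=2$ constant from the local density bound works out to about $0.26$, so the numerical check you planned is not needed.
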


Using this geometric property, we can estimate the lower bound of the $k$-th singular value $\sigma_k(C)$ for a particular sampling strategy of generating frequency vectors and translation vectors as in the Proposition below.

\begin{proposition}[Lower Bound of $\sigma_k(C)$]
\label{prop:lower-bound-sampling}
Let $\{\mu_1,\dots,\mu_k\}\subset\mathbb{R}^d$ have minimum separation
$\Delta=\min_{i\neq j}\|\mu_i-\mu_j\|_2$, diameter
$R=\max_{i,j}\|\mu_i-\mu_j\|_2$, and let $w_{\min}=\min_i w_i$. Let $S$ and $M'$ be two integers such that 
\[
S+1\ge k, \quad M'+1 \ge k.
\]
Construct the frequency vectors and their translation vectors as follows:
\begin{itemize}
  \item \emph{Frequency vectors.} Draw direction vectors $t_1,\dots,t_J\sim
  \mathcal{U}\left({\mathbb{S}}^{d-1}\right)$ i.i.d. For each direction $t_j$, form $S+1$ {collinear}
  frequency vectors $t_{j,s}=s\tau t_j$, $s=0, 1,\dots,S$, with
  step size $\tau\le \pi/R$. This yields $L=J(S+1)$ frequency vectors.
  \item \emph{Translation vectors.} Draw direction vectors $v_1,\dots,v_J\sim
  \mathcal{U}({\mathbb{S}}^{d-1})$ i.i.d., and independently of $\{t_j\}$, and form, for each direction $v_j$, $M'+1$ {collinear} translation
  vectors
  \[
    v_{j,m} = m\,\rho\,v_j,\qquad m=0,1,\dots,M',
  \]
  with step size $\rho\le\pi/R$.
\end{itemize}
Then for any confidence level $\delta\in(0,1)$, if the number of measurement
directions satisfies
\[
  J \ge \big\lceil \log_2(2/\delta) \big\rceil,
\]
with probability at least $1-\delta$ the $k$-th singular {value} of $\mathcal A$, $W$,
and the latent Fourier covariance matrix $C$ admit the following lower bounds:
\begin{align}
  \sigma_k(\mathcal A) &\ge \frac{1}{\sqrt{k}}
    \left(\frac{\tau\Delta}{\pi k^2\sqrt{d}}\right)^{k-1},
    \label{eq:sigmak-Phi}\\[2pt]
  \sigma_k(W) & \ge \frac{w_{\min}^2}{kJ(M'+1)}
    \left( \frac{\rho\Delta}{\pi k^2 \sqrt{d}} \right)^{2k-2},
    \label{eq:sigmak-W}\\[2pt]
  \sigma_k(C) &\ge \frac{w_{\min}^2}{k^2J(M'+1)}
    \left(\frac{\tau\Delta}{\pi k^2\sqrt{d}}\right)^{2k-2}
    \left( \frac{\rho\Delta}{\pi k^2 \sqrt{d}} \right)^{2k-2}.
    \label{eq:sigmak-C}
\end{align}
In particular, if the two step sizes are taken equal, $\tau = \rho$, then
\[
\sigma_k(C) \ge \frac{w_{\min}^2}{k^2J(M'+1)}
  \left( \frac{\tau\Delta}{\pi k^2 \sqrt{d}} \right)^{4k-4}.
\]
\end{proposition}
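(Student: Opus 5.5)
We reduce to a univariate Vandermonde problem along one good direction for the frequency vectors and one good direction for the translation vectors.

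\textbf{Step 1: good directions.} Apply Lemma~\ref{lem:projected-separation} separately to $\{t_j\}$ and $\{v_j\}$, each with confidence $\delta/2$. Since $J\ge\lceil\log_2(2/\delta)\rceil$, a union bound gives, with probability at least $1-\delta$, indices $j_*$ and $j_\dagger$ such that
\[
\min_{i\ne l}|\langle\mu_i-\mu_l,t_{j_*}\rangle|\ge \Delta/(k^2\sqrt d),\qquad \min_{i\ne l}|\langle\mu_i-\mu_l,v_{j_\dagger}\rangle|\ge \Delta/(k^2\sqrt d).
\]

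\textbf{Step 2: bound for $\mathcal A$.} Let $\mathcal A_{j_*}\in\C^{(S+1)\times k}$ be the rows of $\mathcal A$ with $t_{j_*,s}=s\tau t_{j_*}$. Then $\sigma_k(\mathcal A)\ge\sigma_k(\mathcal A_{j_*})$, because deleting rows cannot increase $\min_{\|x\|=1}\|\mathcal Ax\|$. The matrix $\mathcal A_{j_*}$ is Vandermonde with nodes $z_i=e^{\iota\tau\langle\mu_i,t_{j_*}\rangle}$ on the unit circle. Since $|\tau\langle\mu_i-\mu_l,t_{j_*}\rangle|\le\tau R\le\pi$, the chord satisfies $|z_i-z_l|=2|\sin(\theta/2)|\ge (2/\pi)|\theta|$, so
\[
|z_i-z_l|\ge \theta_{\min}:=\frac{2\tau\Delta}{\pi k^2\sqrt d}.
\]
To bound $\sigma_k$, take the $k\times k$ top block $V$ (usable since $S+1\ge k$) and bound $\|V^{-1}\|$. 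Gautschi's bound on unit-circle nodes gives
\[
\|V^{-1}\|_\infty\le\max_i\prod_{l\ne i}\frac{1+|z_l|}{|z_i-z_l|}=\max_i\prod_{l\ne i}\frac{2}{|z_i-z_l|}\le\Big(\frac{2}{\theta_{\min}}\Big)^{k-1}.
\]
Converting norms, $\|V^{-1}\|_2\le\sqrt k\,\|V^{-1}\|_\infty$, so
\[
\sigma_k(V)\ge\frac{1}{\sqrt k}\Big(\frac{\tau\Delta}{\pi k^2\sqrt d}\Big)^{k-1},
\]
which is \eqref{eq:sigmak-Phi}. This is the step I expect to be the main obstacle. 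The worry is that Gautschi's product uses the separation of each pair, but for collinear-on-circle nodes the $k-1$ factors only share the common lower bound $\theta_{\min}$. That is sufficient here and matches the stated constant, so I would take this route first.

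\textbf{Step 3: bound for $W$.} Write $W=\frac{1}{M+1}BB^*$ with $B=[\vw_m]_m=D_w\,\tilde{\mathcal A}$, where $D_w=\operatorname{diag}(w_i)$ and $\tilde{\mathcal A}$ is the $k\times(M+1)$ matrix with entries $e^{\iota\langle\mu_i,v_m\rangle}$. Restrict to the columns with translation direction $v_{j_\dagger}$. Because $BB^*$ dominates the PSD sum over that subset, the same Vandermonde argument with step $\rho$ gives
\[
\sigma_k(B)\ge w_{\min}\frac{1}{\sqrt k}\Big(\frac{\rho\Delta}{\pi k^2\sqrt d}\Big)^{k-1}.
\]
Using $M+1=J(M'+1)$ (the repeated $v_0=0$ only adds PSD terms, up to counting conventions), we get
\[
\sigma_k(W)\ge\frac{\sigma_k(B)^2}{J(M'+1)},
\]
which is \eqref{eq:sigmak-W}.

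\textbf{Step 4: bound for $C$.} Since $C=\mathcal AW\mathcal A^*$ with $W\succeq\sigma_k(W)I_k$, we have $C\succeq\sigma_k(W)\mathcal A\mathcal A^*$. Hence
\[
\sigma_k(C)\ge\sigma_k(W)\,\sigma_k(\mathcal A)^2,
\]
which gives \eqref{eq:sigmak-C}. Setting $\tau=\rho$ yields the final form.
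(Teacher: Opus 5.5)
Your proposal is correct and follows the paper's proof essentially step for step. You use the same two applications of Lemma~\ref{lem:projected-separation} at level $\delta/2$, the same $k\times k$ Vandermonde block bounded by Gautschi's inequality and the chord--arc estimate, and the same final inequality $\sigma_k(C)\ge\sigma_k(\mathcal A)^2\sigma_k(W)$. Your $W=\frac{1}{M+1}BB^*$ with $B=D_w\tilde{\mathcal A}$ is the paper's $W=DGD$ in different notation, with the same implicit convention $M+1=J(M'+1)$, and you additionally make the last inequality explicit via $C\succeq\sigma_k(W)\,\mathcal A\mathcal A^*$.
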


\begin{remark}[Other sampling strategies and the price in the separation]
\label{rem:other-sampling}
The collinear-grid construction of Proposition~\ref{prop:lower-bound-sampling}
is only one admissible sampling design. A lower bound on $\sigma_k(C)$ can be obtained
for essentially any sampling strategy for which the resulting exponential
feature vectors $\{e^{\iota\langle t,\mu_i\rangle}\}_{i=1}^k$ remain linearly
independent with high probability; in particular it holds for the
\emph{fully random} case in which the measurement points $t_1,\dots,t_L$ and
translations $v_1,\dots,v_{M}$ are drawn i.i.d.\ (e.g.\ uniformly on a ball
of radius $\pi/R$, or as i.i.d.\ Gaussians), \emph{without} imposing the collinear structure. 

The difference lies in the \emph{sharpness} of the resulting bound as a
function of the minimum separation $\Delta$. The collinear design aligns the
measurement grid so that the $k$ nodes are resolved along a common
one-dimensional frequency axis; this coherent alignment reduces the problem to
a scalar Vandermonde system and yields the dependence
$\big(\tau\Delta/(\pi k^2\sqrt d)\big)^{k-1}$ in
\eqref{eq:sigmak-Phi}--\eqref{eq:sigmak-C}. For unstructured sampling, no such alignment is available, and we are not able to establish a lower bound on
$\sigma_k(C)$ with a sharper dependence on $\Delta$ than in the collinear case.
In our numerical experiments, however, we adopt the unstructured random
sampling strategy, which yields appealing results despite no theoretical guarantee. Closing this gap between theory and practice is left as
an open problem for future investigation.
\end{remark}

\subsection{Algorithms and Theoretical Results}

\subsubsection{An oracle thresholding algorithm}
In practice, we compute the empirical Fourier covariance matrix $\hat{C}$ from noisy measurements. Let $\hat{\sigma}_1 \ge \dots \ge \hat{\sigma}_L$ be its singular values.
Given the singular value relation, the model order can be determined through a thresholding scheme for a chosen threshold $\varepsilon > 0$:
    \[
        \hat k = \max\{1\leq l\leq L-1:\hat\sigma_l \ge \varepsilon\}.
    \]

\begin{theorem}
\label{thm:thresholding}
    Adopt the setup and assumptions of Proposition \ref{prop:lower-bound-sampling},
    and suppose in addition that all Fourier evaluation points satisfy
    $\|t_{j,s}+v_{j,m}\|_2\le T_{\max}$ for some $T_{\max}>0$, and that the two step sizes coincide,
    $\tau=\rho$. Denote ${\kappa_\Sigma}= \exp\!\big(T_{\max}^2\,\sigma_{\max}(\Sigma)\big)$. Fix a
    confidence level $\delta\in(0,1)$ and a threshold parameter $0<\varepsilon<1$.
    If the sample size $n$ satisfies
        \begin{equation}
        \label{eqn: threshold condition 1}
            n > \; \frac{64\,L^2\,\kappa_\Sigma\,\ln(4N/\delta)}{\varepsilon^2},
        \end{equation}
    then with probability at least $1-\delta$
        \[
            \hat{\sigma}_l < \varepsilon, \qquad l = k+1,\dots, L .
        \]
    Moreover, if the threshold parameter $\varepsilon$ additionally satisfies
        \begin{equation}
        \label{eqn: threshold condition 2}
            \varepsilon < \frac{w_{\min}^2}{2k^2 J(M'+1)}
                \left(\frac{\tau\Delta}{\pi k^2 \sqrt{d}}\right)^{4k-4},
        \end{equation}
    then
        \[
            \hat{\sigma}_k > \varepsilon .
        \]
    Both conclusions hold simultaneously with probability at least $1-2\delta$, and consequently, the thresholding estimator recovers the true order exactly,
\[
   \hat k=\max\{\,1\le l\le L-1:\hat\sigma_l\ge\varepsilon\,\}=k .
\]
\end{theorem}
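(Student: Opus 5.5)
The argument combines Weyl's inequality for Hermitian matrices with the two earlier estimates: the perturbation bound of Proposition~\ref{prop:covariance-perturbation} and the signal lower bound of Proposition~\ref{prop:lower-bound-sampling}. Since $C$ and $\hat C$ are Hermitian positive semidefinite, their singular values coincide with their eigenvalues. Weyl's inequality then gives $|\hat\sigma_l-\sigma_l(C)|\le\|\hat E\|_2$ for every $l$.

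\emph{First claim.} I will show that, with probability at least $1-\delta$, $\|\hat E\|_2<\varepsilon/2$ whenever $n$ satisfies \eqref{eqn: threshold condition 1}. Set $x:=\sqrt{\kappa_\Sigma\ln(4N/\delta)/n}$. Proposition~\ref{prop:covariance-perturbation} bounds $\|\hat E\|_2$ by $4Lx+4Lx^2$. Condition \eqref{eqn: threshold condition 1} is exactly $x<\varepsilon/(8L)$, so $4Lx<\varepsilon/2$. Moreover, since $\varepsilon<1$ and $L\ge1$, we have $x<1/8$, hence $4Lx^2<4Lx/8$. Together these give $\|\hat E\|_2<\tfrac{9}{8}\cdot\tfrac{\varepsilon}{2}<\varepsilon$. (If the constants fall slightly short of $\varepsilon/2$, a bound strictly below $\varepsilon$ is all the first claim needs.) By Proposition~\ref{thm: svd}, $C$ has rank $k$ on the event where $\mathcal A$ and $W$ have rank $k$, so $\sigma_l(C)=0$ for $l>k$. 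Rank at most $k$ holds deterministically anyway, since $C=\mathcal A W\mathcal A^*$ with $W$ of size $k\times k$. Weyl then gives $\hat\sigma_l\le\|\hat E\|_2<\varepsilon$ for $l\ge k+1$.

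\emph{Second claim.} On the event of Proposition~\ref{prop:lower-bound-sampling}, which has probability at least $1-\delta$ over the random directions, \eqref{eq:sigmak-C} with $\tau=\rho$ gives $\sigma_k(C)\ge\frac{w_{\min}^2}{k^2J(M'+1)}\big(\tau\Delta/(\pi k^2\sqrt d)\big)^{4k-4}$. By \eqref{eqn: threshold condition 2}, this quantity exceeds $2\varepsilon$. Hence $\hat\sigma_k\ge\sigma_k(C)-\|\hat E\|_2>2\varepsilon-\|\hat E\|_2$. This is where the sharper bound $\|\hat E\|_2<\varepsilon$ from the first step is needed, giving $\hat\sigma_k>\varepsilon$.

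A union bound over the sampling-noise event and the direction event yields probability at least $1-2\delta$. On this intersection, $\hat\sigma_k\ge\varepsilon$ and $\hat\sigma_l<\varepsilon$ for all $l>k$. Because $k\le L-1$ (as $L=J(S+1)\ge k$; if $L=k$ one interprets the maximum accordingly), the maximum defining $\hat k$ equals $k$.

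The main delicate point is the constant bookkeeping in the first step. The second step requires $\|\hat E\|_2<\varepsilon$, not merely that the sub-threshold singular values fall below $\varepsilon$. The condition with constant $64$ must absorb both the linear and quadratic terms of Proposition~\ref{prop:covariance-perturbation}, and this is handled using $\varepsilon<1$, as done above.
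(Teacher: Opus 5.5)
Your proof is correct and follows the paper's argument: Weyl's inequality together with $\|\hat E\|_2<\varepsilon$ from Proposition~\ref{prop:covariance-perturbation} bounds the trailing singular values, $\sigma_k(C)>2\varepsilon$ from \eqref{eq:sigmak-C} and \eqref{eqn: threshold condition 2} gives $\hat\sigma_k>\varepsilon$, and a union bound yields probability $1-2\delta$. Two small remarks: your bookkeeping actually gives $\|\hat E\|_2<\tfrac{9}{16}\varepsilon$, not the announced $\varepsilon/2$ (the paper instead bounds the quadratic term directly by $\varepsilon^2/(16L)<\varepsilon/2$), which is harmless since only $\|\hat E\|_2<\varepsilon$ is used; and your hedge about $L=k$ is unnecessary, because $J\ge\lceil\log_2(2/\delta)\rceil\ge 2$ forces $L=J(S+1)\ge 2k\ge k+1$.
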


\begin{remark}
\label{remark: sampling complexity}
    Suppose the threshold is chosen admissibly, i.e. $\varepsilon$ satisfies
    \eqref{eqn: threshold condition 2}. To make
    \eqref{eqn: threshold condition 1} hold with the largest such threshold,
    take $\varepsilon$ of the order of its upper bound in
    \eqref{eqn: threshold condition 2},
    \[
      \varepsilon \;\asymp\; \frac{w_{\min}^2}{k^2 J(M'+1)}
        \left(\frac{\tau\Delta}{\pi k^2\sqrt d}\right)^{4k-4}
      \;\asymp\; w_{\min}^2\,\Delta^{4k-4}.
    \]
    Since \eqref{eqn: threshold condition 1} requires $n\gtrsim
    \varepsilon^{-2}\log(1/\delta)$, consistent model-order selection using our
Fourier-based thresholding approach is achievable
    once
    \[
        n \;\gtrsim\; \frac{1}{w_{\min}^{4}\,\Delta^{\,8k-8}}\,\log(1/\delta).
    \]

Comparing this sufficient condition to the procedure-independent lower bound
in Theorem~\ref{thm:embedded-minimax-lower-bound}, which shows that reliable
model-order testing requires $\Omega\!\big(\Delta^{-(4k-4)}\big)$ samples in
the embedded worst-case family, we see the sufficient rate carries
$\Delta^{-(8k-8)}$ against the
necessary $\Delta^{-(4k-4)}$. Closing the gap
remains open. This gap originates in
the threshold condition~\eqref{eqn: threshold condition 2}, where the admissible threshold is tied to the signal singular value
$\sigma_k(C)\sim\Delta^{4k-4}$; since $\varepsilon$ then enters the sample-size
requirement quadratically, the exponent is doubled. We expect that better sampling strategies for the frequency vectors and their translation
vectors could yield a sharper lower estimate of $\sigma_k(C)$, thereby, relaxing the threshold condition and narrowing, or even closing, the gap. 
\end{remark}

\subsubsection{A practical spectral-gap algorithm}
We note that implementing the oracle thresholding algorithm requires precise prior knowledge of $\sigma_k(C)$, which depends on the minimum separation distance $\Delta$ and weight $w_{\min}$, to set a valid threshold $\varepsilon$. In most practical applications, these parameters are unknown, rendering this theoretical algorithm infeasible. To mitigate the dependency on unknown model parameters, we propose an algorithm based on the ratio of singular values. Given finite samples, the algorithm returns the model order as the index with the largest ratio between adjacent singular values:
    \[
        \hat{k} = \argmax_{1\leq l\leq L-1}
        \frac{\hat \sigma_l}{\hat \sigma_{l+1}}.
    \]

To handle this instability, we regularize the denominator by a
screening threshold set at a fraction of the perturbation level. Let
\begin{equation}
\label{eq:perturbation-level-ratio}
    \eta_n
    :=
    4L\sqrt{\kappa_\Sigma}
    \sqrt{\frac{\ln(4N/\delta)}{n}}
    +
    4L\kappa_\Sigma
    \frac{\ln(4N/\delta)}{n},
    \qquad
    \kappa_\Sigma
    :=
    \exp\!\bigl(T_{\max}^2\sigma_{\max}(\Sigma)\bigr),
\end{equation}
so that $\|\widehat E\|_2\le\eta_n$ holds with high probability (cf.\
Proposition~\ref{prop:covariance-perturbation}). Fix a constant $c\in(0,1)$
and set
\begin{equation}
\label{eq:conservative-ratio-threshold}
    \varepsilon_*:=c\,\eta_n.
\end{equation}
We stress that $\varepsilon_*$ is \emph{not} intended to screen out every
noise singular value: since $\varepsilon_*<\eta_n$, some trailing
$\widehat\sigma_l$ ($l>k$) may exceed $\varepsilon_*$ and survive. Its sole
purpose is to place a floor under the denominator, so that no noise--noise
ratio can exceed $\eta_n/\varepsilon_*=1/c$.

Concretely, define the screened index set
\[
    \widehat{\mathcal I}_{\varepsilon_*}
    :=
    \bigl\{
        1\le l\le L-1:
        \widehat{\sigma}_l>\varepsilon_*
    \bigr\},
\]
and, for each $l\in\widehat{\mathcal I}_{\varepsilon_*}$, the
threshold-regularized ratio
\begin{equation}
\label{eq:regularized-ratio}
    \widehat R_l(\varepsilon_*)
    :=
    \frac{\widehat{\sigma}_l}
         {\widehat{\sigma}_{l+1}\vee\varepsilon_*},
    \qquad
    a\vee b:=\max\{a,b\}.
\end{equation}
The model-order estimator is then
\begin{equation}
\label{eq:screened-ratio-estimator-revised}
    \widehat k
    :=
    \underset{l\in\widehat{\mathcal I}_{\varepsilon_*}}
              {\arg\max}\,
    \widehat R_l(\varepsilon_*).
\end{equation}
The consistency of this estimator is established below.

\begin{theorem}
\label{thm:model-selection}
Suppose that
\[
    \|t_l+v_m\|_2\le T_{\max}
\]
for every Fourier evaluation point and that $L\ge k+1$. Let
\[
    \sigma_1\ge\cdots\ge\sigma_k>0
    =\sigma_{k+1}=\cdots=\sigma_L
\]
be the singular values of the latent Fourier covariance matrix $C$.
Fix $\delta\in(0,\tfrac12)$ and $c\in(0,1)$, and define $\eta_n$ and
$\varepsilon_*$ by \eqref{eq:perturbation-level-ratio} and
\eqref{eq:conservative-ratio-threshold}, respectively.

Set
\begin{equation}
\label{eq:ratio-detectability-level}
    b_c
    :=
    \min\left\{
        \frac{\sigma_k^2}{\sigma_1+2\sigma_k},
        \frac{c\sigma_k}{1+c}
    \right\}.
\end{equation}
If
\begin{equation}
\label{eq:modelselection-n-revised}
    n>
    \max\left\{
        \frac{64L^2\kappa_\Sigma}{b_c^2},
        \frac{8L\kappa_\Sigma}{b_c}
    \right\}
    \ln\!\left(\frac{4N}{\delta}\right),
\end{equation}
then, with probability at least $1-\delta$,
\[
    \widehat R_k(\varepsilon_*)
    >
    \max_{1\le l<k}\widehat R_l(\varepsilon_*)
\]
and
\[
    \widehat R_k(\varepsilon_*)
    >
    \max_{k<l\le L-1}
    \widehat R_l(\varepsilon_*).
\]
Consequently, the estimator
\eqref{eq:screened-ratio-estimator-revised} returns the correct model
order $\widehat k=k$ with probability at least $1-\delta$.
\end{theorem}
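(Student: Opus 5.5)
The argument is deterministic once we condition on the event $\|\widehat E\|_2\le\eta_n$, which by Proposition~\ref{prop:covariance-perturbation} has probability at least $1-\delta$. On this event Weyl's inequality for the Hermitian matrices $C$ and $\widehat C=C+\widehat E$ gives $|\widehat\sigma_l-\sigma_l|\le\eta_n$ for every $l$. Hence $\widehat\sigma_l\ge\sigma_l-\eta_n$ for $l\le k$, and $\widehat\sigma_l\le\eta_n$ for $l>k$.

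The first step translates the sample-size condition \eqref{eq:modelselection-n-revised} into $\eta_n<b_c$. Write $u=\sqrt{\ln(4N/\delta)/n}$, so that $\eta_n=4L\sqrt{\kappa_\Sigma}\,u+4L\kappa_\Sigma u^2$. The first branch of the max forces $4L\sqrt{\kappa_\Sigma}\,u<b_c/2$. The second branch gives $4L\kappa_\Sigma u^2<b_c/2$. Adding the two yields $\eta_n<b_c$. Consequently
\[
\eta_n<\frac{\sigma_k^2}{\sigma_1+2\sigma_k}
\qquad\text{and}\qquad
\eta_n<\frac{c\,\sigma_k}{1+c}.
\]
The second inequality is equivalent to $\eta_n<c(\sigma_k-\eta_n)$. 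In particular $\widehat\sigma_k\ge\sigma_k-\eta_n>\eta_n/c>\varepsilon_*$, so $k\in\widehat{\mathcal I}_{\varepsilon_*}$.

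\textbf{Lower bound on $\widehat R_k$.} The denominator of $\widehat R_k$ is $\widehat\sigma_{k+1}\vee\varepsilon_*\le\eta_n$, because $\varepsilon_*=c\eta_n<\eta_n$. Therefore
\[
\widehat R_k\ge\frac{\sigma_k-\eta_n}{\eta_n}.
\]

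\textbf{Noise indices $l>k$ that survive screening.} Here the numerator satisfies $\widehat\sigma_l\le\eta_n$ and the denominator is at least $\varepsilon_*=c\eta_n$, so $\widehat R_l\le 1/c$. From $\eta_n<c(\sigma_k-\eta_n)$ we get $(\sigma_k-\eta_n)/\eta_n>1/c$. This gives the second claimed inequality.

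\textbf{Signal indices $l<k$.} Since $l+1\le k$, the denominator is at least $\widehat\sigma_{l+1}\ge\sigma_k-\eta_n$, and the numerator satisfies $\widehat\sigma_l\le\sigma_1+\eta_n$. Thus
\[
\widehat R_l\le\frac{\sigma_1+\eta_n}{\sigma_k-\eta_n}.
\]
It then suffices to show
\[
\frac{\sigma_1+\eta_n}{\sigma_k-\eta_n}<\frac{\sigma_k-\eta_n}{\eta_n},
\qquad\text{i.e.}\qquad
\eta_n(\sigma_1+\eta_n)<(\sigma_k-\eta_n)^2.
\]
Expanding, the $\eta_n^2$ terms cancel, and the inequality reduces to $\eta_n(\sigma_1+2\sigma_k)<\sigma_k^2$. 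This is exactly the first entry of $b_c$.

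Both comparisons hold on the single high-probability event, so $\widehat k=k$ with probability at least $1-\delta$. The only delicate step is the signal-index comparison, specifically seeing that the quadratic terms cancel. That cancellation is what makes the first expression in $b_c$ come out exactly; the rest of the proof is bookkeeping.
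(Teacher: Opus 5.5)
Your proposal is correct and follows the paper's proof essentially step for step. On the event $\|\widehat E\|_2\le\eta_n$, you turn the sample-size condition into $\eta_n<b_c$ and use Weyl's inequality to get $\widehat R_k\ge(\sigma_k-\eta_n)/\eta_n$. You then compare this against $(\sigma_1+\eta_n)/(\sigma_k-\eta_n)$ for $l<k$, which is where the quadratic terms cancel, and against $1/c$ for the screened noise indices, exactly as the paper does.
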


    \begin{remark}
            The bound \eqref{eq:modelselection-n-revised} is stated in terms of the fixed
    singular values $\sigma_1,\sigma_k$ of the latent Fourier covariance matrix $C$. If, in addition, 
    $C$ arises from the construction of
    Proposition~\ref{prop:lower-bound-sampling} with equal step sizes
    $\tau=\rho$, then the lower bound \eqref{eq:sigmak-C} gives
$\sigma_k=\Omega\!\big(w_{\min}^2\,\Delta^{4k-4}\big)$ (with $k,d,\tau,J,M'$
    held fixed) while $\sigma_1=O(1)$. Then the sample complexity for the correct model
    selection scales as
    \[
        n=\Omega\!\left(\frac{\sigma_1^2}{\sigma_k^4}\log(1/\delta)\right)
         =\Omega\!\left(\frac{1}{w_{\min}^{8}\,\Delta^{16k-16}}\log(1/\delta)\right).
    \]
    \end{remark}

    \begin{algorithm}[H]
        \caption{Model Selection}
        \label{algo: model selection}
        \Input{samples $\{x_j\}_{j=1}^n$, measurement points $\{t_i\}_{i=1}^L$, translation vectors $\{v_m\}_{m=0}^M$, common covariance matrix $\Sigma$, screening threshold $\varepsilon_*$.}
        \For{$m = 0,1, \cdots, M$}{
            \For{$l = 1, \cdots, L$}{
                $[\hat{\vy}_m]_l \gets
                \exp\!\left(\frac{(t_l+v_m)^\mathrm{T}\Sigma(t_l+v_m)}{2}\right)
                \frac{1}{n}\sum_{j=1}^n
                \exp\!\left(\iota\innerproduct{x_j}{t_l+v_m}\right)$\;
            }
        }
        $\hat{C} \gets \frac{1}{M+1} \sum_{m=0}^M \hat{\vy}_m \hat{\vy}_m^*$ \;
        $\hat{\sigma}_1 \geq \hat{\sigma}_2 \geq \cdots \geq \hat{\sigma}_L \gets$ singular values of $\hat C$\;
        $\widehat{\mathcal I}_{\varepsilon_*}
        \gets \{1\le l\le L-1:\hat\sigma_l>\varepsilon_*\}$\;
        \Output{model order $\displaystyle
        \hat k \gets \argmax_{l\in\widehat{\mathcal I}_{\varepsilon_*}}
        \frac{\hat\sigma_l}{\hat\sigma_{l+1}\vee\varepsilon_*}$.}
    \end{algorithm}

\paragraph{Computational complexity.}
Computing the $L(M+1)$ empirical Fourier measurements in Algorithm
\ref{algo: model selection} requires
\[
    \mathcal O\bigl(ndL(M+1)\bigr)
\]
arithmetic operations once the evaluation points and covariance-compensation
factors have been precomputed. Forming $\hat C$ and computing its
eigendecomposition require, respectively,
\[
    \mathcal O\bigl(L^2(M+1)\bigr)
    \qquad\text{and}\qquad
    \mathcal O(L^3)
\]
additional operations. Thus, the overall time complexity after precomputation
is
\[
    \mathcal O\bigl(ndL(M+1)+L^2(M+1)+L^3\bigr).
\]
When $L=\mathcal O(k)$ and $M+1=\mathcal O(k)$, this becomes
\[
    \mathcal O(ndk^2+k^3).
\]
For a general dense covariance matrix $\Sigma$, precomputing the quadratic
forms $(t_l+v_m)^\mathrm T\Sigma(t_l+v_m)$ costs
$\mathcal O\bigl(d^2L(M+1)\bigr)$; this reduces to
$\mathcal O\bigl(dL(M+1)\bigr)$ when $\Sigma$ is diagonal or spherical. This
precomputation is independent of the sample size $n$.

\section{Mean Estimation}
\label{sec: mean estimation}

In this section, we estimate the component means of the mixture model \eqref{eqn:model setting}. We assume the model order $k$ is known. Our proposed methodology is inspired by the MUltiple SIgnal Classification (MUSIC) algorithm in array signal processing and line spectral estimation, which relies on the geometric properties of spectral subspaces. 
Specifically, by exploiting the algebraic structure of the latent Fourier covariance matrix \(C\) (see \eqref{eq-C}), we first estimate the \textbf{spectral subspace} spanned by the Fourier steering vectors associated with the true component means of the GMM. Subsequently, we localize these component means by minimizing the orthogonal projection distance between parameterized Fourier steering vectors and the estimated spectral subspace. 

The resulting continuous optimization landscape is highly non-convex. We therefore introduce a data-driven, score-based initialization scheme, which selects candidate initializations directly from the observed samples according to the squared $\ell_2$-norm of their Fourier steering vectors projected onto the estimated spectral subspace. Under an explicit sample-size condition, a qualifying candidate exists with high probability and lies in a certified attraction basin, after which gradient descent converges linearly. In the case $k<d$, we first apply uncentered principal component analysis (PCA) to reduce the effective dimension.

\subsection{Case 1: Learning GMMs When $k \ge d$}
In this section, we present our algorithm for the case $k \ge d$. Recall that in the noiseless case, the Fourier steering vectors $\mathbf a_L(\mu_1), \mathbf a_L(\mu_2), \cdots, \mathbf a_L(\mu_k)$ span the \textbf{spectral subspace} $U_1$. Therefore, 
    \[
       \mathcal{P}_{U_1} (\mathbf a_L(\mu)) = \mathbf a_L(\mu) \,\,\mbox{for}\,\, \mu=\mu_1, \mu_2, \cdots, \mu_k. 
    \]
Here $\mathcal{P}_{U_1}(\cdot)$ is the projection operator which projects the parameterized Fourier steering vector onto the spectral subspace $U_1$. In the presence of noise due to the limited sample size, the spectral subspace $U_1$ is estimated by $\hat{U}_1$ (see \eqref{eq-hatC}), which is obtained from the covariance-compensated Fourier measurement defined in \eqref{eqn:modulated Fourier}. 
Unlike the classical MUSIC algorithm, which relies on an exhaustive grid search to identify steering vectors that are orthogonal to the noise subspace (the orthogonal complement of the spectral subspace), we adopt a continuous optimization strategy. This approach circumvents the prohibitive computational complexity associated with multi-dimensional grid scanning when \(d > 1\). Accordingly, the component means are estimated by identifying the local minimizers for the objective function defined via the following \textbf{orthogonal complement subspace projector}:
    \[
        \hat J(\mu) = \norm{\mathcal P_{\hat U_1} \mathbf a_L(\mu) - \mathbf a_L(\mu)}_2, \ \text{where}\ \mu \in \R^d.
    \]
We denote it population counterpart
 \[
        J(\mu) = \norm{\mathcal P_{ U_1} \mathbf a_L(\mu) - \mathbf a_L(\mu)}_2.
    \]
It is obvious that $J(\mu^*) := \norm{\mathcal P_{U_1} \mathbf a_L(\mu^*) - \mathbf a_L(\mu^*)}_2 = 0$ for any $\mu^* \in \{\mu_1,\cdots,\mu_k\}$. We have the following estimate on the discrepancy between the empirical objective $\hat J$ and its population counterpart $J$. 

\begin{proposition}
\label{prop:imaging-stability}
Adopt the setting of Proposition~\ref{thm: svd}: assume $M+1,L\ge k$ and
$\mathrm{rank}(\mathcal A)=\mathrm{rank}(W)=k$, so that the latent Fourier
covariance matrix $C=\mathcal A W\mathcal A^*\in\mathbb C^{L\times L}$ (see \eqref{eq-C}) is of rank $k$, with smallest nonzero singular value
$\sigma_k := \sigma_k(C) > 0$.
Then, on the event $\{\|\hat E\|_2 < \sigma_k\}$,
\begin{equation}
\label{eq:deterministic-imaging}
  \sup_{\mu\in\mathbb R^d} \bigl|\hat J(\mu) - J(\mu)\bigr|
  \;\le\;  \sqrt{L} \,\|\mathcal P_{ U_2} - \mathcal P_{ \hat{U}_2}\|_2 \;\le \; \frac{2\sqrt L\,\|\hat E\|_2}{\sigma_k - \|\hat E\|_2}.
\end{equation}

\end{proposition}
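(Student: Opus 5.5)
The proof has two inequalities: a deterministic reduction to a projector difference, followed by a Davis--Kahan/Wedin-type perturbation bound. The setup is as follows. Write $\mathcal P_{U_1}+\mathcal P_{U_2}=I$ and $\mathcal P_{\hat U_1}+\mathcal P_{\hat U_2}=I$, where $\hat U_1$ is spanned by the top $k$ eigenvectors of the Hermitian matrix $\hat C$. Then
\[
J(\mu)=\|\mathcal P_{U_2}\mathbf a_L(\mu)\|_2, \qquad \hat J(\mu)=\|\mathcal P_{\hat U_2}\mathbf a_L(\mu)\|_2 .
\]

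\textbf{First inequality.} By the reverse triangle inequality,
\[
|\hat J(\mu)-J(\mu)|\le \|(\mathcal P_{\hat U_2}-\mathcal P_{U_2})\mathbf a_L(\mu)\|_2\le \|\mathcal P_{U_2}-\mathcal P_{\hat U_2}\|_2\,\|\mathbf a_L(\mu)\|_2 .
\]
Each entry of $\mathbf a_L(\mu)$ has modulus one, so $\|\mathbf a_L(\mu)\|_2=\sqrt L$ for every $\mu\in\mathbb R^d$. Taking the supremum over $\mu$ gives the first bound. Note also that $\mathcal P_{U_2}-\mathcal P_{\hat U_2}=\mathcal P_{\hat U_1}-\mathcal P_{U_1}$.

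\textbf{Second inequality.} This is the main step, a Davis--Kahan $\sin\Theta$ argument in a form that needs no eigengap assumption on $\hat C$. First, because $\hat\sigma_{k+1}\le \sigma_{k+1}+\|\hat E\|_2=\|\hat E\|_2$, $\hat U_1$ is well defined on the event $\|\hat E\|_2<\sigma_k$, and Weyl's inequality gives $\hat\sigma_k\ge\sigma_k-\|\hat E\|_2>0$.

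Next, for Hermitian projectors of equal rank $k$,
\[
\|\mathcal P_{\hat U_1}-\mathcal P_{U_1}\|_2=\|\mathcal P_{U_2}\mathcal P_{\hat U_1}\|_2=\|U_2^*\hat U_1\|_2 .
\]
To bound this, use $\hat C\hat U_1=\hat U_1\hat\Lambda_1$ with $\hat\Lambda_1=\mathrm{diag}(\hat\sigma_1,\dots,\hat\sigma_k)$, and $U_2^*C=0$. Multiplying on the left by $U_2^*$ yields
\[
U_2^*\hat U_1\hat\Lambda_1=U_2^*\hat C\hat U_1=U_2^*\hat E\hat U_1 .
\]
Since the smallest diagonal entry of $\hat\Lambda_1$ is at least $\sigma_k-\|\hat E\|_2>0$,
\[
\|U_2^*\hat U_1\|_2\le \|\hat E\|_2/(\sigma_k-\|\hat E\|_2).
\]
This gives the bound with constant $1$; the stated constant $2$ follows a fortiori. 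An alternative route is a resolvent/contour-integral or Wedin bound, which directly produces the factor $2$, but the eigen-equation argument is the cleanest.

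The only delicate points are the identification of the operator norm of a projector difference with $\|U_2^*\hat U_1\|_2$, which requires equal ranks, and making sure $\hat\sigma_k$ stays bounded away from zero, which is exactly what the event $\|\hat E\|_2<\sigma_k$ provides. Combining the two inequalities yields \eqref{eq:deterministic-imaging}.
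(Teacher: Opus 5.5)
Your argument is correct, and it actually proves the bound with constant $1$. The first step (reverse triangle inequality plus $\|\mathbf a_L(\mu)\|_2=\sqrt L$) and the eigen-equation $U_2^*\hat U_1\hat\Lambda_1=U_2^*\hat E\hat U_1$ are the same as in the paper. The difference is in how you handle the projector difference.

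The paper writes $\mathcal P_{\hat U_2}-\mathcal P_{U_2}=\mathcal P_{U_1}\mathcal P_{\hat U_2}-\mathcal P_{U_2}\mathcal P_{\hat U_1}$, applies the triangle inequality, and bounds both blocks. It gets $\|U_2^*\hat U_1\|_2\le\|\hat E\|_2/\hat\sigma_k$ as you do. It gets $\|U_1^*\hat U_2\|_2\le\|\hat E\|_2/(\sigma_k-\hat\sigma_{k+1})$ from the Sylvester-type relation $\Sigma_1U_1^*\hat U_2=U_1^*\hat U_2\hat\Sigma_2-U_1^*\hat E\hat U_2$. Weyl bounds both denominators below by $\sigma_k-\|\hat E\|_2$, and summing gives the factor $2$.

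You instead use the equal-rank identity $\|\mathcal P_{\hat U_1}-\mathcal P_{U_1}\|_2=\|U_2^*\hat U_1\|_2$, which replaces both the second block estimate and the triangle inequality. The paper's route is entirely elementary at the cost of a factor $2$. Yours is shorter and sharper, but that identity is the one step you assert without proof, so you should supply it. For instance:
\begin{itemize}
\item Set $A:=\mathcal P_{U_1}\mathcal P_{\hat U_2}$ and $B:=\mathcal P_{U_2}\mathcal P_{\hat U_1}$. Then $A^*B=0$, so $(A-B)^*(A-B)=A^*A+B^*B$. This is a sum of positive semidefinite operators supported on the orthogonal subspaces $\mathrm{ran}\,\hat U_2$ and $\mathrm{ran}\,\hat U_1$, hence $\|A-B\|_2=\max\{\|A\|_2,\|B\|_2\}$.
\item The matrix $[U_1\;U_2]^*[\hat U_1\;\hat U_2]$ is unitary with square upper-left block $X:=U_1^*\hat U_1$. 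This gives $I_k-X^*X=(U_2^*\hat U_1)^*(U_2^*\hat U_1)$ and $I_k-XX^*=(U_1^*\hat U_2)(U_1^*\hat U_2)^*$. Since $X^*X$ and $XX^*$ have the same spectrum, $\|A\|_2=\|B\|_2$.
\end{itemize}
The first of these facts already shows that the paper's own decomposition yields constant $1$ if the sum is replaced by a maximum.

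One small inaccuracy: the event $\|\hat E\|_2<\sigma_k$ does not make $\hat U_1$ unique. Weyl's bounds guarantee $\hat\sigma_k>\hat\sigma_{k+1}$ only when $\|\hat E\|_2<\sigma_k/2$. This is harmless, because your argument, like the paper's, uses only $\hat C\hat U_1=\hat U_1\hat\Lambda_1$ and $\hat\sigma_k\ge\sigma_k-\|\hat E\|_2>0$. It therefore holds for any choice of $k$ leading eigenvectors.
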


Next, we establish the local coercivity of the population objective $J$ under
suitable conditions.

\begin{proposition}
\label{prop:local coercivity condition}
Suppose $L\ge N:=d+k$. Then there exists a closed set $Z\subset\mathbb R^{dN}$,
of Lebesgue measure zero and with empty interior, with the following property:
whenever the frequency-vector set $\{t_1,\dots,t_L\}$ contains $N$ vectors
$t_{i_1},\dots,t_{i_N}$ with $(t_{i_1},\dots,t_{i_N})\notin Z$, then for each
$\mu^\ast\in\{\mu_1,\dots,\mu_k\}$ there exist a constant $\kappa>0$ and a
neighborhood $\mathcal N$ of $\mu^\ast$, both depending on the true means
$\{\mu_j:1\le j\le k\}$ and the frequency vectors $\{t_j:1\le j\le L\}$, such
that
\[
    J(\mu)\;\ge\;\kappa\,\norm{\mu-\mu^\ast}_2
    \qquad\text{for all }\mu\in\mathcal N .
\]
\end{proposition}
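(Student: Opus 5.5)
The plan is to linearize $J$ at $\mu^\ast$ and reduce local coercivity to a full-column-rank condition. That condition can then be guaranteed generically by an analyticity argument. Write $P_{U_2}=I-\mathcal P_{U_1}$, so that $J(\mu)=\|P_{U_2}\mathbf a_L(\mu)\|_2$. Since $\mathbf a_L(\mu^\ast)\in\operatorname{range}(U_1)$, we have $P_{U_2}\mathbf a_L(\mu^\ast)=0$. A Taylor expansion with $h=\mu-\mu^\ast$ gives
\[
P_{U_2}\mathbf a_L(\mu)=P_{U_2}D\mathbf a_L(\mu^\ast)\,h+r(h),\qquad \|r(h)\|_2\le \tfrac12\sqrt L\,T^2\|h\|_2^2 .
\]
Here $D\mathbf a_L(\mu^\ast)\in\mathbb C^{L\times d}$ has entries $\iota\,(t_l)_r e^{\iota\langle\mu^\ast,t_l\rangle}$, and $T=\max_l\|t_l\|_2$. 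The remainder bound holds because each entry of the Hessian of $e^{\iota\langle\mu,t_l\rangle}$ has modulus at most $\|t_l\|_2^2$ in operator norm, and $\|P_{U_2}\|\le1$.

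It therefore suffices to show that $P_{U_2}D\mathbf a_L(\mu^\ast)$ is injective on $\mathbb C^d$, hence on $\mathbb R^d$. Set $\kappa_0:=\sigma_{\min}\bigl(P_{U_2}D\mathbf a_L(\mu^\ast)\bigr)>0$, $\kappa=\kappa_0/2$, and $\mathcal N=\{\|h\|_2<\kappa_0/(\sqrt L\,T^2)\}$. Then $J(\mu)\ge\kappa_0\|h\|_2-\tfrac12\sqrt L T^2\|h\|_2^2\ge\kappa\|h\|_2$ on $\mathcal N$. Because $\operatorname{range}(U_1)=\operatorname{span}\{\mathbf a_L(\mu_j)\}$ by Proposition~\ref{thm: svd}, injectivity of $P_{U_2}D\mathbf a_L(\mu^\ast)$ is equivalent to the following: the $L\times N$ matrix
\[
B(t_1,\dots,t_L)=\bigl[\mathbf a_L(\mu_1),\dots,\mathbf a_L(\mu_k),\,D\mathbf a_L(\mu^\ast)\bigr]
\]
has full column rank $N=d+k$. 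Indeed, if $P_{U_2}D\mathbf a_L(\mu^\ast)b=0$, then $D\mathbf a_L(\mu^\ast)b=\sum_j c_j\mathbf a_L(\mu_j)$, which is a linear dependence among the columns of $B$. Full column rank of $B$ follows once some $N\times N$ row-submatrix $B_{i_1..i_N}$ is nonsingular.

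Next, define $F(s_1,\dots,s_N):=\det\bigl[\phi_c(s_i)\bigr]_{i,c=1}^N$ for $(s_1,\dots,s_N)\in\mathbb R^{dN}$. The functions are $\phi_j(s)=e^{\iota\langle\mu_j,s\rangle}$ for $j\le k$ and $\phi_{k+r}(s)=\iota s_r e^{\iota\langle\mu^\ast,s\rangle}$ for $r\le d$. The function $F$ is real-analytic (its real and imaginary parts are). We take $Z$ to be the union, over $\mu^\ast\in\{\mu_1,\dots,\mu_k\}$, of the zero sets of these $k$ determinants. If each $F\not\equiv0$, then $Z$ is a finite union of zero sets of nontrivial real-analytic functions. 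It is therefore closed and of Lebesgue measure zero, and hence has empty interior. Note that $Z$ depends on the means, consistent with the statement.

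The main step is to show $F\not\equiv0$. I would use the standard lemma that $\det[\phi_c(s_i)]$ vanishes identically only if $\phi_1,\dots,\phi_N$ are linearly dependent over $\mathbb C$. The proof is by induction on $N$: expand along the last row, choose $s_1,\dots,s_{N-1}$ making the leading $(N-1)$-minor nonzero, and obtain a nontrivial linear relation in $s_N$. It then remains to check linear independence. Suppose
\[
\sum_j c_j e^{\iota\langle\mu_j,s\rangle}+(b^{\mathrm T}s)\,e^{\iota\langle\mu^\ast,s\rangle}\equiv0 .
\]
Grouping by the distinct frequencies $\mu_1,\dots,\mu_k$ (with $\mu^\ast$ one of them) gives an exponential polynomial whose coefficients are polynomials in $s$. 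Independence of exponential polynomials with distinct frequencies forces every polynomial coefficient to vanish. For example, restrict to a generic line $s=\lambda u$ on which the values $\langle\mu_j,u\rangle$ are distinct, and use one-variable independence of $\lambda^m e^{\iota\alpha\lambda}$. This yields $c_j=0$ for $j\ne\ast$, and $c_\ast+b^{\mathrm T}s\equiv0$, so $b=0$ and $c_\ast=0$.

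I expect this independence and genericity step to be the only delicate point. The care needed is in passing from the multivariate identity to the one-dimensional one via a generic direction, and in using the distinctness of the means.
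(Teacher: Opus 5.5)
Your proposal is correct and follows essentially the same route as the paper: you linearize $J$ at $\mu^\ast$, reduce injectivity of $T_j=\mathcal P_{U_2}D\mathbf a_L(\mu^\ast)$ to full column rank of $[\mathcal A\mid D\mathbf a_L(\mu^\ast)]$, and show the $N\times N$ determinant is a real-analytic function that is not identically zero (by the induction plus linear-independence argument), taking $Z$ to be the union of the $k$ zero sets. The only differences are in detail: your explicit remainder bound $\tfrac12\sqrt L\,T^2\|h\|_2^2$ gives a concrete neighborhood, and your generic-line argument proves the independence of the exponential polynomials, which the paper simply asserts.
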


We are ready to state our main result on the local convergence of the estimator
obtained from a local minimizer of the empirical objective $\hat J$.

\begin{theorem}\label{cor:param_error}
Adopt the setting of Proposition~\ref{prop:imaging-stability}. Assume in
addition that $L\ge d+k$ and that the sampled frequency vectors $t_j$, $1\leq j\leq L$, satisfy the
condition of Proposition~\ref{prop:local coercivity condition}. Fix a true
component mean $\mu^*\in\{\mu_1,\dots,\mu_k\}$, and let $\kappa>0$ and
$\mathcal N$ be such that
\[
    J(\mu)\ge\kappa\|\mu-\mu^*\|_2,
    \qquad
    \mu\in\mathcal N.
\]
Choose $r_0>0$ such that
$\overline{B}(\mu^*,r_0)\subset\mathcal N$, and set
\[
    Q:=L(M+1),
    \qquad
    \kappa_\Sigma
    :=
    \exp\!\bigl(T_{\max}^2\sigma_{\max}(\Sigma)\bigr).
\]

Then the empirical objective $\hat J$ admits a local minimizer
$\hat\mu\in B(\mu^*,r_0)$ such that 
\begin{equation}\label{eq:param-rate}
    \|\hat\mu-\mu^*\|_2=O_p(n^{-1/2}).
\end{equation}
More precisely, for every $\delta\in(0,1)$, there exists $n_0<\infty$ such
that, for every $n\ge n_0$, with probability at least $1-\delta$,
\begin{equation}
    \|\hat\mu-\mu^*\|_2
    \le
    \frac{32L^{3/2}\sqrt{\kappa_\Sigma}}
         {\kappa\sigma_k(C)}
    \sqrt{\frac{\ln(4Q/\delta)}{n}}
    \nonumber
    +
    \frac{32L^{3/2}\kappa_\Sigma}
         {\kappa\sigma_k(C)}
    \frac{\ln(4Q/\delta)}{n}.
    \label{eq:param-highprob}
\end{equation}

One sufficient choice of $n_0$ is obtained by defining
\begin{equation}\label{eq:param-tau0}
    \tau_0
    :=
    \min\left\{
        \frac{\sigma_k(C)}{2},
        \frac{\kappa\sigma_k(C)r_0}{16\sqrt L}
    \right\}, 
\end{equation}
and then setting
\begin{equation}\label{eq:n0-explicit}
    n_0
    :=
    \max\left\{
        \frac{64L^2\kappa_\Sigma\ln(4Q/\delta)}{\tau_0^2},
        \frac{8L\kappa_\Sigma\ln(4Q/\delta)}{\tau_0}
    \right\}.
\end{equation}

\end{theorem}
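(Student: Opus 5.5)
We argue by a deterministic perturbation argument followed by the high-probability bound on $\|\hat E\|_2$ from Proposition~\ref{prop:covariance-perturbation}. Write $\eta:=\sup_{\mu}|\hat J(\mu)-J(\mu)|$. On the event $\{\|\hat E\|_2\le \sigma_k(C)/2\}$, Proposition~\ref{prop:imaging-stability} gives
\[
\eta\le \frac{2\sqrt L\|\hat E\|_2}{\sigma_k(C)-\|\hat E\|_2}\le \frac{4\sqrt L\|\hat E\|_2}{\sigma_k(C)} .
\]
The plan has three steps: show that $\hat J$ has a local minimizer inside $B(\mu^*,r_0)$; bound its distance to $\mu^*$ by a multiple of $\eta$; and translate this into the stated $n$-dependence.

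\emph{Existence of an interior minimizer.} The map $\mu\mapsto \mathbf a_L(\mu)$ is continuous, and so is $\hat J$. Hence $\hat J$ attains its minimum on the compact ball $\overline B(\mu^*,r_0)$ at some point $\hat\mu$. At the centre, $\hat J(\mu^*)\le J(\mu^*)+\eta=\eta$. On the sphere $\|\mu-\mu^*\|_2=r_0$, coercivity gives $\hat J(\mu)\ge J(\mu)-\eta\ge \kappa r_0-\eta$. If $\eta<\kappa r_0/2$, then $\kappa r_0-\eta>\eta$, so the minimum over the closed ball is not attained on the boundary. Therefore $\hat\mu\in B(\mu^*,r_0)$ is an interior point and hence a genuine local minimizer of $\hat J$ on $\mathbb R^d$.

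\emph{Error bound.} Since $\hat\mu$ minimizes $\hat J$ over the ball,
\[
\kappa\|\hat\mu-\mu^*\|_2\le J(\hat\mu)\le \hat J(\hat\mu)+\eta\le \hat J(\mu^*)+\eta\le 2\eta .
\]
Thus $\|\hat\mu-\mu^*\|_2\le 2\eta/\kappa\le 8\sqrt L\|\hat E\|_2/(\kappa\sigma_k(C))$.

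\emph{Choice of $n_0$.} The condition $\eta<\kappa r_0/2$ holds whenever $4\sqrt L\|\hat E\|_2/\sigma_k<\kappa r_0/2$, i.e. $\|\hat E\|_2<\kappa\sigma_k r_0/(8\sqrt L)$. The definition \eqref{eq:param-tau0} has $\tau_0\le\kappa\sigma_k r_0/(16\sqrt L)$, so it suffices to ensure $\|\hat E\|_2\le\tau_0$, which also enforces $\|\hat E\|_2\le\sigma_k/2$. By Proposition~\ref{prop:covariance-perturbation} with $N=Q$, with probability at least $1-\delta$,
\[
\|\hat E\|_2\le 4L\sqrt{\kappa_\Sigma}\sqrt{\ln(4Q/\delta)/n}+4L\kappa_\Sigma\ln(4Q/\delta)/n .
\]
Each term is at most $\tau_0/2$ exactly when $n\ge 64L^2\kappa_\Sigma\ln(4Q/\delta)/\tau_0^2$ and $n\ge 8L\kappa_\Sigma\ln(4Q/\delta)/\tau_0$, which is $n\ge n_0$ as in \eqref{eq:n0-explicit}. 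Substituting the bound on $\|\hat E\|_2$ into $8\sqrt L\|\hat E\|_2/(\kappa\sigma_k)$ yields coefficients $32L^{3/2}\sqrt{\kappa_\Sigma}/(\kappa\sigma_k)$ and $32L^{3/2}\kappa_\Sigma/(\kappa\sigma_k)$, matching the displayed bound. For fixed $\delta$ the leading term is of order $n^{-1/2}$, which gives the $O_p(n^{-1/2})$ claim.

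The main subtlety is the first step. The argument must produce a local minimizer of $\hat J$ on all of $\mathbb R^d$, not merely a minimizer on the ball, and it must use only the uniform closeness to $J$ from Proposition~\ref{prop:imaging-stability}, since $\hat J$ itself may be nonsmooth at $\mu^*$. The boundary-versus-centre comparison settles this. The remaining work is bookkeeping of constants so that the factor $1/16$ in $\tau_0$ covers both $\eta<\kappa r_0/2$ and $\|\hat E\|_2\le\sigma_k/2$.
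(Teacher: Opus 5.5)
Your proof is correct and follows the paper's argument: you minimize $\hat J$ over $\overline B(\mu^*,r_0)$, use the chain $\kappa\|\hat\mu-\mu^*\|_2\le J(\hat\mu)\le 2\eta$ together with Proposition~\ref{prop:imaging-stability}, and control $\|\hat E\|_2\le\tau_0$ via Proposition~\ref{prop:covariance-perturbation}, arriving at the same constants. The only cosmetic difference is how interiority is certified: you compare boundary values with $\hat J(\mu^*)$, which needs $\eta<\kappa r_0/2$, whereas the paper deduces it from the error bound $\|\hat\mu-\mu^*\|_2\le r_0/2$; the factor $1/16$ in $\tau_0$ covers either route.
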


\begin{remark}
Previous work in \cite{doss2023optimal} establishes that the optimal global minimax rate for estimating the mixing distribution of GMMs is $O(n^{-1/4})$ (or slower depending on $d$ and $k$). This slow rate arises from the "worst-case" regime where component means merge ($\Delta \to 0$), causing the Fisher information matrix to become singular. In contrast, our result in Theorem \ref{cor:param_error} establishes a convergence rate of $O_p(n^{-1/2})$. This is consistent with the standard theory of strongly identifiable parametric models (see \cite{chen1995optimal}). Our bound depends on the separation distance $\Delta$ through $\sigma_k(C)$, which appears in the denominator. Under the sampling design of Proposition~\ref{prop:lower-bound-sampling}, the lower bound on $\sigma_k(C)$ makes this dependence explicit; in particular, when $\tau=\rho$, we have $\sigma_k(C)\gtrsim\Delta^{4k-4}$. Thus, the constant in the estimation error bound deteriorates as $\Delta\to0$, reflecting the degeneracy that occurs when component means merge.
\end{remark}


Note that the convergence result in Theorem~\ref{cor:param_error} is local:
in practice, a good initialization is needed to guarantee convergence for the
nonconvex objective $\hat J(\mu)$. At the same time, evaluating $\hat J(\mu)$ on
a fine grid becomes costly in dimensions $d\ge 2$.

In our algorithm, we use gradient descent to locate a local minimizer. For
computational convenience, we minimize the \textbf{squared empirical objective}
\[
    f(\mu):=\hat J^{2}(\mu)
\]
instead of $\hat J(\mu)$ itself, via the iteration
\begin{equation}
    \label{eqn:gradient descent}
    \mu^{(i+1)} = \mu^{(i)} - \gamma\,\nabla_\mu f\bigl(\mu^{(i)}\bigr),
    \qquad i=0,1,\dots,
\end{equation}
where $\gamma>0$ is the learning rate. The gradient above has the following explicit form.  

    \begin{lemma}
        \label{prop:gradient}
        Let $r_i$ denote the $i$-th row vector of $\hat{U}_1$ for $i = 1,\ldots,L$. Then 
        \[
            \nabla_\mu f(\mu) = -\iota \sum_{m=1}^L \sum_{l=1}^L (r_m r_l^* e^{\iota\innerproduct{\mu}{t_l - t_m}} (t_l - t_m).
        \]
     \end{lemma}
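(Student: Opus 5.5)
The plan is to rewrite $f$ as an explicit finite trigonometric sum in $\mu$ and then differentiate term by term. First, $\mathcal P_{\hat U_1}=\hat U_1\hat U_1^*$, because the columns of $\hat U_1$ are orthonormal: they come from the eigendecomposition \eqref{eq-hatC} of the Hermitian matrix $\hat C$. Writing $\mathbf a:=\mathbf a_L(\mu)$, the residual $\mathbf a-\mathcal P_{\hat U_1}\mathbf a$ is orthogonal to $\mathcal P_{\hat U_1}\mathbf a$. Pythagoras then gives
\[
f(\mu)=\hat J^2(\mu)=\norm{\mathbf a}_2^2-\norm{\hat U_1^*\mathbf a}_2^2 .
\]
Every entry of $\mathbf a$ has modulus one, so $\norm{\mathbf a}_2^2=L$ for all $\mu$. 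Hence
\[
f(\mu)=L-\mathbf a^*\hat U_1\hat U_1^*\mathbf a,
\]
and only the second term depends on $\mu$.

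Next I expand the quadratic form entrywise. Since $r_m$ is the $m$-th row of $\hat U_1$, we have $(\hat U_1\hat U_1^*)_{ml}=r_m r_l^*$. Also $\overline{a_m}\,a_l=e^{\iota\langle\mu,t_l-t_m\rangle}$. Therefore
\[
f(\mu)=L-\sum_{m=1}^L\sum_{l=1}^L r_m r_l^*\,e^{\iota\langle\mu,t_l-t_m\rangle}.
\]
This is a finite sum of smooth functions of $\mu$, with coefficients $r_m r_l^*$ that do not depend on $\mu$. It can therefore be differentiated term by term, using $\nabla_\mu e^{\iota\langle\mu,t_l-t_m\rangle}=\iota\,(t_l-t_m)\,e^{\iota\langle\mu,t_l-t_m\rangle}$. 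This gives exactly the stated expression
\[
\nabla_\mu f(\mu)=-\iota\sum_{m=1}^L\sum_{l=1}^L r_m r_l^*\,e^{\iota\langle\mu,t_l-t_m\rangle}(t_l-t_m).
\]

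There is no real obstacle here. The only point needing care is to confirm that the right-hand side is real, so that it is a legitimate gradient of the real function $f$. I would check this by pairing the term $(m,l)$ with the term $(l,m)$. The coefficient $r_l r_m^*$ equals $\overline{r_m r_l^*}$, the exponent changes sign, and $t_m-t_l=-(t_l-t_m)$. Consequently the $(l,m)$ term is the complex conjugate of the $(m,l)$ term, so the imaginary parts cancel in the double sum. Equivalently, $\hat U_1\hat U_1^*$ is Hermitian, so the quadratic form is real, and its gradient is real.
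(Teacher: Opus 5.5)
Your proof is correct and follows essentially the paper's route: write $f(\mu)$ as the double trigonometric sum $\mathbf a_L(\mu)^*(I_L-\hat U_1\hat U_1^*)\mathbf a_L(\mu)$ using $(\hat U_1\hat U_1^*)_{ml}=r_m r_l^*$, then differentiate term by term. The only difference is cosmetic. You drop the $\mu$-independent part $\norm{\mathbf a_L(\mu)}_2^2=L$ before differentiating, while the paper keeps the $\delta_{ml}$ terms and notes their gradients vanish because $t_l-t_m=0$ on the diagonal; your realness check is an extra not needed for the identity.
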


\subsection{The score function for initialization and convergence guarantee}

The successful convergence of the iteration~\eqref{eqn:gradient descent} to one
of the $k$ true Gaussian means depends on the choice of initialization
$\mu^{(0)}$. We draw candidate initializations directly from the observed
samples. The probability of obtaining a valid initialization is governed by the
Gaussian mass of a certified basin of attraction around a component mean; this
small-ball probability can deteriorate with the effective dimension, as
quantified in the discussion following Lemma~\ref{lem:score_hit}.

Specifically, we score each sample \(x_i\) via the \textbf{score 
function}
\[
    s(x_i) = \norm{\mathcal{P}_{\hat{U}_1} \mathbf a_L(x_i)}_2^2 
    = \mathbf a_L(x_i)^* \hat{U}_1 \hat{U}_1^* \mathbf a_L(x_i),
\]
where \(\mathcal{P}_{\hat{U}_1}\) denotes the orthogonal projection onto the estimated spectral 
subspace spanned by \(\hat{U}_1\). Under the normalization $\norm{\mathbf a_L(x)}_2 = \sqrt{L}$,
\[
    s(x_i) = L - \hat J(x_i)^2=L-f(x_i), 
    \qquad \hat J(x_i) := \norm{\mathcal{P}_{\hat U_2}\mathbf a_L(x_i)}_2.
\]
Thus, maximizing the score is exactly equivalent to minimizing the residual $\hat J(x_i)$.

In what follows, we first show that any candidate sample in $\Theta$ satisfying
the stated score threshold lies in a certified attraction basin. We then derive
an explicit sample-size condition under which such a sample exists with high
probability and prove that gradient descent initialized at this sample converges
linearly to a local minimizer of $f$ that is $O_p(n^{-1/2})$-close to a true
mean.

Throughout this subsection, we work on the event of
Proposition~\ref{prop:imaging-stability} and
Proposition~\ref{prop:local coercivity condition}. Following Step~1 in the proof
of Proposition~\ref{prop:local coercivity condition}, let $\mu_j$ be the true
center nearest to the initialization $x_i$, i.e.
$j=\arg\min_{1\le l\le k}\norm{x_i-\mu_l}_2$, and let $$T_j=\mathcal{P}_{U_2}D \mathbf a_L(\mu_j)$$
where $D\mathbf a_L(\mu_j)$ denote the Jacobi matrix of $\mathbf a_L$ at $\mu_j$. We 
set
\[
    m_j := 2\sigma_{\min}(T_j)^2, \qquad M_j := 2\sigma_{\max}(T_j)^2, \qquad
    \kappa_j := \frac{M_j}{m_j}
             = \Bigl(\tfrac{\sigma_{\max}(T_j)}{\sigma_{\min}(T_j)}\Bigr)^2 .
\]

We have the following regularity for the empirical objective function $f$.

\begin{lemma}[Local strong convexity and smoothness]\label{lem:local_regularity}
Fix a true mean $\mu_j$, and let $\mathcal{N}_j$ be its coercivity
neighborhood from Proposition~\ref{prop:local coercivity condition}. There
exists $\rho_{0,j}>0$, depending only on the true means
$\mu_1,\dots,\mu_k$, and the frequency vectors $t_l$, $1\leq l\leq L$, and their translations $v_m$, $1\leq m \leq M$,
such that $\overline{B}(\mu_j,\rho_{0,j})\subset\mathcal{N}_j$. Moreover, for every
$\delta\in(0,1)$, there exists $n_0>0$, depending additionally on $\delta$ and
the covariance matrix $\Sigma$, such that, whenever $n\ge n_0$, with
probability at least $1-\delta$,
\[
    \frac{1}{2}m_jI_d
    \;\preceq\; \nabla^2 f(\mu) \;\preceq\;
    2M_jI_d,
    \qquad \mu\in B(\mu_j,\rho_{0,j}).
\]
On the same event, $f$ admits a unique minimizer
$\hat\mu_j\in B(\mu_j,\rho_{0,j})$. Moreover,
\[
    \norm{\hat\mu_j-\mu_j}_2=O_p(n^{-1/2}).
\]
\end{lemma}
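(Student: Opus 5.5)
The plan is to compare $f=\hat J^2$ with its population counterpart $F(\mu):=J(\mu)^2=\mathbf a_L(\mu)^*\mathcal P_{U_2}\mathbf a_L(\mu)$ in $C^2$ on a fixed ball, and to transfer the curvature of $F$ at $\mu_j$ to $f$. Write $\hat P:=\mathcal P_{\hat U_2}$ and $P:=\mathcal P_{U_2}$, so that $f(\mu)-F(\mu)=\mathbf a_L(\mu)^*(\hat P-P)\mathbf a_L(\mu)$.

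\textbf{Step 1: population curvature.} Since $\mathbf a_L(\mu_j)\in\mathrm{range}(U_1)$, we have $P\mathbf a_L(\mu_j)=0$, hence $F(\mu_j)=0$ and $\nabla F(\mu_j)=0$. Differentiating twice gives
\[
\nabla^2F(\mu_j)=2\,\mathrm{Re}\bigl(T_j^*T_j\bigr),\qquad T_j=P\,D\mathbf a_L(\mu_j).
\]
The term containing the second derivatives of $\mathbf a_L$ drops out because it is multiplied by $P\mathbf a_L(\mu_j)=0$. By the genericity condition of Proposition~\ref{prop:local coercivity condition}, which is exactly what produces $J\ge\kappa\|\mu-\mu^*\|$, the matrix $T_j$ has full column rank $d$. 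I will interpret $\sigma_{\min}(T_j),\sigma_{\max}(T_j)$ through the real form $\mathrm{Re}(T_j^*T_j)$ (equivalently the stacked real and imaginary parts), so that $m_jI\preceq\nabla^2F(\mu_j)\preceq M_jI$. The third derivatives of $\mathbf a_L$ are bounded by $\sqrt L\,T^3$ with $T=\max_l\|t_l\|$, so $\nabla^2F$ is Lipschitz. Hence there is $\rho_{0,j}>0$, depending only on the means and the frequencies, with $\tfrac34 m_jI\preceq\nabla^2F\preceq\tfrac32M_jI$ on $\overline B(\mu_j,\rho_{0,j})$. Shrinking $\rho_{0,j}$ if needed also gives $\overline B(\mu_j,\rho_{0,j})\subset\mathcal N_j$.

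\textbf{Step 2: uniform perturbation.} The Hessian of $f-F$ is a sum of terms of the form $\partial^\alpha\mathbf a_L^*(\hat P-P)\partial^\beta\mathbf a_L$ with $|\alpha|+|\beta|=2$, and $\|\partial^\alpha\mathbf a_L\|\le\sqrt L\,T^{|\alpha|}$. Therefore
\[
\sup_\mu\bigl\|\nabla^2 f-\nabla^2F\bigr\|\le 4L\,T^2\,\|\hat P-P\|_2 .
\]
Proposition~\ref{prop:imaging-stability} gives $\|\hat P-P\|_2\le 2\|\hat E\|_2/(\sigma_k-\|\hat E\|_2)$, and Proposition~\ref{prop:covariance-perturbation} bounds $\|\hat E\|_2=O(\sqrt{\ln(1/\delta)/n})$ with probability $1-\delta$. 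Choose $n_0$ so that this Hessian error is at most $m_j/4$. Then the stated sandwich $\tfrac12m_jI\preceq\nabla^2 f\preceq 2M_jI$ holds on the ball, using $M_j\ge m_j$.

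\textbf{Step 3: existence, uniqueness, and rate.} Strong convexity makes $f$ have at most one critical point in the ball. For existence, the analogous first-order bound gives $\|\nabla f(\mu_j)\|\le 2L\,T\,\|\hat P-P\|$. By strong convexity,
\[
\langle\nabla f(\mu),\mu-\mu_j\rangle\ge \tfrac{m_j}{2}\|\mu-\mu_j\|^2-\|\nabla f(\mu_j)\|\,\|\mu-\mu_j\| ,
\]
which is positive on the sphere $\|\mu-\mu_j\|=\rho_{0,j}$ once $n\ge n_0$ (enlarging $n_0$ if needed). Consequently the minimizer of $f$ over the closed ball is interior, and it is the unique minimizer $\hat\mu_j$. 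Moreover
\[
\|\hat\mu_j-\mu_j\|\le \frac{2}{m_j}\|\nabla f(\mu_j)\|\lesssim \frac{L\,T}{m_j\sigma_k}\|\hat E\|_2=O_p(n^{-1/2}).
\]
This is consistent with Theorem~\ref{cor:param_error}.

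The main obstacle is Step 1: making rigorous that $\mathrm{Re}(T_j^*T_j)$ is positive definite, that is, extracting full rank of $T_j$ from the generic condition $Z$ of Proposition~\ref{prop:local coercivity condition}, and handling the complex-to-real identification in the definition of $m_j,M_j$. The plan here is to reuse Step~1 of that proposition's proof. If it only yields the linear bound $J\ge\kappa\|\mu-\mu_j\|$, I would Taylor-expand $J^2$ to second order, which forces $\lambda_{\min}(\nabla^2F(\mu_j))\ge2\kappa^2>0$ directly.
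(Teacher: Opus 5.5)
Your proof is correct. Its first half (Steps 1--2) is essentially the paper's: the paper also shows $\nabla^2F(\mu_j)=2\operatorname{Re}(T_j^*T_j)$. It likewise combines a Lipschitz bound on $\nabla^2F$ near $\mu_j$ with $\|\nabla^2 f-\nabla^2F\|\le C_2\|\hat P-P\|_2$, and chooses $\rho_{0,j},\eta_0$ with $C_0(\rho_{0,j}+\eta_0)\le m_j/2$. You only make the constants explicit ($4LT^2$).

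The genuine difference is in existence and the rate. The paper does not use the curvature there. It invokes Theorem~\ref{cor:param_error} at level $\delta/2$ with $r_0=\rho_{0,j}$. That is a zeroth-order argument: the uniform bound on $|\hat J-J|$ and the linear coercivity $J\ge\kappa\|\mu-\mu_j\|_2$ force the minimizer of $\hat J$ over the closed ball into its interior. The paper then intersects this event with the Hessian event by a union bound, and uses strong convexity only for uniqueness.

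You instead bound $\|\nabla f(\mu_j)\|_2\le 2LT\|\hat P-P\|_2$ and use the strong convexity you just established to show that $\nabla f$ points outward on the boundary sphere. This gives existence, uniqueness, and $\|\hat\mu_j-\mu_j\|_2\le 2\|\nabla f(\mu_j)\|_2/m_j$ at once, on the single event of Proposition~\ref{prop:covariance-perturbation}. Your route is self-contained, needs no union bound, and has explicit constants. The paper's route recycles an earlier theorem and typically gives a better constant. Its constant is of order $\sqrt L/(\kappa\,\sigma_k(C))$ with $\kappa\asymp\sigma_{\min}(T_j)$, while yours is of order $LT/(m_j\,\sigma_k(C))$ with $m_j\asymp\sigma_{\min}(T_j)^2$. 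Up to absolute constants, yours is larger by the factor $\sqrt L\,T/\sigma_{\min}(T_j)\ge 1$.

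The step you flag as the main obstacle is not one. The rank condition in the proof of Proposition~\ref{prop:local coercivity condition} is a rank condition over $\mathbb C$, so the complex $\sigma_{\min}(T_j)$ is already positive. For real $v$ one then has directly $\sigma_{\min}(T_j)^2\|v\|_2^2\le\|T_jv\|_2^2\le\sigma_{\max}(T_j)^2\|v\|_2^2$, which is how the paper gets $m_jI_d\preceq\nabla^2F(\mu_j)\preceq M_jI_d$ with the stated definitions of $m_j,M_j$. Your real-form reinterpretation is harmless, since $\lambda_{\min}(\operatorname{Re}T_j^*T_j)\ge\sigma_{\min}(T_j)^2$ and $\lambda_{\max}(\operatorname{Re}T_j^*T_j)\le\sigma_{\max}(T_j)^2$, so your sandwich implies the stated one.
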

Recall the signal subspace
$U_1=\operatorname{span}\{\mathbf a_L(\mu_1),\dots,\mathbf a_L(\mu_k)\}$, and the population objective functional
\[
  J(\mu)=\bigl\|\mathbf a_L(\mu)-\mathcal P_{U_1}\mathbf a_L(\mu)\bigr\|_2
        =\bigl\|\mathcal P_{U_1^{\perp}}\mathbf a_L(\mu)\bigr\|_2 .
\]
We have the following result on the global coercivity of $J$. 
\begin{lemma}
\label{lem:global_coercivity}
Suppose the true means $\mu_1,\dots,\mu_k$ and the parameter search lie in a
known closed ball $\Theta:=\bar B(\bar\mu, R/2)$ centered at some
$\bar\mu\in\mathbb R^d$, with $R>0$, and that $L\ge k+1$ with the frequency
vectors $t_l$, $1\le l\le L$, chosen so that the steering vectors at any $k+1$
distinct parameters in $\Theta$ are linearly independent; in particular,
$\mathbf a_L(\mu)\in U_1$ implies $\mu\in\{\mu_1,\dots,\mu_k\}$. Then, with
componentwise radii $\rho_{0,l}$ as in
Lemma~\ref{lem:local_regularity}, set
\[
    \rho_0:=\min_{1\le l\le k}\rho_{0,l},
    \qquad
    \tilde{\mathcal N}_l:=B(\mu_l,\rho_0),
\]
and define
\[
    K:=\Theta\setminus\bigcup_{l=1}^{k}\tilde{\mathcal N}_l .
\]
Define
\[
  J_\star:=\tfrac12\min_{\mu\in K}J(\mu).
\]
Then $J_\star$ is strictly
positive and
\[
  \{\,\mu\in\Theta : J(\mu)\le J_\star\,\}
  \ \subseteq\
  \bigcup_{l=1}^{k}\tilde{\mathcal N}_l .
\]
\end{lemma}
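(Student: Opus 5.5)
The proof is a compactness argument. First I check that $K$ is compact and that $J$ is continuous on it. $\Theta=\bar B(\bar\mu,R/2)$ is closed and bounded, and each $\tilde{\mathcal N}_l=B(\mu_l,\rho_0)$ is open, so $K=\Theta\cap\bigl(\bigcup_l\tilde{\mathcal N}_l\bigr)^c$ is a closed subset of a compact set, hence compact.

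The map $\mu\mapsto\mathbf a_L(\mu)$ is smooth, since each entry is $e^{\iota\langle\mu,t_l\rangle}$. The projector $\mathcal P_{U_1^\perp}$ is a fixed linear map, so $J(\mu)=\|\mathcal P_{U_1^\perp}\mathbf a_L(\mu)\|_2$ is continuous on $\mathbb R^d$. If $K=\emptyset$, the minimum is vacuous; I adopt the convention $J_\star=+\infty$ or any positive value, and the inclusion is trivial because $\Theta\subseteq\bigcup_l\tilde{\mathcal N}_l$. Otherwise the minimum of $J$ over $K$ is attained at some $\mu_\circ\in K$.

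Next I show $J(\mu_\circ)>0$. Suppose instead $J(\mu_\circ)=0$. Then $\mathbf a_L(\mu_\circ)\in U_1=\operatorname{span}\{\mathbf a_L(\mu_1),\dots,\mathbf a_L(\mu_k)\}$. By the standing assumption, any $k+1$ distinct parameters in $\Theta$ have linearly independent steering vectors. So if $\mu_\circ\notin\{\mu_1,\dots,\mu_k\}$, the vectors $\mathbf a_L(\mu_\circ),\mathbf a_L(\mu_1),\dots,\mathbf a_L(\mu_k)$ would be linearly independent, which contradicts $\mathbf a_L(\mu_\circ)\in U_1$. This uses $\mu_\circ\in\Theta$ and $\mu_l\in\Theta$. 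Hence $\mu_\circ=\mu_l$ for some $l$. But $\mu_l\in\tilde{\mathcal N}_l$, since $\rho_0>0$ because each $\rho_{0,l}>0$ by Lemma~\ref{lem:local_regularity}. Then $\mu_l\notin K$, another contradiction. Therefore $\min_K J>0$, and $J_\star=\tfrac12\min_K J>0$.

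For the inclusion, take $\mu\in\Theta$ with $J(\mu)\le J_\star$. If $\mu\in K$, then $J(\mu)\ge\min_K J=2J_\star>J_\star$, which is impossible because $J_\star>0$. So $\mu\in\Theta\setminus K\subseteq\bigcup_l\tilde{\mathcal N}_l$.

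There is no real obstacle here. The only delicate points are making sure $K$ is closed, which holds because the neighborhoods are open balls, and applying the linear-independence hypothesis to $k+1$ distinct points that all lie in $\Theta$. Both are handled above.
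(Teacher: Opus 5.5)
Your proposal is correct and follows essentially the same compactness argument as the paper. Both use the $(k+1)$-point linear-independence hypothesis to rule out zeros of $J$ away from the true means, attain the minimum of $J$ on the compact set $K$ (which excludes the means because $\rho_0>0$), and then derive the inclusion by contradiction; your explicit treatment of the degenerate case $K=\emptyset$ is a small point the paper leaves implicit.
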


We now provide a concrete instance satisfying the conditions of the lemma
above. Let $\Theta=\bar B(\bar\mu,R/2)\subset\mathbb R^d$ be the closed ball of
radius $R/2$ centered at $\bar\mu$, and suppose $\mu_1,\dots,\mu_k\in\Theta$. Let
the frequency vectors $t_l$, $1\le l\le L$, be constructed as in
Proposition~\ref{prop:lower-bound-sampling}. Fix any $\mu\in\Theta$ with
$\mu\ne\mu_j$ for all $1\le j\le k$, and apply
Proposition~\ref{prop:lower-bound-sampling} to the collection
$\{\mu,\mu_1,\dots,\mu_k\}$: the $(k+1)$-th singular value of
\[
    \bigl(\mathbf a_L(\mu),\,\mathbf a_L(\mu_1),\,\dots,\,\mathbf a_L(\mu_k)\bigr)
\]
then obeys a lower bound analogous to~\eqref{eq:sigmak-Phi}. In particular this
singular value is strictly positive, so $\mathbf a_L(\mu),\mathbf a_L(\mu_1),\dots,
\mathbf a_L(\mu_k)$ are linearly independent and the required full-rank condition
holds.

With these preparations, we are ready to state our main result on the
localization property of the score-based initialization.

\begin{theorem}[Score-based localization]\label{thm:basin}
Suppose that the conditions of Lemmas~\ref{lem:local_regularity}
and~\ref{lem:global_coercivity} hold. Write
$\sigma_{\min}:=\min_{1\le j\le k}\sigma_{\min}(T_j)$, and let
$\Delta:=\min_{l\ne l'}\norm{\mu_l-\mu_{l'}}_2$. Define
\[
   \rho_1\;:=\;\min\Bigl\{\,\tfrac{\rho_0}{2},\
             \tfrac{\Delta}{3},\ \tfrac{2J_\star}{\sigma_{\min}}\,\Bigr\},
   \qquad
   \mathcal B_l\;:=\;\overline B(\mu_l,\rho_1)
       \ \subset\ \tilde{\mathcal N}_l .
\]
Fix $\tau\in(0,\tfrac12\sigma_{\min}\rho_1)$ and $\delta\in(0,1)$. Let $n_{\rm reg}(\delta/2)$ be the regularity threshold from Lemma~\ref{lem:local_regularity}, applied with $\delta$ replaced by $\delta/2$. Then there exists
\[
   n_0
   =
   n_{\rm reg}(\delta/2)\ \vee\
   O\!\left(
      \frac{L^3\kappa_\Sigma
             \ln\!\bigl(L(M+1)/\delta\bigr)}
           {\sigma_k(C)^2
             \bigl(\tfrac12\sigma_{\min}\rho_1-\tau\bigr)^2}
   \right),
\]
such that, for all $n\ge n_0$, with
probability at least $1-\delta$,
\begin{equation}\label{eqn:score_threshold}
   x_i\in\Theta,\quad s(x_i)\ge L-\tau^2
\end{equation}
implies that there exists one and only one $j$ such that $\ x_i\in\mathcal B_j\subset\tilde{\mathcal N}_j$.
In particular, $\|x_i-\mu_j\|_2\le\rho_1$, and $f=\hat J^2$ is
$\tfrac12m_j$-strongly convex and $2M_j$-smooth on $\tilde{\mathcal N}_j$.
\end{theorem}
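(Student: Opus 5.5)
The proof works on the intersection of two events, each of probability at least $1-\delta/2$. The first is the regularity event of Lemma~\ref{lem:local_regularity}, applied with $\delta/2$; it gives strong convexity and smoothness of $f$ on each $\tilde{\mathcal N}_j$. The second is the event $\|\hat E\|_2\le\eta_n$ from Proposition~\ref{prop:covariance-perturbation}, also applied with $\delta/2$. On the second event, Proposition~\ref{prop:imaging-stability} gives a uniform bound $\sup_\mu|\hat J(\mu)-J(\mu)|\le \epsilon_n:=2\sqrt L\,\eta_n/(\sigma_k-\eta_n)$. We choose $n_0$ so that $\eta_n\le\sigma_k/2$ and $\epsilon_n\le \tfrac12\sigma_{\min}\rho_1-\tau$. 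This requires $\eta_n\lesssim \sigma_k(\tfrac12\sigma_{\min}\rho_1-\tau)/\sqrt L$. Since $\eta_n\asymp L\sqrt{\kappa_\Sigma\ln(4N/\delta)/n}$, squaring gives exactly $n\gtrsim L^3\kappa_\Sigma\ln(L(M+1)/\delta)/(\sigma_k^2(\tfrac12\sigma_{\min}\rho_1-\tau)^2)$, which is the stated order of $n_0$.

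Now take $x_i\in\Theta$ with $s(x_i)\ge L-\tau^2$. By the identity $s=L-\hat J^2$, this means $\hat J(x_i)\le\tau$, so $J(x_i)\le \tau+\epsilon_n\le\tfrac12\sigma_{\min}\rho_1$. Because $\rho_1\le 2J_\star/\sigma_{\min}$, this gives $J(x_i)\le J_\star$. Lemma~\ref{lem:global_coercivity} then places $x_i$ in some $\tilde{\mathcal N}_j=B(\mu_j,\rho_0)$.

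The main step is to upgrade membership in $\tilde{\mathcal N}_j$ to membership in $\mathcal B_j$, i.e.\ $\|x_i-\mu_j\|\le\rho_1$. For this we need a local lower bound $J(\mu)\ge \tfrac12\sigma_{\min}(T_j)\|\mu-\mu_j\|$ on $B(\mu_j,\rho_0)$, sharper than the generic constant $\kappa$ of Proposition~\ref{prop:local coercivity condition}. I would derive it from a Taylor expansion. Write $\mathcal P_{U_2}\mathbf a_L(\mu)=T_j(\mu-\mu_j)+\text{remainder}$. Then $J(\mu)\ge\sigma_{\min}(T_j)\|\mu-\mu_j\|-O(\|\mu-\mu_j\|^2)$. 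The radius $\rho_{0,j}$ in Lemma~\ref{lem:local_regularity} is presumably chosen so that the quadratic remainder is at most half the linear term, since the Hessian bounds $\tfrac12 m_j\preceq\nabla^2 f$ come from the same expansion. If that is not directly available, I would instead use $\nabla^2 J^2\succeq \tfrac12 m_j$ on the population side: integrating from the minimizer $\mu_j$, where $J^2=0$ and $\nabla J^2=0$, gives $J^2(\mu)\ge \tfrac14 m_j\|\mu-\mu_j\|^2=\tfrac12\sigma_{\min}(T_j)^2\|\mu-\mu_j\|^2$, i.e.\ $J\ge\sigma_{\min}(T_j)\|\mu-\mu_j\|/\sqrt2$. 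Either route gives $\sigma_{\min}\|x_i-\mu_j\|/2\le J(x_i)\le\sigma_{\min}\rho_1/2$, hence $\|x_i-\mu_j\|\le\rho_1$. Pinning down this constant rigorously is the obstacle I expect; the threshold $\tfrac12\sigma_{\min}\rho_1$ in the statement suggests the factor $\tfrac12$ is intended.

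Uniqueness of $j$ follows because $\rho_1\le\Delta/3$ makes the balls $\mathcal B_l$ pairwise disjoint. Finally, $\mathcal B_j\subset\tilde{\mathcal N}_j$ since $\rho_1\le\rho_0/2$, so on the regularity event $f$ is $\tfrac12 m_j$-strongly convex and $2M_j$-smooth there. The union bound gives total probability at least $1-\delta$.
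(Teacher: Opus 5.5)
Your proposal is correct and follows essentially the same route as the paper. Both use the same two events with a $\delta/2$ split and the chain $J(x_i)\le\tau+\varepsilon\le\tfrac12\sigma_{\min}\rho_1\le J_\star$, then Lemma~\ref{lem:global_coercivity}, then the local bound $J(\mu)\ge\tfrac12\sigma_{\min}(T_j)\|\mu-\mu_j\|_2$ on $\tilde{\mathcal N}_j$, and finally disjointness from $\rho_1\le\Delta/3$. The paper takes the local bound exactly as you guessed, from Step~1 of the proof of Proposition~\ref{prop:local coercivity condition} (there $\kappa=\tfrac12\min_j\sigma_{\min}(T_j)$ and $\overline B(\mu_j,\rho_{0,j})\subset\mathcal N_j$), so your first route is the paper's, and your population-Hessian integration alternative also works. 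The only bookkeeping the paper adds is checking that your extra requirement $\eta_n\le\sigma_k/2$ and the $1/n$ term of $\eta_n$ are absorbed in the stated order; this follows from $\tfrac12\sigma_{\min}\rho_1-\tau<\tfrac12\sqrt L$ and $\sigma_k(C)\le L$.
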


\medskip
\noindent\emph{Practical interpretation.}
The threshold in Theorem~\ref{thm:basin} is a population-level certificate:
it depends on latent quantities such as $\sigma_{\min}$, $J_\star$, and
$\Delta$. In applications one may rank the observed scores without knowing
these quantities, but the theorem applies only to candidates whose scores
cross the stated threshold and that lie in the prescribed search set
$\Theta$.

The next lemma makes explicit the sample requirement for obtaining a sample that passes the score threshold.

\begin{lemma}[Existence of a sample passing the score threshold]\label{lem:score_hit}
Let $x_1,\dots,x_n$ be i.i.d.\ from the mixture model \eqref{eqn:model setting}. Let 
$\mu^*$ be the target mean with weight $w^*$, and let $\tau>0$ be a constant. Define
\[
    L_a:=\left(\sum_{l=1}^L\|t_l\|_2^2\right)^{1/2}>0,
    \qquad
    r_{\rm cert}:=\frac{\tau}{2L_a}, 
\]
where $t_l$, $1\leq l \leq L$, are the frequency vectors. On the event
\[
    \|\hat J-J\|_\infty\le\frac{\tau}{2},
\]
every $x\in B(\mu^*,r_{\rm cert})$ satisfies the score threshold
\eqref{eqn:score_threshold}. Moreover, for any 
$\delta\in(0,1)$, whenever
\begin{equation}\label{eqn:hit_n}
    n \;\ge\; \frac{\log(1/\delta)}{w^*\,p_{r_{\rm cert}}},
\end{equation}
where $p_{r_{\rm cert}}:=\Pr_{x\sim N(\mu^*,\Sigma)}\!\bigl(\norm{x-\mu^*}_2\le r_{\rm cert}\bigr)$, 
some $x_i$ lies in $B(\mu^*,r_{\rm cert})$ with probability at least $1-\delta$.
On the intersection with the event $\|\hat J-J\|_\infty\le\tau/2$, this sample satisfies the condition \eqref{eqn:score_threshold}.
\end{lemma}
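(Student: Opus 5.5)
The argument has two parts: a deterministic Lipschitz estimate for $J$, and a binomial (coupon-type) bound on the number of samples.

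\emph{Step 1 (certified ball passes the threshold).} Since $\|\mathbf a_L(x)\|_2^2=L$, the threshold $s(x)\ge L-\tau^2$ is equivalent to $\hat J(x)\le\tau$. We have $J(\mu^*)=0$. We show that $J$ is $L_a$-Lipschitz, using that $\mathcal P_{U_2}$ is a contraction:
\[
J(x)=\|\mathcal P_{U_2}(\mathbf a_L(x)-\mathbf a_L(\mu^*))\|_2\le\|\mathbf a_L(x)-\mathbf a_L(\mu^*)\|_2 .
\]
Componentwise, $|e^{\iota\langle x,t_l\rangle}-e^{\iota\langle \mu^*,t_l\rangle}|\le|\langle x-\mu^*,t_l\rangle|\le\|t_l\|_2\|x-\mu^*\|_2$, because $|e^{\iota a}-e^{\iota b}|\le|a-b|$. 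Summing the squares gives
\[
\|\mathbf a_L(x)-\mathbf a_L(\mu^*)\|_2\le L_a\|x-\mu^*\|_2 .
\]
Hence $J(x)\le L_a r_{\rm cert}=\tau/2$ on $B(\mu^*,r_{\rm cert})$. On the event $\|\hat J-J\|_\infty\le\tau/2$ this yields $\hat J(x)\le\tau$, i.e.\ $s(x)\ge L-\tau^2$.

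The statement's threshold \eqref{eqn:score_threshold} also includes $x_i\in\Theta$. I will note that this holds automatically when $B(\mu^*,r_{\rm cert})\subset\Theta$, for instance when $r_{\rm cert}$ is at most the distance from $\mu^*$ to $\partial\Theta$. Otherwise I will interpret the claim as the score condition. This bookkeeping point is the one place where care is needed.

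\emph{Step 2 (existence of a hitting sample).} Each $x_i$ is drawn by first choosing a component, then sampling from it. Restricting to the target component, we get
\[
\Pr\bigl(x_i\in B(\mu^*,r_{\rm cert})\bigr)\ge w^*\,\Pr_{x\sim N(\mu^*,\Sigma)}\bigl(\|x-\mu^*\|_2\le r_{\rm cert}\bigr)=w^*p_{r_{\rm cert}}=:p .
\]
By independence, the probability that no sample hits the ball is at most
\[
(1-p)^n\le e^{-np}.
\]
This is at most $\delta$ exactly when $n\ge\log(1/\delta)/(w^*p_{r_{\rm cert}})$, which is \eqref{eqn:hit_n}.

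\emph{Step 3 (combine).} The statement only asserts the result on the intersection of the two events, so Step 1 applied to the hitting sample from Step 2 finishes the proof. If a joint probability is wanted, a union bound gives at least $1-\delta-\Pr(\|\hat J-J\|_\infty>\tau/2)$. The latter probability can be controlled through Proposition~\ref{prop:imaging-stability} and Proposition~\ref{prop:covariance-perturbation}; this step is not needed for the lemma as stated.

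There is no real obstacle; the only subtle point is the Lipschitz bound through the projection.
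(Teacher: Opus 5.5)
Your proof is correct and follows the paper's argument: an $L_a$-Lipschitz bound for the steering map through a contractive projector, then the hitting bound $(1-w^*p_{r_{\rm cert}})^n\le e^{-nw^*p_{r_{\rm cert}}}\le\delta$. The only cosmetic difference is that you apply the Lipschitz estimate to $J$ via $\mathcal P_{U_2}$ and then pass to $\hat J$, whereas the paper first bounds $\hat J(\mu^*)\le\tau/2$ and applies the estimate to $\hat J$ via $\mathcal P_{\hat U_2}$. Your remark that $x_i\in\Theta$ in \eqref{eqn:score_threshold} is not automatic is a fair point, which the paper's proof also leaves implicit.
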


\medskip
\noindent\emph{Dimension dependence of sample-based initialization.}
For a spherical covariance $\Sigma=\sigma^2I_q$,
\[
    p_{r_{\rm cert}}
    =\Pr\!\left(\chi_q^2\le\frac{r_{\rm cert}^2}{\sigma^2}\right),
\]
where $q=d$ in the case $k\ge d$ and $q=k$ after the PCA reduction in the case
$k<d$. If
\[
    a:=\frac{r_{\rm cert}^2}{q\sigma^2}\in(0,1),
\]
then the lower-tail bound for a chi-square random variable gives
\[
    p_{r_{\rm cert}}
    \le
    \exp\!\left\{-\frac q2\bigl(a-1-\log a\bigr)\right\}.
\]
Consequently, the sample size in \eqref{eqn:hit_n} may grow exponentially with
the effective dimension; for fixed $r_{\rm cert}/\sigma$, its reciprocal grows
as $\exp\{\Theta(q\log q)\}$. This dimension dependence concerns the cost of
finding a certified initialization and does not affect the conditional linear
convergence rate once such an initialization is available.

To state the end-to-end requirement, set
\[
    \varepsilon_{\rm cert}
    :=\min\left\{\frac{\tau}{2},
       \frac12\sigma_{\min}\rho_1-\tau\right\}>0,
\]
and let
\[
    n_{\rm cert}(\delta)
    :=n_{\rm reg}(\delta/2)\ \vee\
    O\!\left(
       \frac{L^3\kappa_\Sigma
              \ln\!\bigl(L(M+1)/\delta\bigr)}
            {\sigma_k(C)^2\varepsilon_{\rm cert}^2}
    \right).
\]
By the covariance-concentration and imaging-stability bounds used in
Theorem~\ref{thm:basin}, $n\ge n_{\rm cert}(\delta)$ ensures, with probability
at least $1-\delta$, both the simultaneous regularity event and
$\|\hat J-J\|_\infty\le\varepsilon_{\rm cert}$.
Combining Theorem~\ref{thm:basin} with the preceding lemma and allocating
failure probability $\delta/2$ to each event, the sufficient end-to-end sample
condition is
\begin{equation}\label{eqn:end-to-end-initialization}
    n\ge
    n_{\rm cert}(\delta/2)
    \ \vee\ 
    \frac{\log(2/\delta)}{w^*p_{r_{\rm cert}}}.
\end{equation}
Under \eqref{eqn:end-to-end-initialization}, with probability at least
$1-\delta$, a qualifying sample initialization exists, lies in a certified
attraction basin, and satisfies the hypotheses of Theorem~\ref{thm:gd_convergence}.

\medskip

Let $x_i$ be a sample from the mixture model \eqref{eqn:model setting}. For a score-selected initialization $\mu^{(0)}=x_i$, consider the iteration
\[
    \mu^{(t+1)}=\mu^{(t)}-\gamma\nabla f(\mu^{(t)}),
    \qquad f=\hat J^2.
\]
We have the following main result on the convergence of score-based initialization. 

\begin{theorem}[Linear convergence from a score-selected initialization]
\label{thm:gd_convergence}
Let $\Theta=\bar B(\bar\mu, R/2)$ be the search domain satisfying the condition
in Lemma~\ref{lem:global_coercivity}, and let $\tau$ be as defined in Theorem \ref{thm:basin}. Under the setting of
Theorem~\ref{thm:basin}, let $x_i\in\Theta$ satisfy $s(x_i)\ge L-\tau^2$, and
let $j$ be the unique index identified by~\eqref{eqn:score_threshold}. Let $\hat\mu_j$ be the minimizer
from Lemma~\ref{lem:local_regularity}, and further assume that
\begin{equation}\label{eqn:gd_interior}
    \rho_1+2\norm{\hat\mu_j-\mu_j}_2<\rho_0 .
\end{equation}
If $0<\gamma\le 1/(2M_j)$, then every iterate remains in $\tilde{\mathcal N}_j$ (defined as in Lemma \ref{lem:global_coercivity}) and
\begin{equation}\label{eqn:linear_rate}
    \norm{\mu^{(t)}-\hat\mu_j}_2
    \le
    \Bigl(1-\tfrac12\gamma m_j\Bigr)^t
    \norm{\mu^{(0)}-\hat\mu_j}_2 .
\end{equation}
\end{theorem}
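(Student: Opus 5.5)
The argument is the standard contraction estimate for gradient descent on a strongly convex, smooth function, applied on the ball $\tilde{\mathcal N}_j=B(\mu_j,\rho_0)$. The one nonstandard point is to show by induction that the iterates never leave this ball.

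We work on the event of Theorem~\ref{thm:basin}, which contains the regularity event of Lemma~\ref{lem:local_regularity}. On this event, $\tfrac12 m_j I_d\preceq\nabla^2 f\preceq 2M_j I_d$ on $\tilde{\mathcal N}_j$, and $\hat\mu_j\in\tilde{\mathcal N}_j$ is the unique minimizer there, so $\nabla f(\hat\mu_j)=0$. Theorem~\ref{thm:basin} also gives $\|\mu^{(0)}-\mu_j\|_2\le\rho_1$, so $\mu^{(0)}\in\tilde{\mathcal N}_j$.

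The core step is a one-step contraction. Suppose $\mu^{(t)}\in\tilde{\mathcal N}_j$. The ball is convex and contains $\hat\mu_j$, so the segment between them lies in it. Write
\[
\nabla f(\mu^{(t)})=H_t(\mu^{(t)}-\hat\mu_j),\qquad H_t:=\int_0^1\nabla^2 f\bigl(\hat\mu_j+s(\mu^{(t)}-\hat\mu_j)\bigr)\,ds .
\]
Then $H_t$ is symmetric with $\tfrac12 m_j I_d\preceq H_t\preceq 2M_j I_d$. The update gives $\mu^{(t+1)}-\hat\mu_j=(I-\gamma H_t)(\mu^{(t)}-\hat\mu_j)$. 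For $0<\gamma\le 1/(2M_j)$ the eigenvalues of $I-\gamma H_t$ lie in $[0,1-\tfrac12\gamma m_j]$. Hence
\[
\|\mu^{(t+1)}-\hat\mu_j\|_2\le(1-\tfrac12\gamma m_j)\|\mu^{(t)}-\hat\mu_j\|_2 .
\]

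The remaining step is invariance, which I expect to be the only delicate point, because the contraction above is available only while the iterate stays in $\tilde{\mathcal N}_j$. The induction hypothesis is $\mu^{(t)}\in\tilde{\mathcal N}_j$ together with $\|\mu^{(t)}-\hat\mu_j\|_2\le\|\mu^{(0)}-\hat\mu_j\|_2$. The distances to $\hat\mu_j$ are nonincreasing, so for every $t$
\[
\|\mu^{(t)}-\mu_j\|_2\le\|\mu^{(t)}-\hat\mu_j\|_2+\|\hat\mu_j-\mu_j\|_2\le\|\mu^{(0)}-\hat\mu_j\|_2+\|\hat\mu_j-\mu_j\|_2\le\rho_1+2\|\hat\mu_j-\mu_j\|_2 .
\]
By \eqref{eqn:gd_interior} this is $<\rho_0$, so $\mu^{(t+1)}\in\tilde{\mathcal N}_j$ and the induction closes. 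Iterating the one-step contraction then yields \eqref{eqn:linear_rate}.
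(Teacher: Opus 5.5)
Your proof is correct and is essentially the paper's argument. The integrated Hessian $H_\mu$ with $\nabla f(\mu)=H_\mu(\mu-\hat\mu_j)$ gives the contraction factor $1-\tfrac12\gamma m_j$, and the bound $\|\mu-\mu_j\|_2\le\rho_1+2\|\hat\mu_j-\mu_j\|_2<\rho_0$ gives invariance; the paper phrases your induction as the ball $\overline B(\hat\mu_j,\|\mu^{(0)}-\hat\mu_j\|_2)$ lying inside $\tilde{\mathcal N}_j$ and being mapped into itself by the gradient step.
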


The procedure for estimating the means is detailed in Algorithm \ref{algo: mean estimation 1}. The algorithm runs gradient descent from several starting points, which may initially appear costly. However, the starting points are selected so that only a few iterations are needed for convergence. To illustrate this, Figure \ref{fig:illu algo} presents a two-dimensional example. We draw $500$ samples from the mixture model and run gradient descent from the $25$ samples with the largest scores. After only $5$ iterations with learning rate $0.5$, these points converge to neighborhoods of the three component centers.
\begin{figure}[!htbp]
    \centering
    \includegraphics[width=1.0\linewidth]{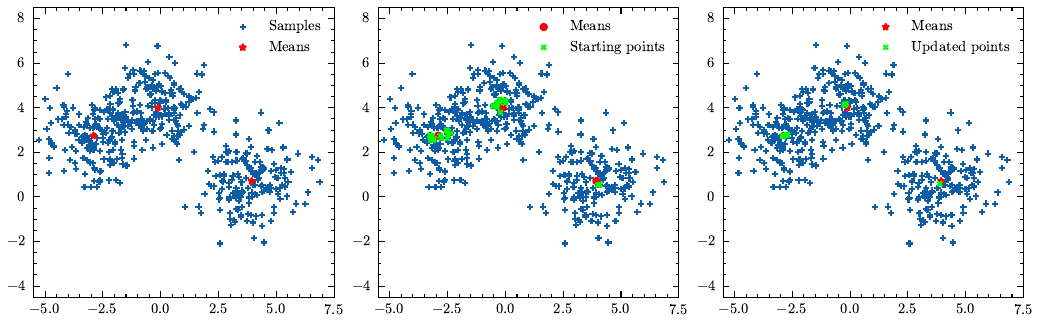}
    \caption{Illustration of Algorithm \ref{algo: mean estimation 1}. Left: $500$ samples (blue) drawn from a three-component mixture model with centers $(3.94,0.72)$, $(-0.12,4.00)$, and $(-2.91,2.75)$ (red). Middle: the $25$ starting points (green) with the largest scores $s(x)$. Right: the updated points after $5$ iterations of \eqref{eqn:gradient descent} with $\gamma=0.5$ (green).}
    \label{fig:illu algo}
\end{figure}
\FloatBarrier

Although the primary focus of this section is mean estimation, the full mixing distribution can be estimated once the component means are available. Given the mean estimates $\{\hat\mu_i\}_{i=1}^k$, we estimate the mixing weights by solving a constrained quadratic program that minimizes the squared discrepancy between the covariance-compensated empirical characteristic function \eqref{eqn:modulated Fourier} and $\sum_{i=1}^k w_i \exp(\iota \innerproduct{\hat \mu_i}{t})$. The quadratic program is as follows:
    \begin{align} \label{eqn:weight solver}
            &\text{minimize } \sum_{l=1}^L \sum_{m=0}^M \left|\sum_{i=1}^k w_i e^{\iota\innerproduct{\hat \mu_i}{t_l + v_m}} - \hat y_n(t_l + v_m)\right|^2, \nonumber \\
            &\text{subject to } w_i \geq 0, \quad \sum_{i=1}^{k} w_i = 1. 
    \end{align}

Here $r_{\mathrm{dup}}\geq0$ denotes a user-specified de-duplication radius: a newly obtained local minimizer is retained only if it lies more than $r_{\mathrm{dup}}$ from every previously retained estimate.

    \begin{algorithm}
    \label{algo: mean estimation 1}
        \caption{Mean estimation for GMMs $(k\geq d)$}
        \Input{samples $\{x_j\}_{j=1}^n$, model order $k$, measurement points $\{t_i\}_{i=1}^L$, translation vectors $\{v_m\}_{m=0}^M$, covariance matrix $\Sigma$, de-duplication radius $r_{\mathrm{dup}}$.}
        \For{$m = 0,1, \cdots, M$}{
            \For{$l = 1, \cdots, L$}{
                $[\hat{\vy}_m]_l \gets
                \exp\!\left(\frac{(t_l+v_m)^\mathrm{T}\Sigma(t_l+v_m)}{2}\right)
                \frac{1}{n}\sum_{j=1}^n
                \exp\!\left(\iota\innerproduct{x_j}{t_l+v_m}\right)$\;
            }
        }
        $\hat{C} \gets \frac{1}{M+1} \sum_{m=0}^M \hat{\vy}_m \hat{\vy}_m^*$ \;
        $\hat{U}_1 \gets $ matrix formed by the first $k$ left singular vectors of $\hat C$\;
        $\{x_{s_j}\}_{j=1}^n \gets$ samples sorted in decreasing order of $s(x_j)$\;
        $j \gets 0,\  \mathcal{S}\gets \{\}$\;
        \While{$\lvert\mathcal S\rvert<k$}
            {
            $j \gets j+1$\;
            
            $\hat{\mu} \gets $ scheme \eqref{eqn:gradient descent} with $\mu^{(0)} = x_{s_j}$\;
            \If{$\norm{\hat{\mu} - \tilde\mu}_2 > r_{\mathrm{dup}}$ \normalfont{for all} $\tilde\mu \in \mathcal{S}$}
                {
                $\mathcal{S} \gets \mathcal{S} \cup \{\hat{\mu}\}$
                }
            }
        \Output{A set of estimated centers $\mathcal{S}$.}
    \end{algorithm}

\clearpage
\subsection{Case 2: Learning GMMs When $k < d$}
Throughout this subsection, we consider a common spherical covariance
$\Sigma=\sigma^2I_d$, where $\sigma>0$. For a known general common covariance
$\Sigma_0\succ0$, we first whiten the observations according to
\[
    z_j=\Sigma_0^{-1/2}x_j,
    \qquad
    z_j\sim\sum_{i=1}^k w_i
    \mathcal N\!\left(\Sigma_0^{-1/2}\mu_i,I_d\right).
\]
The procedure below is then applied to $\{z_j\}_{j=1}^n$, and an estimated
mean $\hat\mu_i^{\mathrm w}$ in the whitened coordinates is mapped back to the
original coordinates by $\hat\mu_i=\Sigma_0^{1/2}\hat\mu_i^{\mathrm w}$. Thus, the spherical
assumption in this subsection concerns the PCA reduction and does not restrict
the general common-covariance formulation considered elsewhere in the paper.

When $k < d$, the centers $\mu_1, \mu_2, \cdots, \mu_k$ span a subspace $V$ in $\R^d$ of dimension at most $k$. We can then estimate the $\{\mu_i\}_{i=1}^k$ by first estimating their projection onto this subspace. By projecting the samples onto the estimated subspace $\hat V$, we can estimate the projection of centers by Algorithm \ref{algo: mean estimation 1}. Throughout this subsection, PCA refers to uncentered PCA: no sample mean is subtracted, and the SVD is applied directly to the data matrix
    \[
        X = \begin{bmatrix}
            x_1 & x_2 & \cdots & x_n
        \end{bmatrix}^\mathrm{T} \in \R^{n\times d}.
    \]
The estimation is based on the following proposition for the uncentered second-moment matrix:
\begin{proposition}
\label{prop: expectation cov}
    If $\Sigma = \sigma^2I_d$, then
    $$\E \left[\frac{1}{n}X^\mathrm{T}X\right] = \sum_{i=1}^k w_i \mu_i \mu_i^T + \sigma^2 I_d.$$
\end{proposition}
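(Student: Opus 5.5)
The identity follows by linearity of expectation from the second moment of a single draw from the mixture. Since $X^\mathrm{T}X=\sum_{j=1}^n x_jx_j^\mathrm{T}$ and the $x_j$ are identically distributed,
\[
\E\left[\tfrac1n X^\mathrm{T}X\right]=\E[x x^\mathrm{T}],
\]
where $x$ is a single draw from the mixture \eqref{eqn:model setting} with $\Sigma=\sigma^2 I_d$. Independence of the samples is not needed here, only that they share the same distribution.

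To compute $\E[xx^\mathrm{T}]$, introduce the latent label $Z\in\{1,\dots,k\}$ with $\Pr(Z=i)=w_i$. Conditionally on $Z=i$ we have $x\sim\mathcal N(\mu_i,\sigma^2I_d)$. Writing $x=\mu_i+\xi$ with $\xi\sim\mathcal N(0,\sigma^2 I_d)$ gives
\[
\E[xx^\mathrm{T}\mid Z=i]=\mu_i\mu_i^\mathrm{T}+\mu_i\E[\xi]^\mathrm{T}+\E[\xi]\mu_i^\mathrm{T}+\E[\xi\xi^\mathrm{T}]=\mu_i\mu_i^\mathrm{T}+\sigma^2 I_d.
\]
The cross terms vanish because $\E[\xi]=0$.

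By the law of total expectation,
\[
\E[xx^\mathrm{T}]=\sum_{i=1}^k w_i\bigl(\mu_i\mu_i^\mathrm{T}+\sigma^2I_d\bigr).
\]
Since $\sum_{i=1}^k w_i=1$, the noise term collapses to $\sigma^2 I_d$, which is the claimed formula.

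The only point needing care is that the PCA is uncentered. As a result, no sample-mean correction or rank-one term $\bar\mu\bar\mu^\mathrm{T}$ appears, and the formula holds exactly for every $n$. There is no real obstacle. The computation uses only the existence of the second moments, which holds for Gaussians.
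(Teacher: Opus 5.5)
Your proof is correct. The paper gives no separate proof of this proposition and simply uses the identity $M=A+\sigma^2 I_d$ in the proof of Proposition~\ref{prop: pca error}; your computation (linearity over the identically distributed rows, then conditioning on the latent label) is exactly the routine verification that use takes for granted.
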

We can estimate $V$ by $\hat V = \begin{bmatrix}
    v_1 & v_2 & \cdots & v_k
\end{bmatrix}$ where $v_1,\cdots,v_k$ are the first $k$ right singular vectors of $X$. The procedure of the estimation is detailed in Algorithm \ref{algo: mean estimation 2}.

    \begin{algorithm}
    \label{algo: mean estimation 2}
        \caption{Mean Estimation for GMMs $(k < d)$}
        \Input{samples $\{x_i\}_{i=1}^n$, model order $k$.}
        $v_1, \cdots, v_k \gets $ the first $k$ right singular vectors of $X = \begin{bmatrix}
            x_1 & x_2 & \cdots & x_n
        \end{bmatrix}^\mathrm{T}$\;
        $\hat V \gets \begin{bmatrix}
            v_1 & v_2 &\cdots & v_k
        \end{bmatrix}$ and $\bar{x}_j \gets \hat V^\mathrm{T} x_j$ for $j=1,\cdots, n$\;

        $\mathcal{S}\gets$ run Algorithm \ref{algo: mean estimation 1} with projected samples $\{\bar{x}_j\}_{j=1}^n$ and other proper inputs\;

        \Output{A set of estimated centers $\{\hat V\mu: \mu \in \mathcal{S}\}$}
    \end{algorithm}
After projection onto $\hat{V}$, the samples $\{\bar{x}_j\}_{j=1}^n$ follow a GMM with distribution $$\sum_{i=1}^{{k}} w_i \mathcal{N}\left(\hat V^\mathrm{T}\mu_i, \sigma^2 I_k\right).$$
We estimate the means of this $k$-dimensional GMM by Algorithm \ref{algo: mean estimation 1} and denote them by $\hat{\eta}_i$, $i=1,\ldots,k$. The corresponding $d$-dimensional estimates are $\hat V\hat{\eta}_i$. To quantify their accuracy, we first bound the error between $\mu_i$ and $\hat V\hat V^\mathrm{T}\mu_i$ introduced by dimension reduction:
\begin{proposition}
    \label{prop: pca error}
Suppose that $\Sigma=\sigma^2I_d$ with $\sigma>0$, and set$R:=\max_{1\le i\le k}\norm{\mu_i}_2.$
Let $\hat V\in\R^{d\times k}$ contain the first $k$ right singular vectors
of the uncentered data matrix $X$. Then there exists a universal constant
$C>0$ such that, for every $\delta\in(0,1)$, with probability at least
$1-\delta$,
\begin{align}
    \sum_{i=1}^k w_i
    \norm{\mu_i-\hat V\hat V^\mathrm T\mu_i}_2^2
    \le{}& Ck\frac{(R+\sigma)^2(R^2+\sigma^2)}{\sigma^2}
    \left(
        \sqrt{\frac{d+\ln(1/\delta)}{n}}
        +\frac{d+\ln(1/\delta)}{n}
    \right).
    \label{eqn:pca-error-bound}
\end{align}
\end{proposition}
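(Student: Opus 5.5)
The plan is to treat the PCA error as a perturbation problem around the population second-moment matrix. Write $S:=\sum_{i=1}^k w_i\mu_i\mu_i^\mathrm T$, so that $\Gamma:=\E[\frac1n X^\mathrm TX]=S+\sigma^2I_d$ by Proposition~\ref{prop: expectation cov}, and let $\hat\Gamma:=\frac1n X^\mathrm TX$. The columns of $\hat V$ are the top-$k$ eigenvectors of $\hat\Gamma$. The key algebraic observation is that the left-hand side of \eqref{eqn:pca-error-bound} equals
\[
\sum_i w_i\|(I-\hat V\hat V^\mathrm T)\mu_i\|_2^2=\operatorname{tr}\bigl((I-\hat V\hat V^\mathrm T)S\bigr)=\operatorname{tr}\bigl((I-\hat V\hat V^\mathrm T)\Gamma\bigr)-\sigma^2(d-k).
\]
So I need to compare the residual variance captured outside $\hat V$ with its optimum. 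Since $S$ has rank at most $k$, the optimal rank-$k$ projector $P$ onto the top eigenspace of $\Gamma$ satisfies $\operatorname{tr}((I-P)\Gamma)=\sigma^2(d-k)$. The error is therefore exactly the excess $\operatorname{tr}((I-\hat V\hat V^\mathrm T)\Gamma)-\operatorname{tr}((I-P)\Gamma)$.

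Next I bound this excess by a standard empirical-risk argument. Because $\hat V$ maximizes $\operatorname{tr}(Q\hat\Gamma)$ over rank-$k$ projectors $Q$,
\[
\operatorname{tr}(P\Gamma)-\operatorname{tr}(\hat V\hat V^\mathrm T\Gamma)\le \operatorname{tr}\bigl((P-\hat V\hat V^\mathrm T)(\Gamma-\hat\Gamma)\bigr)\le 2k\|\hat\Gamma-\Gamma\|_2,
\]
using $|\operatorname{tr}(QA)|\le k\|A\|_2$ for a rank-$k$ projector $Q$. This produces the factor $k$ in the statement and reduces everything to an operator-norm concentration bound for the uncentered sample second-moment matrix.

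The remaining, and main, step is to show that with probability $1-\delta$,
\[
\|\hat\Gamma-\Gamma\|_2\lesssim \frac{(R+\sigma)^2(R^2+\sigma^2)}{\sigma^2}\Bigl(\sqrt{\tfrac{d+\ln(1/\delta)}{n}}+\tfrac{d+\ln(1/\delta)}{n}\Bigr).
\]
I would whiten first. Write $x=\Gamma^{1/2}z$, where $z$ is isotropic ($\E zz^\mathrm T=I$). Its sub-Gaussian norm satisfies $\|z\|_{\psi_2}\lesssim \|\Gamma^{-1/2}\|_2(R+\sigma)\le (R+\sigma)/\sigma$, because $x=\mu_I+\sigma g$ with $\|\mu_I\|\le R$ is sub-Gaussian with norm $\lesssim R+\sigma$. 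The standard covariance estimation theorem for sub-Gaussian vectors (Vershynin, Thm.\ 4.6.1 via an $\epsilon$-net and Bernstein) then gives $\|\frac1n\sum z_jz_j^\mathrm T-I\|\lesssim K^2(\sqrt{(d+\ln(1/\delta))/n}+(d+\ln(1/\delta))/n)$ with $K^2=(R+\sigma)^2/\sigma^2$. Multiplying by $\|\Gamma\|_2\le R^2+\sigma^2$ yields the displayed bound, and the universal constant absorbs the factor 2.

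The delicate point is justifying the sub-Gaussian norm bound for the mixture: the centers are not zero-mean, and the whitened vector mixes a bounded part with a Gaussian. I would verify directly that for any unit $u$, $\langle z,u\rangle$ is the sum of a variable bounded by $R\|\Gamma^{-1/2}\|$ and a Gaussian of variance at most $\sigma^2\|\Gamma^{-1/2}\|^2\le1$. This gives the required $\psi_2$ bound.
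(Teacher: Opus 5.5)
Your proposal is correct and follows the paper's proof essentially step by step. Both use the same variational (Ky Fan) comparison to get $\sum_i w_i\|\mu_i-\hat V\hat V^\mathrm T\mu_i\|_2^2\le 2k\|\hat\Gamma-\Gamma\|_2$, and then the same sub-Gaussian second-moment concentration with $K\asymp(R+\sigma)/\sigma$ and $\|\Gamma\|_2\le R^2+\sigma^2$. The only difference is cosmetic: you whiten and apply the isotropic result (Vershynin, Thm.~4.6.1) directly, whereas the paper cites Thm.~4.7.1 and Remark~4.7.3, which are proved by exactly that whitening reduction; in either form the stated mean-zero hypothesis is not used, since the argument needs only $\E zz^\mathrm T=I$ and the $\psi_2$ bound you verify.
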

Proposition \ref{prop: pca error} shows that dimension reduction incurs little loss of accuracy when the sample size is sufficiently large. The estimation error satisfies
    \[
        \norm{\mu_i - \hat V {\hat{\eta}_i}}_2 \leq \norm{\mu_i - \hat V \hat V^\mathrm{T}\mu_i}_2 + \norm{\hat{V}^\mathrm{T} \mu_i - {\hat{\eta}_i}}_2,
    \]
where the second term is the estimation error from Algorithm \ref{algo: mean estimation 1}.
\section{Numerical Experiments}
\label{sec: numericals}

\subsection{Model Selection under Random-Data and Dirichlet-Weight EM Initialization}
\label{sec: dirichlet init}

We compare Algorithm \ref{algo: model selection} against several baselines for the same model selection task, with every baseline that requires fitting a Gaussian mixture model by EM initialized by a random-data-point-and-Dirichlet-weight scheme, described below.

\paragraph{Experimental settings.} We test equally weighted $k$-component GMMs for $k = 2, 3, 4$ in $\R^{10}$. The means form an antipodal pair of separation $\Delta$ for $k = 2$, a regular triangle of side length $\Delta$ for $k = 3$, and a regular tetrahedron of edge length $\Delta$ for $k = 4$. Every component has covariance matrix $I_{10}$. For each $(\log(n), \Delta)$ pair, we perform $24$ repeated trials and record the empirical success rate and running time of every method. The value of $\log(n)$ ranges over $16$ equally spaced points from $3.00$ to $5.00$ (increment $0.1333$), and $\Delta$ ranges over $16$ equally spaced points from $2.0$ to $7.0$ (increment $0.3333$). In Algorithm \ref{algo: model selection}, we generate $ck$ measurement points uniformly from the ball of radius $0.5$ ($c = 5$ for $k = 2$ and $c = 3$ for $k = 3, 4$), take the translation vectors to be $\pm e_1, \ldots, \pm e_{10}$, and determine the model order using $\varepsilon_* = 10^{-3}$.

\paragraph{Methods and initialization.} We compare Algorithm \ref{algo: model selection} (\emph{Proposed}) against five baselines: the covariance eigengap estimator (\emph{Cov.\ eigengap}), the penalized-likelihood criteria AIC, BIC, and ICL, and the sequential likelihood ratio test with $\chi^2$ calibration (LRT). Algorithm \ref{algo: model selection} and the covariance eigengap estimator are spectral methods and require no mixture fitting. The remaining four baselines each fit a tied-covariance Gaussian mixture model by EM, independently, for every candidate order $k = 1, \ldots, k_{\max}$, and select $k$ by their respective criterion.

Every EM fit is seeded as follows. For each of $n_{\text{init}} = 5$ independent restarts:
\begin{itemize}
    \item the $k$ initial component means are $k$ data points drawn uniformly at random, without replacement, from the sample;
    \item the initial mixing weights $(\pi_1, \ldots, \pi_k)$ are drawn from a symmetric Dirichlet distribution, $(\pi_1, \ldots, \pi_k) \sim \mathrm{Dir}(1, \ldots, 1)$, i.e.\ uniformly over the probability simplex;
    \item the initial (shared, tied) covariance matrix is set to the empirical covariance of the whole sample.
\end{itemize}
Among the $5$ restarts, the fit with the highest converged log-likelihood is kept. Each restart is run to the same EM stopping rule: iterations stop once the per-sample log-likelihood improves by less than $10^{-5}$, or after $5000$ iterations, whichever occurs first.

\paragraph{Results.} Figure \ref{fig: dirichlet success rate} shows the empirical success rate of every method over the full $(\log(n), \Delta)$ grid, and Figure \ref{fig: dirichlet success mean} the same success rate averaged over $\Delta \in [2, 7]$ and plotted against $\log(n)$ alone, which makes the ordering of the methods easier to read off directly. Algorithm \ref{algo: model selection} attains a high success rate over most of the grid, comparable to or exceeding the penalized-likelihood and likelihood ratio baselines, and its $\Delta$-averaged success rate in Figure \ref{fig: dirichlet success mean} increases steadily with $\log(n)$ for every $k$. The covariance eigengap estimator tracks Algorithm \ref{algo: model selection} closely in both figures.

Figure \ref{fig: dirichlet runtime} shows the corresponding mean running time per trial versus $\log(n)$, averaged over $\Delta$. The running time of Algorithm \ref{algo: model selection} is essentially flat in $n$ and consistently smaller than that of AIC, BIC, ICL, and the LRT, whose running time is dominated by the cost of $n_{\text{init}} = 5$ independent EM fits for each of the $k_{\max}$ candidate orders and grows with $n$ as EM requires more iterations to converge. The covariance eigengap estimator is the fastest method overall, though, unlike Algorithm \ref{algo: model selection}, it relies on the population covariance matrix being of the spiked form induced by affinely independent component means, a property specific to the center configurations tested here.

\begin{figure}[!htbp]
    \centering
    \subfigure[$k=2$]{
    \resizebox{0.31\textwidth}{!}{
        \includegraphics[]{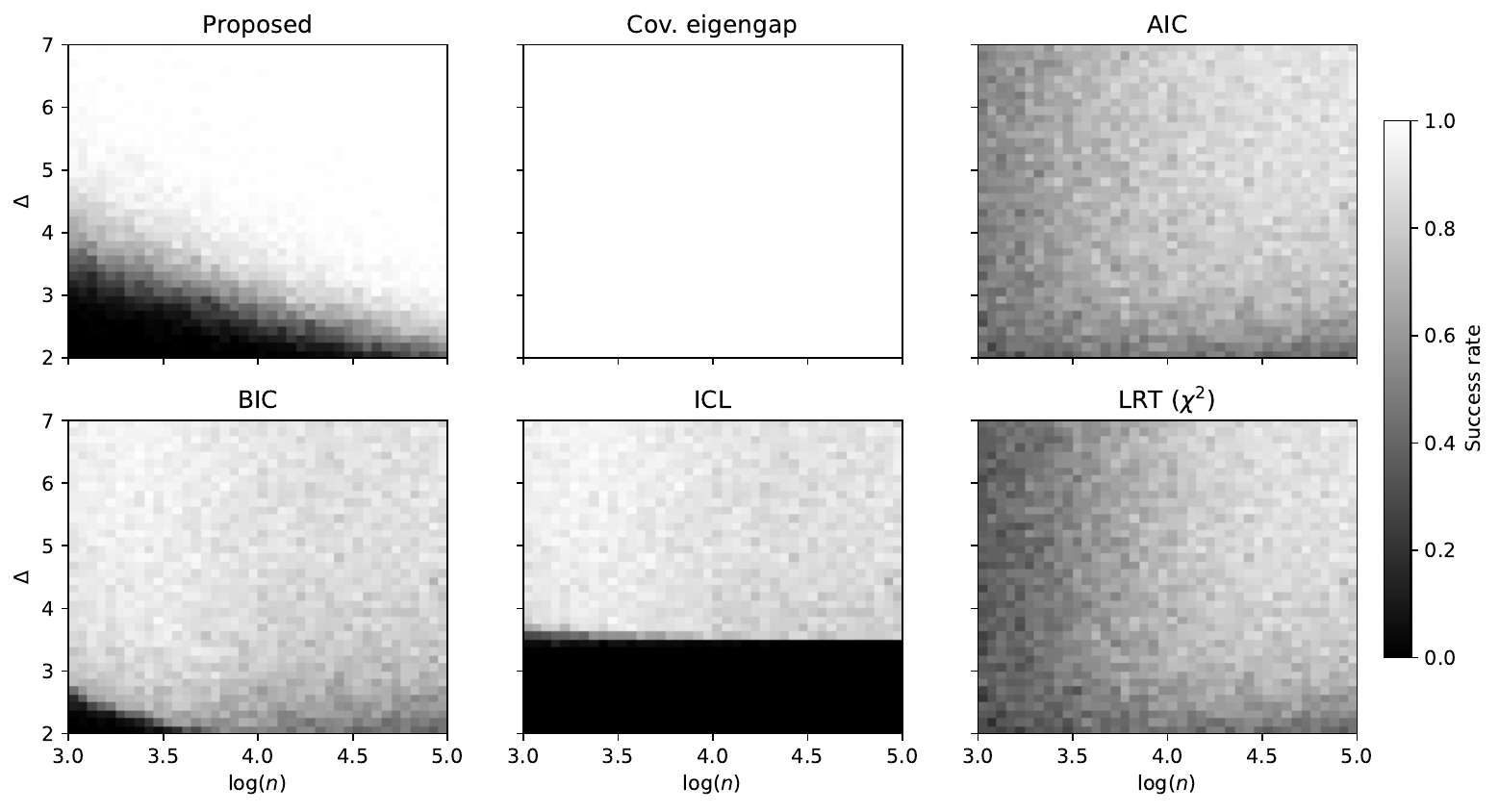}
    }
}
    \subfigure[$k=3$]{
    \resizebox{0.31\textwidth}{!}{
        \includegraphics[]{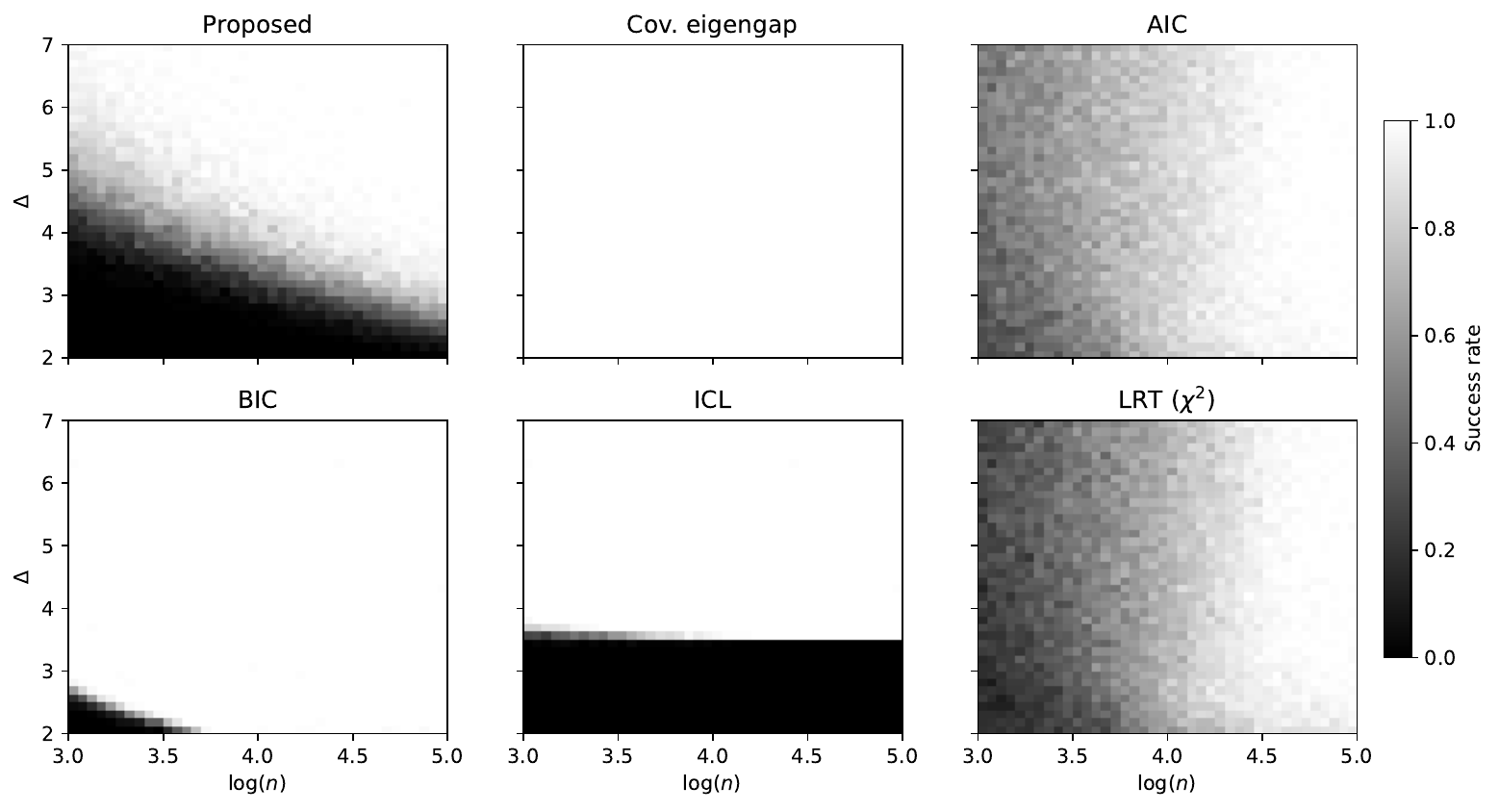}
    }
}
    \subfigure[$k=4$]{
    \resizebox{0.31\textwidth}{!}{
        \includegraphics[]{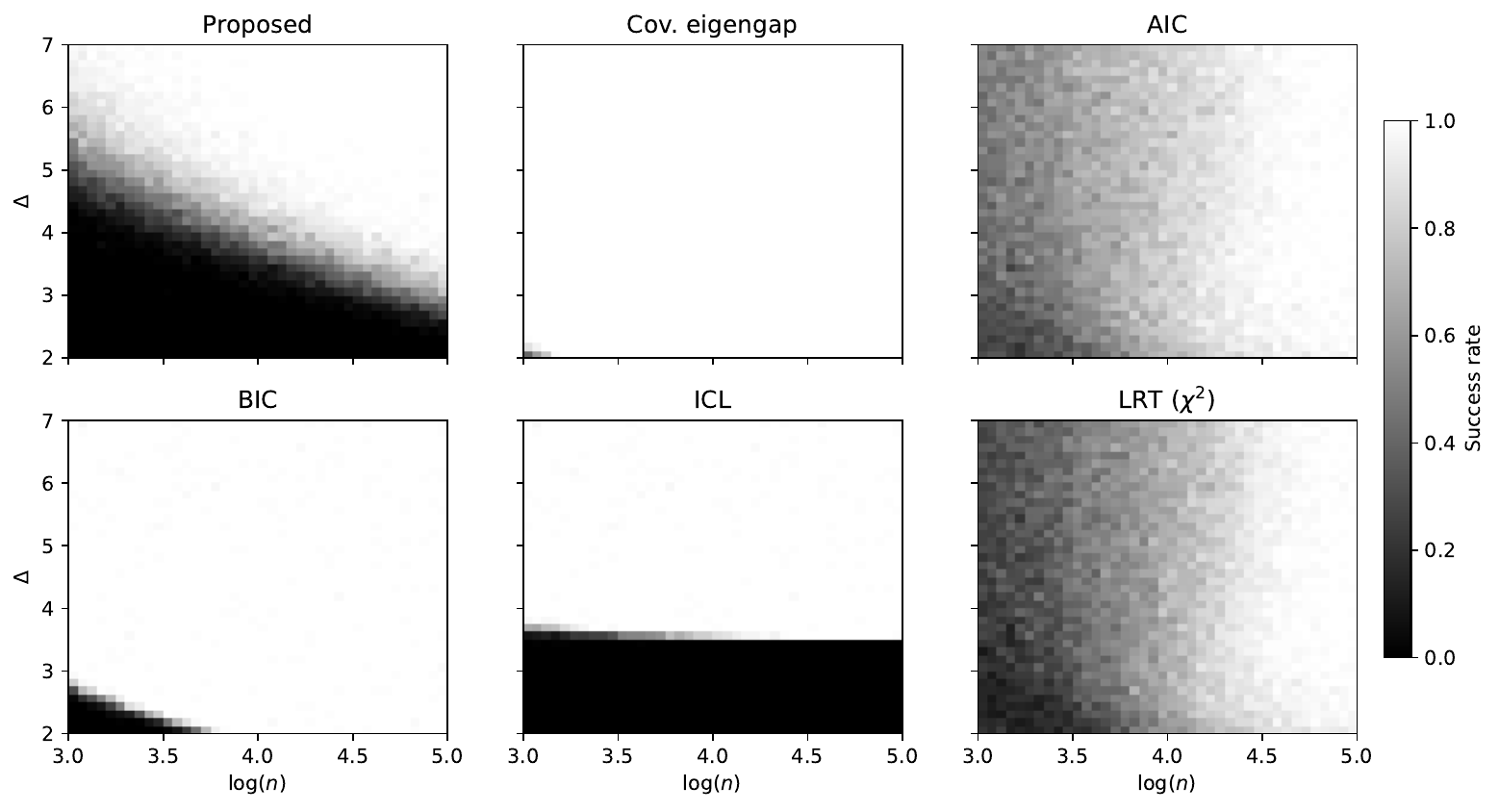}
    }
}
\caption{Empirical success rate of $24$ trials under each $(\log(n), \Delta)$ setting, for every method, under the random-data-point-and-Dirichlet-weight EM initialization.}
\label{fig: dirichlet success rate}
\end{figure}

\begin{figure}[!htbp]
    \centering
    \subfigure[$k=2$]{
    \resizebox{0.31\textwidth}{!}{
        \includegraphics[]{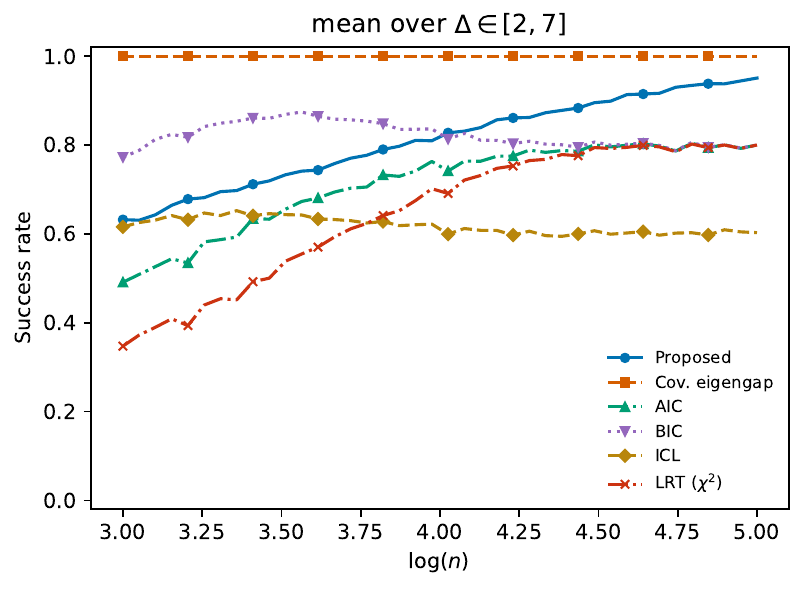}
    }
}
    \subfigure[$k=3$]{
    \resizebox{0.31\textwidth}{!}{
        \includegraphics[]{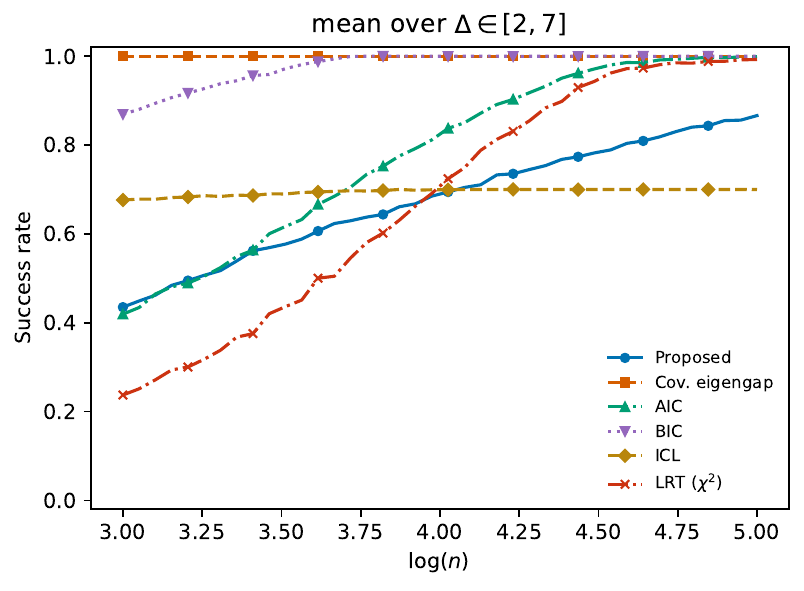}
    }
}
    \subfigure[$k=4$]{
    \resizebox{0.31\textwidth}{!}{
        \includegraphics[]{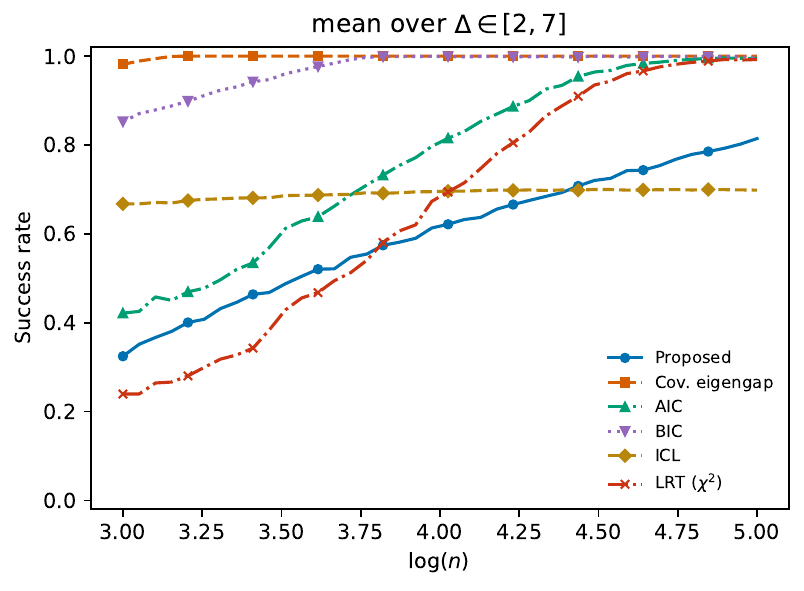}
    }
}
\caption{Empirical success rate averaged over $\Delta \in [2, 7]$, plotted against $\log(n)$, for every method, under the random-data-point-and-Dirichlet-weight EM initialization.}
\label{fig: dirichlet success mean}
\end{figure}

\begin{figure}[!htbp]
    \centering
    \subfigure[$k=2$]{
    \resizebox{0.31\textwidth}{!}{
        \includegraphics[]{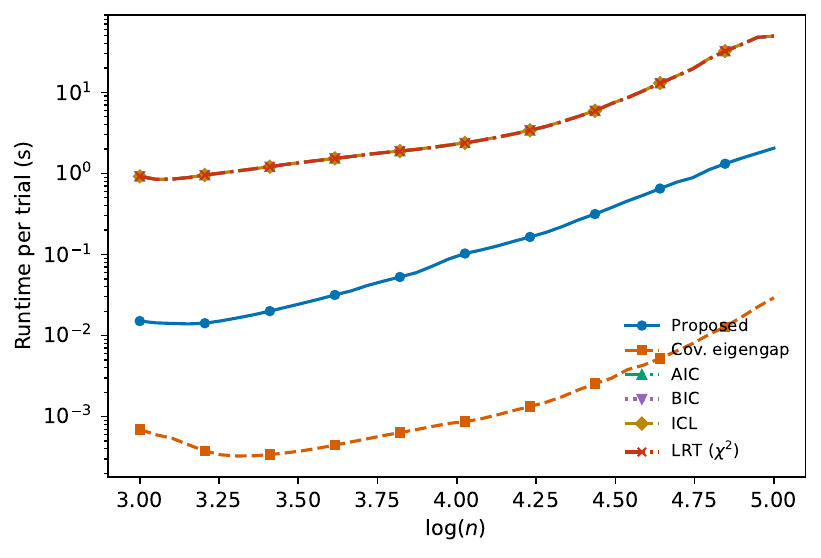}
    }
}
    \subfigure[$k=3$]{
    \resizebox{0.31\textwidth}{!}{
        \includegraphics[]{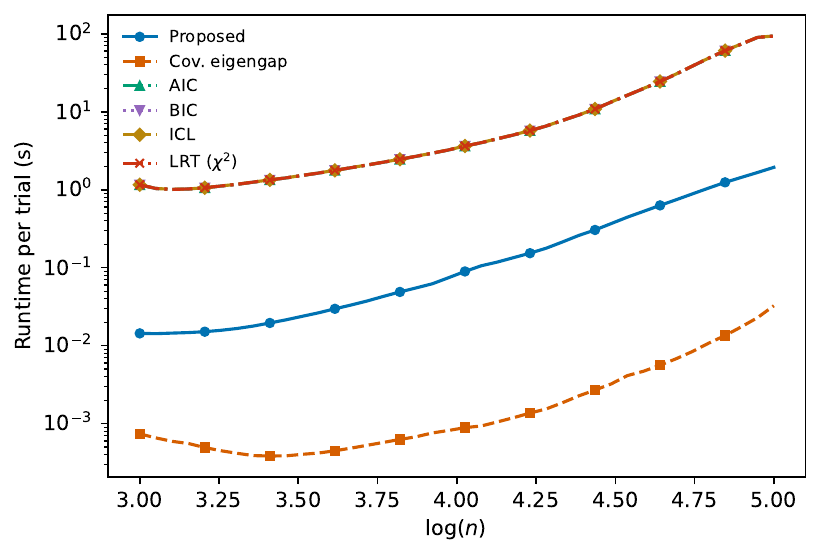}
    }
}
    \subfigure[$k=4$]{
    \resizebox{0.31\textwidth}{!}{
        \includegraphics[]{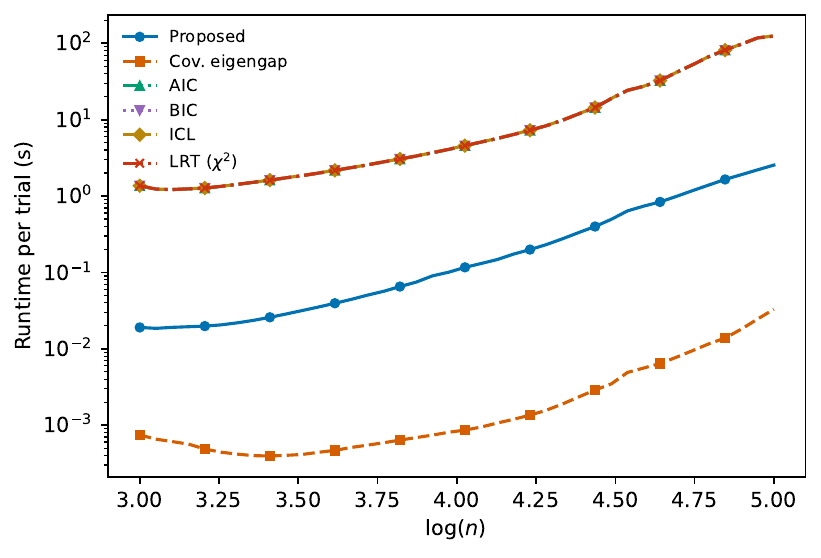}
    }
}
\caption{Mean wall-clock running time per trial versus $\log(n)$, averaged over $\Delta$, under the random-data-point-and-Dirichlet-weight EM initialization.}
\label{fig: dirichlet runtime}
\end{figure}
\FloatBarrier

\subsection{Mixing Distribution Estimation}
In this section, we provide several numerical experiments comparing the proposed estimator with the EM algorithm. The estimation accuracy for the mixing distribution is described by the 1-Wasserstein distance: 
    \begin{equation*}
        W_1(\nu, \hat{\nu}) = \inf \mathbb E \norm{X-Y}_2,
    \end{equation*}
where the infimum is taken for all joint distributions of random vectors $(X,Y)$ with marginals $\nu, \hat{\nu}$ and this 1-Wasserstein distance can be numerically computed through optimal transport \footnote{In our experiments, we use \texttt{wasserstein\_distance\_nd} in the Python package \texttt{scipy}.}.

In all experiments, we run $100$ repeated trials for each sample size $n$. We plot the mean error with its standard deviation and the average running time. We initialize EM with randomly selected samples. The EM algorithm terminates after $1,000$ iterations or when the increase in log-likelihood is less than $10^{-6}$.

In Figure \ref{fig:numerical algo 1}, we compare performance on equally weighted $k$-component GMMs in $\R^k$ for $k=3,4,5$ and on five-component GMMs in $\R^5$ with weights drawn from a Dirichlet distribution with parameter $(1,1,1,1,1)$. The samples are drawn from $\sum_{i=1}^k \mathcal{N}(\mu_i,I_k)$, where the component means are drawn uniformly from the sphere of radius $4$ centered at the origin. The sample size $n$ ranges from $5,000$ to $50,000$ in increments of $5,000$. In Algorithm \ref{algo: mean estimation 1}, we generate $5k$ measurement points uniformly from the ball of radius $0.5$, take the translation vectors to be the standard orthonormal basis of $\R^k$, and set $r_{\mathrm{dup}}=0.01$. The component weights are obtained by solving \eqref{eqn:weight solver} with the CVXOPT toolbox. Our algorithm outperforms EM in both accuracy and computational efficiency. This efficiency gain arises because EM accesses every sample at each iteration, whereas our algorithm uses the full sample only to compute the Fourier data $y_n(t_l+v_m)$ in the first three steps of Algorithm \ref{algo: mean estimation 1}. The variability in EM running time is due to its sensitivity to initialization. If the initial centers are close to $\{\mu_i\}_{i=1}^k$, EM converges in a few iterations; otherwise, it can require many more iterations to reach a local maximum of the likelihood.

\begin{figure}[!htbp]
    \centering
    \subfigure[$k=3$, equal weighted]{
    \resizebox{0.45\textwidth}{!}{
        \includegraphics[]{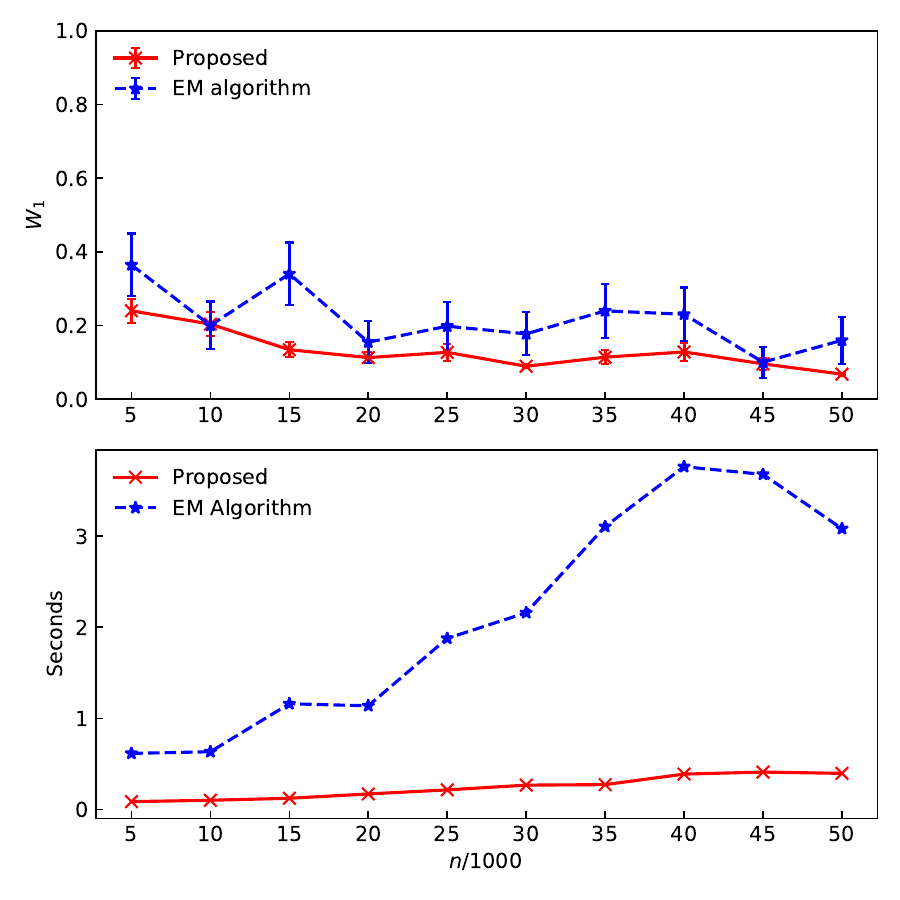}
        }
}
    \subfigure[$k=4$, equal weights]{
    \resizebox{0.45\textwidth}{!}{
        \includegraphics[]{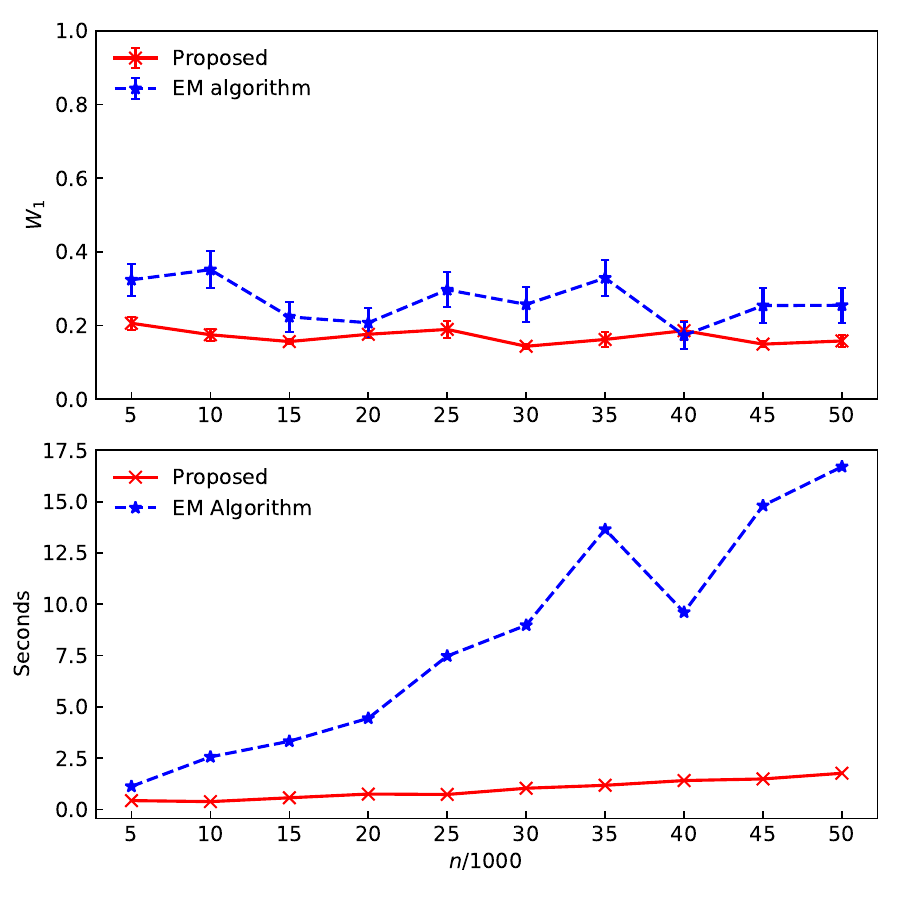}
        }
}
    \subfigure[$k=5$, equal weights]{
    \resizebox{0.45\textwidth}{!}{
        \includegraphics[]{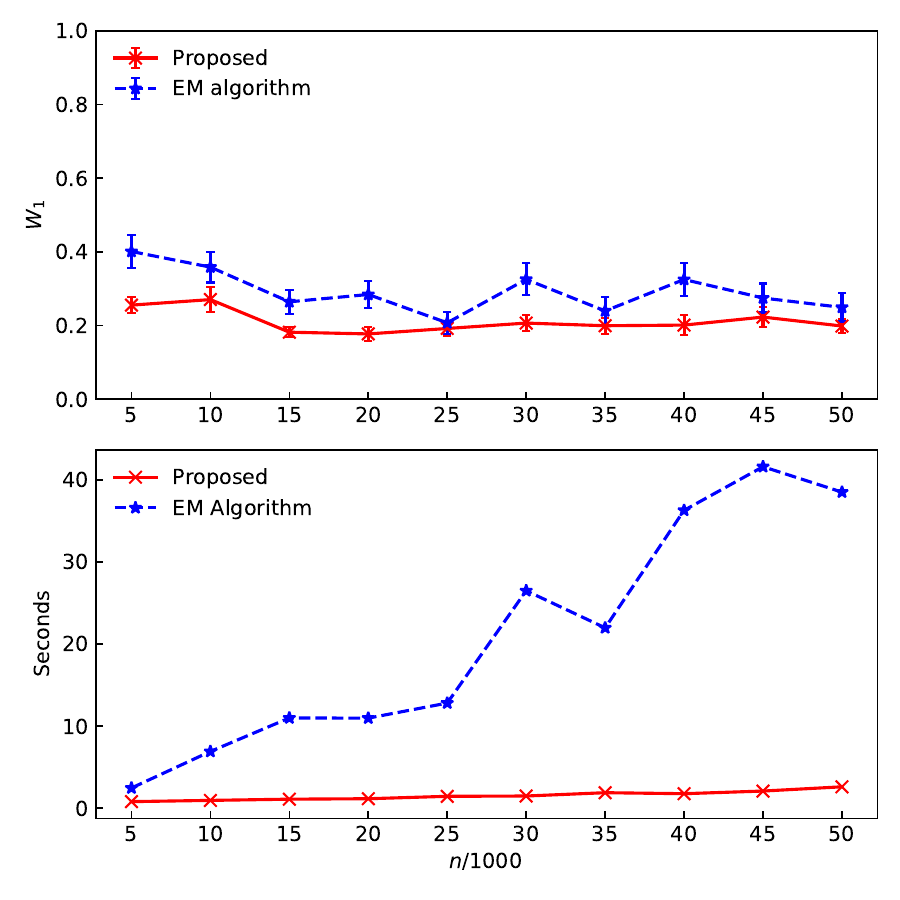}
        }
    }
    \subfigure[$k=5$, unequal weights]{
    \resizebox{0.45\textwidth}{!}{
        \includegraphics[]{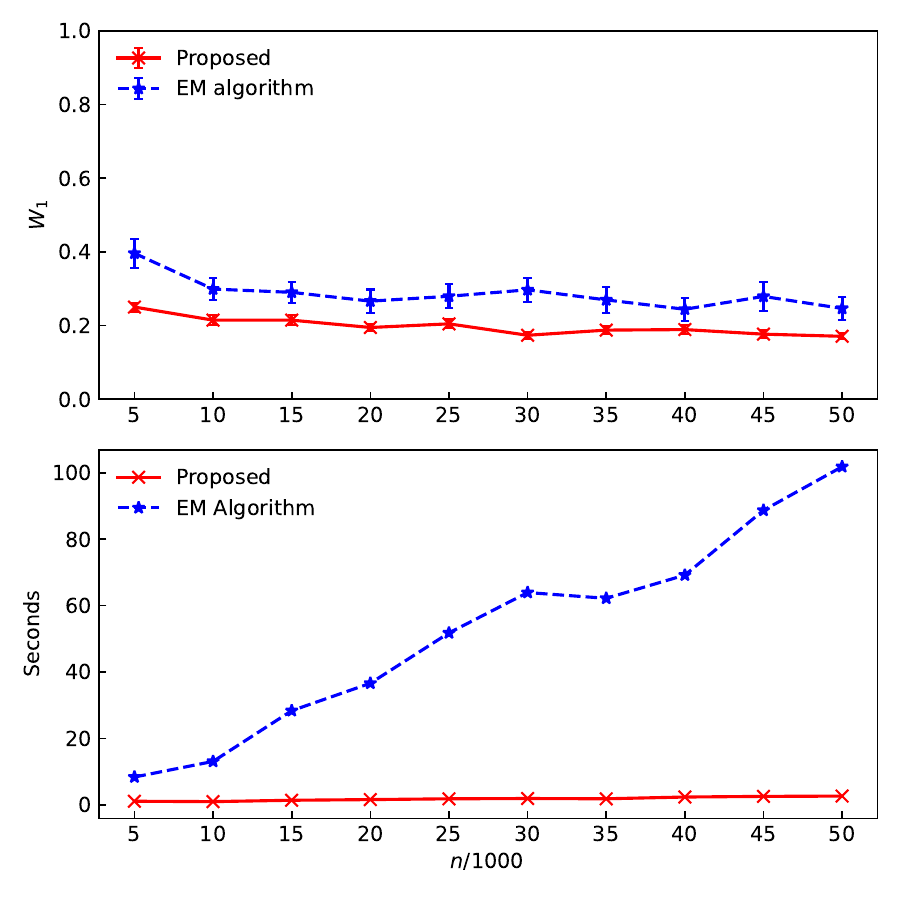}
        }
    }
\caption{$\sum_{i=1}^k w_i \mathcal{N}(\mu_i, I_k)$ with $\mu_i$'s drawn uniformly from the sphere with radius $4$. For each figure, the upper plot shows the accuracy of the mixing distribution estimation under the 1-Wasserstein distance, and the lower plot shows the average running time of each trial.}
    \label{fig:numerical algo 1}
\end{figure}
\FloatBarrier

In Figure \ref{fig:overlap}, we investigate the performance of Algorithm \ref{algo: mean estimation 1} in the cases that the components overlap. In Figure \ref{fig:overlap}(a), the $k = 3$ means are drawn uniformly from of a sphere with radius $0.1$, and in Figure \ref{fig:overlap}(b) we have $\Delta=0$, that is, the components completely overlap. The algorithmic settings are the same as in Figure \ref{fig:numerical algo 1}, with $r_{\mathrm{dup}}=0.01$ in both cases.

\begin{figure}[!htbp]
    \centering
    \subfigure[$k=3$, extremely close]{
    \resizebox{0.45\textwidth}{!}{
        \includegraphics[]{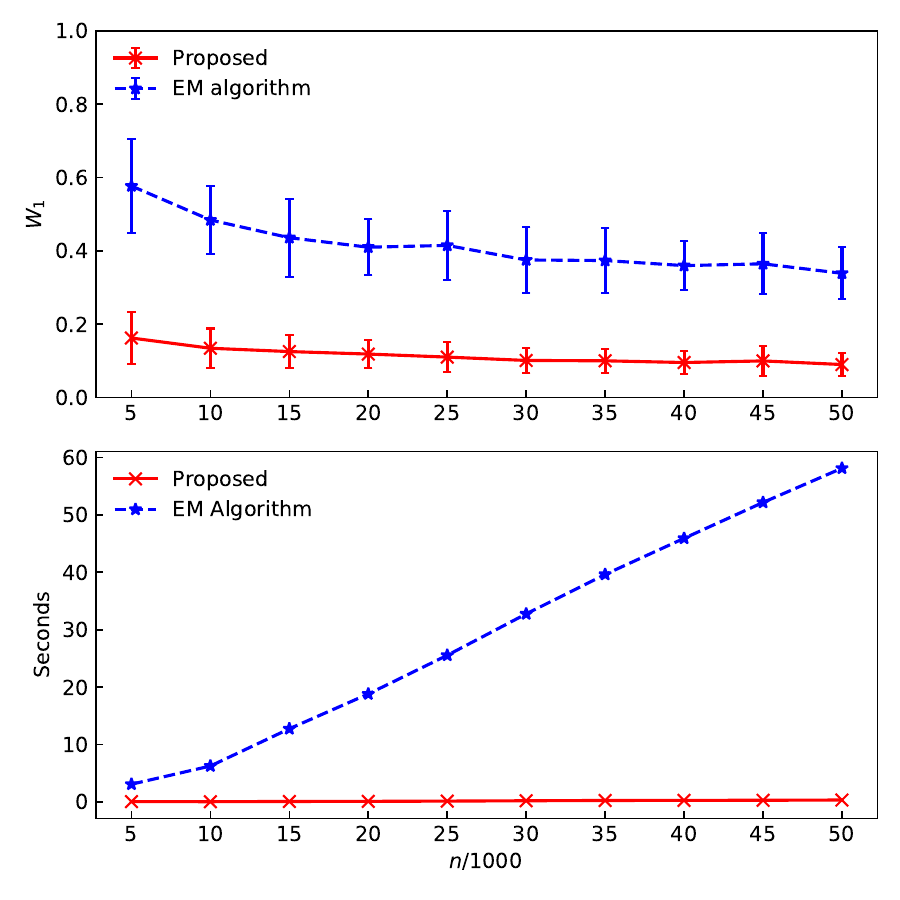}
        }
}
    \subfigure[$k=3$, completely overlap]{
    \resizebox{0.45\textwidth}{!}{
        \includegraphics[]{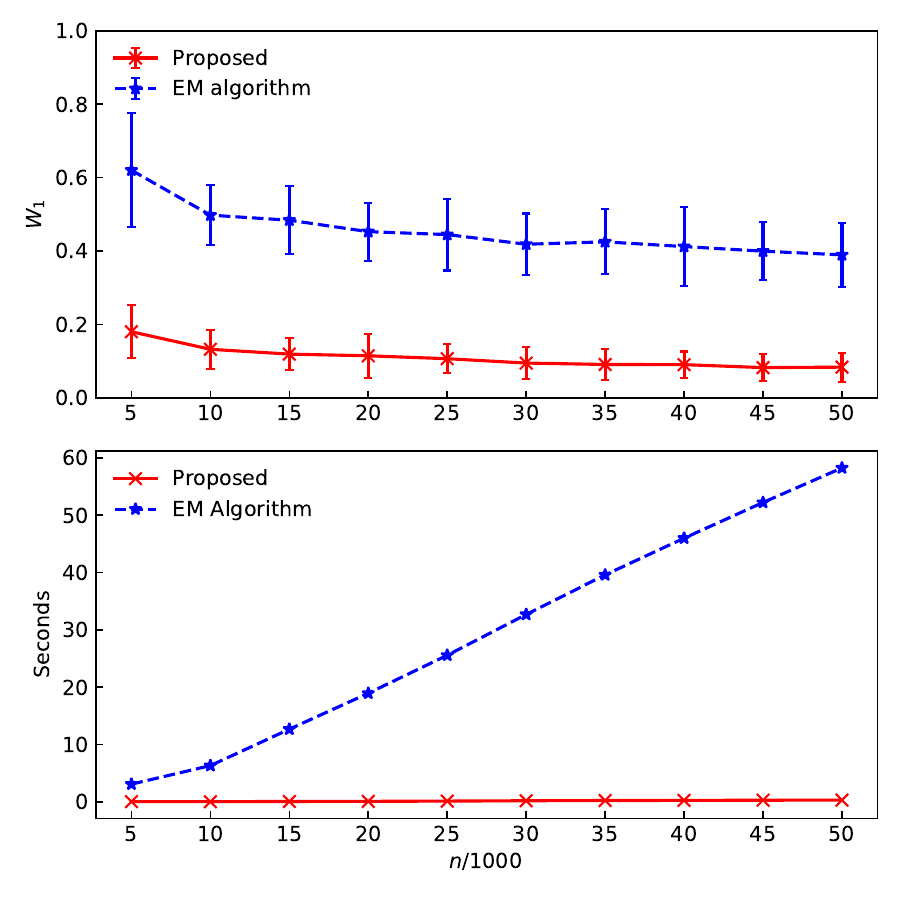}
        }
}
\caption{(a) Means drawn from sphere with radius $0.1$. (b) Means completely overlap i.e. $\Delta = 0$.}
    \label{fig:overlap}
\end{figure}
\FloatBarrier

In Figure \ref{fig: numericals mean high}, we compare performance on equally weighted $k$-component GMMs in $\R^{100}$ for $k=5,6$ and on six-component GMMs with weights drawn from a Dirichlet distribution with parameter $(1,1,1,1,1,1)$. We set $\Sigma=I_{100}$ and draw the component means uniformly from the sphere of radius $4$ centered at the origin. The sample size $n$ ranges from $10,000$ to $100,000$ in increments of $10,000$. In Algorithm \ref{algo: mean estimation 2}, we apply uncentered PCA: we compute the first $k$ right singular vectors of the original data matrix $X$, without subtracting the sample mean, and project the original samples onto their span. We generate $15k$ measurement points uniformly from the ball of radius $0.5$, take the translation vectors to be the standard orthonormal basis of $\R^k$, and set $r_{\mathrm{dup}}=0.01$. Our method achieves better or comparable accuracy for equally weighted mixtures. For unequally weighted GMMs, EM is slightly more accurate than Algorithm \ref{algo: mean estimation 2}, but our algorithm is substantially faster.

\begin{figure}[!htbp]

    \centering
    \subfigure[$k=5$, equal weighted]{
    \resizebox{0.31\textwidth}{!}{
        \includegraphics[]{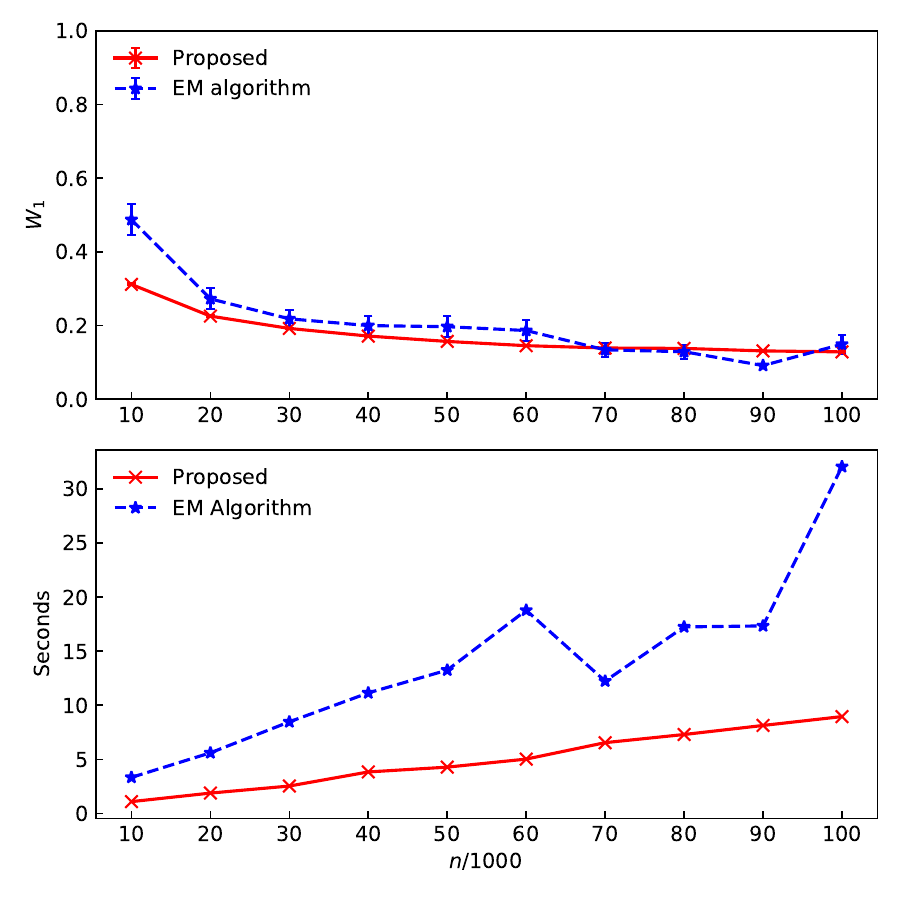}
        }
}
    \subfigure[$k=5$, equal weights]{
    \resizebox{0.31\textwidth}{!}{
        \includegraphics[]{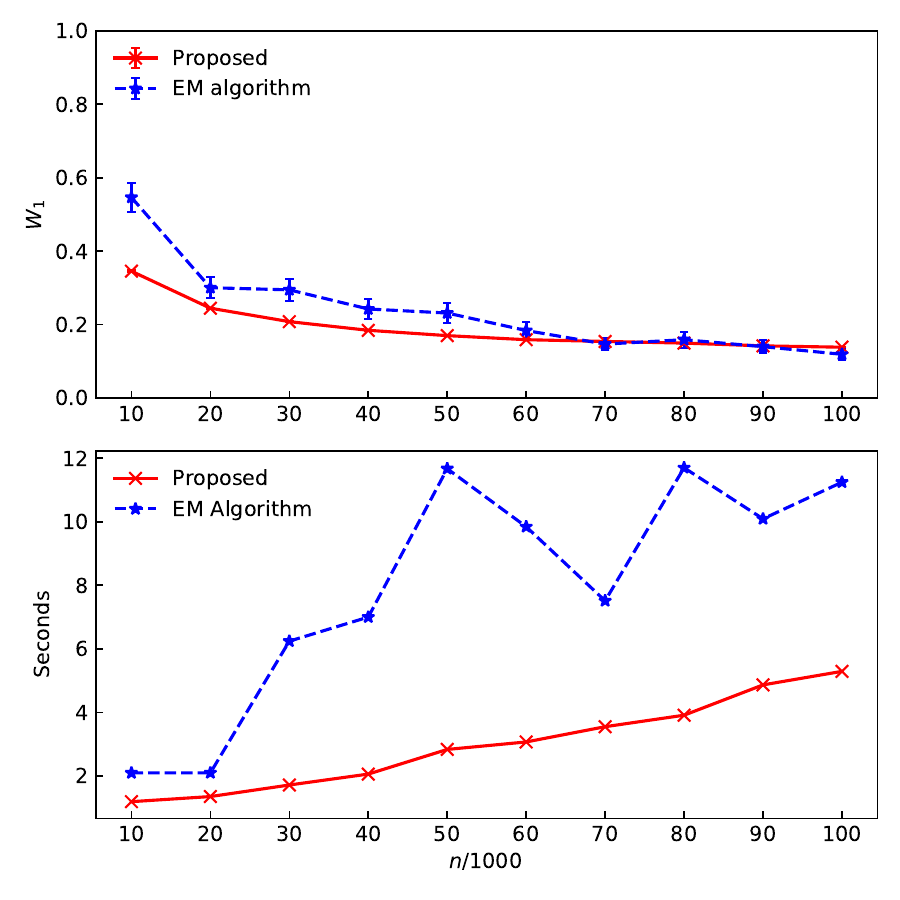}
        }
}
    \subfigure[$k=6$, unequal weights]{
    \resizebox{0.31\textwidth}{!}{
        \includegraphics[]{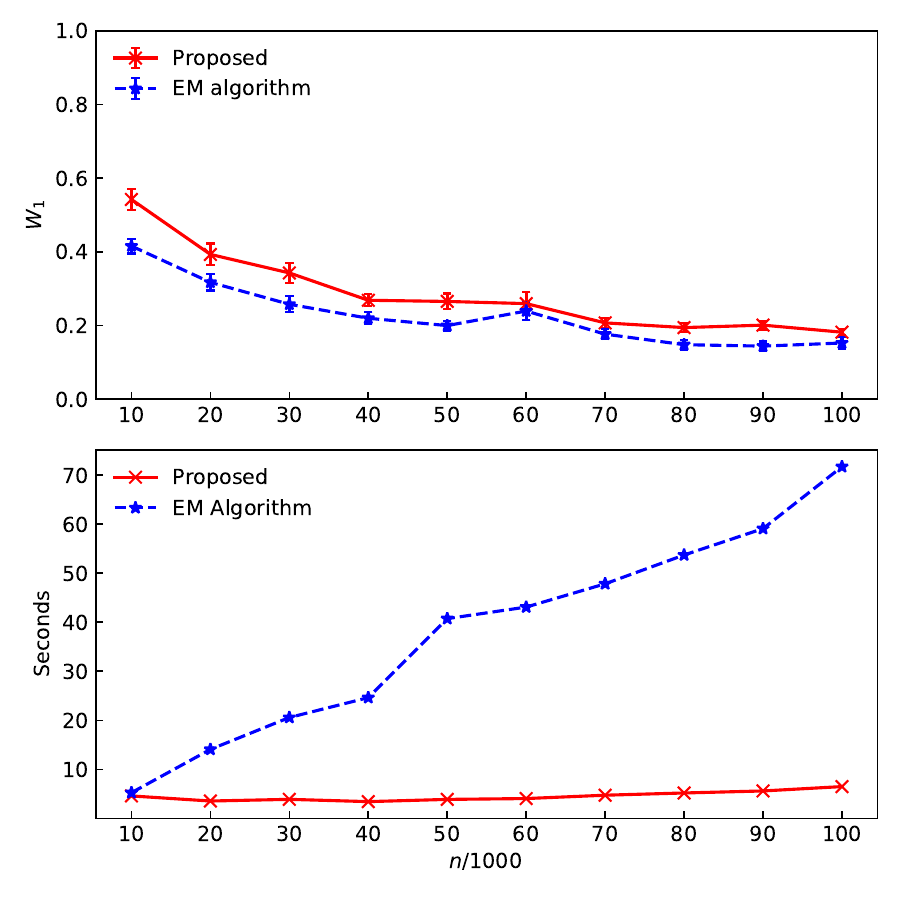}
        }
    }
\caption{$\sum_{i=1}^k w_i \mathcal{N}(\mu_i, I_{100})$ with $\mu_i$'s drawn uniformly from the sphere with radius $4$ in $\R^{100}$. The proposed method uses uncentered PCA for dimension reduction. For each figure, the upper plot shows the accuracy of the mixing distribution estimation under the 1-Wasserstein distance and the lower plot shows average running time of each trial.}
\label{fig: numericals mean high}
\end{figure}
\FloatBarrier

\newpage
\bibliography{reference}
\bibliographystyle{abbrv}

\appendix
\section{Proofs of Propositions and Theorems}
\subsection{Proof of Theorem \ref{thm:embedded-minimax-lower-bound}}

\begin{proof}
Restricting the two suprema in \eqref{eq:embedded-minimax-risk} to
$Q_R^{(d)}$ and $P_R^{(d)}$, respectively, gives
\begin{align*}
\mathcal R_{n,d}^{(k)}(\Delta)
&\ge
\inf_{\psi}\left[
\mathbb E_{(Q_R^{(d)})^{\otimes n}}[\psi]
+\mathbb E_{(P_R^{(d)})^{\otimes n}}[1-\psi]
\right]\\
&=1-
\left\|(P_R^{(d)})^{\otimes n}-(Q_R^{(d)})^{\otimes n}\right\|_{\mathrm{TV}}\\
&\ge
1-\sqrt{\frac12
D_{\mathrm{KL}}\!\left(
(P_R^{(d)})^{\otimes n}\middle\|(Q_R^{(d)})^{\otimes n}
\right)}\\
&=1-\sqrt{\frac n2
D_{\mathrm{KL}}\!\left(P_R^{(d)}\middle\|Q_R^{(d)}\right)}.
\end{align*}
By \eqref{eq:embedded-kl-identity} and Theorem~2.7 of
\cite{liu2024fourier}, for every $\varepsilon\in(0,1)$ and all sufficiently
small $\Delta$,
\[
D_{\mathrm{KL}}\!\left(P_R^{(d)}\middle\|Q_R^{(d)}\right)
=D_{\mathrm{KL}}\!\left(P_R^{(1)}\middle\|Q_R^{(1)}\right)
\le (1+\varepsilon)\kappa_{k,w}\Delta^{4k-4}.
\]
Substitution proves \eqref{eq:embedded-minimax-lower-bound}. Finally,
\[
1-\sqrt{\frac{(1+\varepsilon)\kappa_{k,w}}{2}}\,
\Delta^{2k-2}\sqrt n\ge\eta
\]
is equivalent to
\[
n\le
\frac{2(1-\eta)^2}{(1+\varepsilon)\kappa_{k,w}}
\frac{1}{\Delta^{4k-4}},
\]
which proves the remaining claims.
\end{proof}

\subsection{Proof of Proposition \ref{prop: fourier concentration}}
 \begin{proof}
Note that
        \[
            \hat y_n(t) = \frac{1}{n}\sum_{j=1}^n e^{t^\mathrm{T}\Sigma t/2} \cos\innerproduct{x_j}{t} + \iota \frac{1}{n}\sum_{j=1}^n e^{t^\mathrm{T}\Sigma t/2} \sin\innerproduct{x_j}{t}.
        \]
    Since $\bigl|e^{t^\mathrm{T}\Sigma t/2}\cos\innerproduct{x_j}{t}\bigr| \leq e^{t^\mathrm{T}\Sigma t/2}$ and $\bigl|e^{t^\mathrm{T}\Sigma t/2}\sin\innerproduct{x_j}{t}\bigr| \leq e^{t^\mathrm{T}\Sigma t/2}$, each summand lies in an interval of length $2e^{t^\mathrm{T}\Sigma t/2}$. Applying Hoeffding's inequality to the real and imaginary parts, we have
        \begin{align*}
            &\mathbb{P}\left(\left|\frac{1}{n}\sum_{j=1}^n e^{t^\mathrm{T}\Sigma t/2} \cos\innerproduct{x_j}{t} - e^{t^\mathrm{T}\Sigma t/2}\real{\phi(t)}\right|> \epsilon\right) \leq 2 \exp\left(-\frac{n\epsilon^2}{2e^{t^\mathrm{T}\Sigma t}}\right), \\
            &\mathbb{P}\left(\left|\frac{1}{n}\sum_{j=1}^n e^{t^\mathrm{T}\Sigma t/2} \sin\innerproduct{x_j}{t} - e^{t^\mathrm{T}\Sigma t/2}\imag{\phi(t)}\right|> \epsilon\right) \leq 2 \exp\left(-\frac{n\epsilon^2}{2e^{t^\mathrm{T}\Sigma t}}\right).
        \end{align*}
    Hence,
        \begin{align*}
            &\mathbb{P} \left(\left|e^{t^\mathrm{T}\Sigma t/2}\hat \phi_n(t) - \sum_{i=1}^k w_i\exp\left(\iota \innerproduct{\mu_i}{t}\right)\right| > \epsilon\right) 
            = \mathbb{P} \left(\left|e^{t^\mathrm{T}\Sigma t/2}[\hat \phi_n(t) - \phi(t)]\right| > \epsilon\right) \\
            &\leq \mathbb{P}\left(\left|\real{e^{t^\mathrm{T}\Sigma t/2}[\hat \phi_n(t) - \phi(t)]}\right| > \frac{\epsilon}{\sqrt{2}}\right) + \mathbb{P}\left(\left|\imag{e^{t^\mathrm{T}\Sigma t/2}[\hat \phi_n(t) - \phi(t)]}\right| > \frac{\epsilon}{\sqrt{2}}\right) \\
            &= \mathbb{P}\left(\left|\frac{1}{n}\sum_{j=1}^n e^{t^\mathrm{T}\Sigma t/2} \cos\innerproduct{x_j}{t} - e^{t^\mathrm{T}\Sigma t/2}\real{\phi(t)}\right|> \frac{\epsilon}{\sqrt{2}}\right)\\
            &\quad + \mathbb{P}\left(\left|\frac{1}{n}\sum_{j=1}^n e^{t^\mathrm{T}\Sigma t/2} \sin\innerproduct{x_j}{t} - e^{t^\mathrm{T}\Sigma t/2}\imag{\phi(t)}\right|> \frac{\epsilon}{\sqrt{2}}\right) \\
            &\leq 4 \exp\left(-\frac{n\epsilon^2}{4e^{t^\mathrm{T}\Sigma t}}\right) \leq 4\exp\left(-\frac{n\epsilon^2}{4e^{\norm{t}_2^2 \sigma_{\max}(\Sigma)}}\right), \numberthis \label{eqn: concentration}
        \end{align*}
    where the last inequality uses $t^\mathrm{T}\Sigma t \leq \norm{t}_2^2 \sigma_{\max}(\Sigma)$.
    If we choose $ n \geq \frac{4}{\epsilon^2}\ln\left(\frac{4}{\delta}\right){e^{\norm{t}_2^2 \sigma_{\max}(\Sigma)}}$, then \eqref{eqn: concentration} implies that 
    \[
        \mathbb{P} \left(\left|e^{t^\mathrm{T}\Sigma t/2}\hat \phi_n(t) - \sum_{i=1}^k w_i\exp\left(\iota \innerproduct{\mu_i}{t}\right)\right| > \epsilon\right) < \delta,
    \]
    which completes the proof.
\end{proof}

\subsection{Proof of Lemma \ref{lem:projected-separation}}

\begin{proof}
Let $v\sim\mathcal U(\mathbb S^{d-1})$ and, for each $i\ne j$, set
\[
    u_{ij}:=\frac{\mu_i-\mu_j}{\|\mu_i-\mu_j\|_2},
    \qquad
    Z_{ij}:=\langle u_{ij},v\rangle,
    \qquad
    \epsilon:=\frac{1}{k^2\sqrt d}.
\]
We first show that
\begin{equation}
    \mathbb P\bigl(|Z_{ij}|\le\epsilon\bigr)\le\frac{1}{k^2}.
    \label{eqn:pairwise-projection-failure}
\end{equation}

If $d=1$, then $v\in\{-1,1\}$ and $|Z_{ij}|=1$. Since $k\ge2$,
\[
    \mathbb P\bigl(|Z_{ij}|\le\epsilon\bigr)=0.
\]

If $d=2$, then $Z_{ij}$ has density
\[
    f_Z(t)=\frac{1}{\pi\sqrt{1-t^2}},
    \qquad -1<t<1.
\]
Consequently, using $\arcsin(x)\le(\pi/2)x$ for $x\in[0,1]$,
\[
    \mathbb P\bigl(|Z_{ij}|\le\epsilon\bigr)
    =\frac{2}{\pi}\arcsin(\epsilon)
    \le\epsilon
    =\frac{1}{k^2\sqrt2}
    \le\frac{1}{k^2}.
\]

If $d\ge3$, then $Z_{ij}$ has density
\[
    f_Z(t)=C_d(1-t^2)^{(d-3)/2},
    \qquad
    C_d:=\frac{1}{\sqrt\pi}
    \frac{\Gamma(d/2)}{\Gamma((d-1)/2)}.
\]
Since $(d-3)/2\ge0$, $f_Z(t)\le C_d$. By Gautschi's inequality,
\[
    C_d
    <\sqrt{\frac{d-1}{2\pi}}
    <\frac{\sqrt d}{2}.
\]
Hence
\[
    \mathbb P\bigl(|Z_{ij}|\le\epsilon\bigr)
    \le 2\epsilon C_d
    \le\epsilon\sqrt d
    =\frac{1}{k^2}.
\]
Thus, \eqref{eqn:pairwise-projection-failure} holds for every $d\ge1$.

For a single random direction, let
\[
    E_{\mathrm{fail}}
    :=\bigcup_{1\le i<j\le k}
    \bigl\{|\langle u_{ij},v\rangle|<\epsilon\bigr\}.
\]
By the union bound,
\[
    \mathbb P(E_{\mathrm{fail}})
    \le\binom{k}{2}\frac{1}{k^2}
    =\frac{k-1}{2k}
    <\frac12.
\]
Since $\|\mu_i-\mu_j\|_2\ge\Delta$, on $E_{\mathrm{fail}}^c$,
\[
    \min_{i\ne j}|\langle\mu_i-\mu_j,v\rangle|
    \ge\frac{\Delta}{k^2\sqrt d}.
\]
For $J$ independent directions $t_1,\ldots,t_J$,
\[
    \mathbb P(\text{all $J$ directions fail})
    \le 2^{-J}\le\delta
\]
whenever $J\ge\lceil\log_2(1/\delta)\rceil$.
\end{proof}

\subsection{Proof of Proposition \ref{prop:covariance-perturbation}}

\begin{proof}
\textbf{Step 1 (Uniform concentration of the noise).}
For every evaluation point $t=t_i+v_m$ we have
$\|t\|_2\le T_{\max}$, so
$e^{\|t\|_2^2\sigma_{\max}(\Sigma)}\le{\kappa_\Sigma}$. Proposition~\ref{prop: fourier
concentration} then gives, for each of the $N$ points,
\[
  \mathbb P\big(|e_n(t_i+v_m)|\ge\epsilon\big)
  \le 4\exp\!\left(-\frac{n\epsilon^2}{4{\kappa_\Sigma}}\right).
\]
By the union bound over all $N=L(M+1)$ points,
\[
  \mathbb P\Big(\max_{i,m}|e_n(t_i+v_m)|\ge\epsilon\Big)
  \le 4N\exp\!\left(-\frac{n\epsilon^2}{4{\kappa_\Sigma}}\right).
\]
Equating the right-hand side to $\delta$ and solving for $\epsilon$ yields
$\epsilon_n = 2\sqrt{\kappa_\Sigma}\sqrt{\ln(4N/\delta)/n}$. Hence, with probability at least
$1-\delta$,
\begin{equation}
  \max_{i,m}|e_n(t_i+v_m)| < \epsilon_n .
  \label{eq:uniform-noise}
\end{equation}
We work on this event for the remainder of the proof.

\textbf{Step 2 (Entrywise to vector norm).}
Since $\ve_m = \hat\vy_m-\vy_m$ has entries $e_n(t_i+v_m)$, inequality
\eqref{eq:uniform-noise} gives
\[
  \|\ve_m\|_2 = \Big(\sum_{i=1}^L |e_n(t_i+v_m)|^2\Big)^{1/2}
  < \sqrt{L}\,\epsilon_n,
  \qquad m=0,1,\dots,M .
\]
Together with $\|\vy_m\|_2\le\sqrt L$ (each $|y|\le\sum_i w_i=1$), this controls
every term in your telescoping bound.

\textbf{Step 3 (Assembling the spectral bound).}
Starting from
\[
  \|\hat E\|_2
  \le \frac{1}{M+1}\sum_{m=0}^M
      \big(2\|\ve_m\|_2\|\vy_m\|_2 + \|\ve_m\|_2^2\big),
\]
substitute $\|\vy_m\|_2\le\sqrt L$ and $\|\ve_m\|_2<\sqrt L\,\epsilon_n$:
\[
  \|\hat E\|_2
  < \frac{1}{M+1}\sum_{m=0}^M
      \big(2\sqrt L\cdot\sqrt L\,\epsilon_n + L\,\epsilon_n^2\big)
  = 2L\,\epsilon_n + L\,\epsilon_n^2 .
\]
Finally, inserting $\epsilon_n = 2\sqrt{\kappa_\Sigma}\sqrt{\ln(4N/\delta)/n}$ gives
\[
  \|\hat E\|_2
  \le 4L\sqrt{\kappa_\Sigma}\sqrt{\frac{\ln(4N/\delta)}{n}}
    + 4L{\kappa_\Sigma}\,\frac{\ln(4N/\delta)}{n},
\]
this proves the claim.
\end{proof}

\subsection{Proof of Proposition \ref{prop:lower-bound-sampling}}

\begin{proof}
\textbf{Step 0 (The overall setting).}
For the measurement directions $t_1,\dots,t_J\sim\mathcal{U}(\mathbb{S}^{d-1})$,
Lemma~\ref{lem:projected-separation} with confidence $\delta/2$ gives:
provided $J\ge\lceil\log_2(2/\delta)\rceil$, with probability at least
$1-\delta/2$ there exists an index, relabeled $t_a$, such that
\begin{equation}
  \min_{i\neq j}|\langle\mu_i-\mu_j,t_a\rangle| \ge \frac{\Delta}{k^2\sqrt{d}}.
  \tag{P1}\label{eq:P1}
\end{equation}
For the independently drawn translation vectors
$v_1,\dots,v_J\sim\mathcal{U}(\mathbb{S}^{d-1})$, the same lemma with confidence
$\delta/2$ gives, with probability at least $1-\delta/2$, an index, relabeled
$v_b$, such that
\begin{equation}
  \min_{i\neq j}|\langle\mu_i-\mu_j,v_b\rangle| \ge \frac{\Delta}{k^2\sqrt{d}}.
  \tag{P2}\label{eq:P2}
\end{equation}
By the union bound, \eqref{eq:P1} and \eqref{eq:P2} hold simultaneously with
probability at least $1-\delta$. We work on this event; all subsequent
statements are deterministic.

Note that for any unit vector $u$ and any pair $i\neq j$, Cauchy--Schwarz gives
$|\langle\mu_i-\mu_j,u\rangle|\le\|\mu_i-\mu_j\|_2\le R$. Hence, with
$\tau\le\pi/R$ and $\rho\le\pi/R$, all phase differences appearing below are
bounded in modulus by $\pi$, which is exactly what the chord--arc inequality
requires.

\medskip
\medskip
\textbf{Step 1 (Vandermonde submatrix of $\mathcal A$).}
Recall $C=\mathcal A W\mathcal A^*$. Fix the direction $t_a$ satisfying \eqref{eq:P1}. In the
block of $\mathcal A$ corresponding to $t_a$, the row indexed by step
$s\in\{0,1,\dots,S\}$ has entries
\[
  \mathcal A_{(a,s),i}=e^{\iota s\tau\langle\mu_i,t_a\rangle}=\zeta_i^{\,s},
  \qquad i=1,\dots,k,
\]
where we set the nodes
\[
  \zeta_i:=e^{\iota\tau\langle\mu_i,t_a\rangle}.
\]
Since $S+1\ge k$, the rows corresponding to $s=0,1,\dots,k-1$ are all present, and
they form the square Vandermonde matrix
\[
  V=\begin{pmatrix}
    1 & \cdots & 1\\
    \zeta_1 & \cdots & \zeta_k\\
    \zeta_1^{2} & \cdots & \zeta_k^{2}\\
    \vdots & & \vdots\\
    \zeta_1^{\,k-1} & \cdots & \zeta_k^{\,k-1}
  \end{pmatrix}\in\mathbb{C}^{k\times k}.
\]
Thus $V$ is exactly a row-submatrix of $\mathcal A$ (no phase correction is needed).
Collecting the remaining rows of $\mathcal A$ into a matrix $R'$, we have, up to a row
permutation that leaves singular values unchanged,
\[
  \mathcal A=\begin{bmatrix} V\\ R'\end{bmatrix},
  \qquad\text{hence}\qquad
  \mathcal A^*\mathcal A=V^*V+R'^*R'\;\succeq\;V^*V.
\]
Because the nodes $\zeta_i$ are distinct by \eqref{eq:P1} (the phases
$\tau\langle\mu_i,t_a\rangle$ are pairwise distinct modulo $2\pi$ since their
differences are nonzero and bounded by $\pi$ in modulus), $V$ is invertible.

\medskip
\textbf{Step 2 (Lower bound for $\sigma_k(\mathcal A)$).}
Since $\mathcal A^*\mathcal A=V^*V+R'^*R'\succeq V^*V\succeq0$, Weyl's monotonicity
inequality gives $\lambda_k(\mathcal A^*\mathcal A)\ge\lambda_k(V^*V)$, and therefore
\begin{equation}
  \sigma_k(\mathcal A)=\sqrt{\lambda_k(\mathcal A^*\mathcal A)}\ge\sqrt{\lambda_k(V^*V)}
  =\sigma_k(V).
  \label{eq:phi-ge-V}
\end{equation}
For the square, invertible Vandermonde $V$ we have
$\sigma_k(V)=1/\|V^{-1}\|_2$, and since $\|V^{-1}\|_2\le\|V^{-1}\|_F
\le\sqrt{k}\,\|V^{-1}\|_\infty$,
\begin{equation}
  \sigma_k(V)=\frac{1}{\|V^{-1}\|_2}\ge\frac{1}{\sqrt{k}\,\|V^{-1}\|_\infty}.
  \label{eq:sigmaV-lower}
\end{equation}
By Lemma~\ref{lemma:van}, the row sums of $|V^{-1}|$ obey
\begin{equation}
  \|V^{-1}\|_\infty \le \max_{1\le j\le k}\prod_{m\neq j}
    \frac{2}{|\zeta_j-\zeta_m|}.
  \label{eq:Vinv-bound}
\end{equation}
By Step~0 the phase difference satisfies
$|\langle\mu_j-\mu_m,\tau t_a\rangle|=\tau|\langle\mu_j-\mu_m,t_a\rangle|
\le\tau R\le\pi$, so the chord--arc inequality
$|e^{\iota\theta}-1|=2|\sin(\theta/2)|\ge(2/\pi)|\theta|$ for $|\theta|\le\pi$,
together with \eqref{eq:P1}, yields
\begin{equation}
  |\zeta_j-\zeta_m|
  =\bigl|e^{\iota\tau\langle\mu_j-\mu_m,t_a\rangle}-1\bigr|
  \ge\frac{2}{\pi}\,\tau|\langle\mu_j-\mu_m,t_a\rangle|
  \ge\frac{2\tau\Delta}{\pi k^2\sqrt{d}}.
  \label{eq:node-gap-V}
\end{equation}
Each product in \eqref{eq:Vinv-bound} has $k-1$ factors, so substituting
\eqref{eq:node-gap-V} gives
\[
  \|V^{-1}\|_\infty\le\left(\frac{\pi k^2\sqrt{d}}{\tau\Delta}\right)^{k-1}.
\]
Combining this with \eqref{eq:phi-ge-V} and \eqref{eq:sigmaV-lower},
\begin{equation}
  \sigma_k(\mathcal A)\ge\sigma_k(V)\ge\frac{1}{\sqrt{k}}
    \left(\frac{\tau\Delta}{\pi k^2\sqrt{d}}\right)^{k-1},
  \label{eq:phi-final}
\end{equation}
which proves \eqref{eq:sigmak-Phi}.

\medskip
\textbf{Step 3 (Lower bound for $\sigma_k(W)$).}
For each $j\in\{1,\dots,J\}$ define the phase vector of the $m$-th translation
step along $v_j$ as
\[
\boldsymbol{\zeta}_{j,m}
:=\bigl(e^{\iota m\rho\langle\mu_1,v_j\rangle},\dots,
e^{\iota m\rho\langle\mu_k,v_j\rangle}\bigr)^{\top}.
\]
Let $D:=\mathrm{diag}(w_1,\dots,w_k)$, and define the local and averaged Gram
matrices
\[
G_j:=\frac{1}{M'+1}\sum_{m=0}^{M'}
  \boldsymbol{\zeta}_{j,m}\boldsymbol{\zeta}_{j,m}^{*},
\qquad
G:=\frac{1}{J}\sum_{j=1}^{J}G_j ,
\]
so that $W=DGD$. Each $G_j\succeq0$, hence $G\succeq\frac{1}{J}G_b$ and
\[
\lambda_{\min}(G)\ge\frac{1}{J}\lambda_{\min}(G_b).
\]
Let $Z_b':=[\boldsymbol{\zeta}_{b,0},\dots,\boldsymbol{\zeta}_{b,k-1}]
\in\mathbb{C}^{k\times k}$ be the square Vandermonde matrix formed by the first
$k$ translation steps (using the assumption that $M'+1\ge k$), with nodes
$\eta_i=e^{\iota\rho\langle\mu_i,v_b\rangle}$. Dropping the remaining
nonnegative rank-one terms,
\[
G_b=\frac{1}{M'+1}\sum_{m=0}^{M'}
  \boldsymbol{\zeta}_{b,m}\boldsymbol{\zeta}_{b,m}^{*}
\;\succeq\;\frac{1}{M'+1}\,Z_b'(Z_b')^{*},
\]
so that
\[
\lambda_{\min}(G_b)\ge\frac{1}{M'+1}\,\sigma_k(Z_b')^2 .
\]
For the square Vandermonde $Z_b'$,
$\sigma_k(Z_b')=1/\|(Z_b')^{-1}\|_2\ge 1/(\sqrt{k}\,\|(Z_b')^{-1}\|_\infty)$,
and by Lemma~\ref{lemma:van},
$\|(Z_b')^{-1}\|_\infty\le\max_i\prod_{l\neq i}2/|\eta_i-\eta_l|$. By Step~0,
$|\langle\mu_i-\mu_l,\rho v_b\rangle|\le\rho R\le\pi$, so chord--arc together
with \eqref{eq:P2} gives
\[
|\eta_i-\eta_l|\ge\frac{2}{\pi}\rho|\langle\mu_i-\mu_l,v_b\rangle|
  \ge\frac{2\rho\Delta}{\pi k^2\sqrt{d}}.
\]
Hence $\|(Z_b')^{-1}\|_\infty\le(\pi k^2\sqrt{d}/(\rho\Delta))^{k-1}$ and
\[
\sigma_k(Z_b')\ge\frac{1}{\sqrt{k}}
  \left(\frac{\rho\Delta}{\pi k^2\sqrt{d}}\right)^{k-1},
\qquad\Longrightarrow\qquad
\lambda_{\min}(G_b)\ge\frac{1}{k(M'+1)}
  \left(\frac{\rho\Delta}{\pi k^2\sqrt{d}}\right)^{2k-2}.
\]
Finally, since $W=DGD$ with $D$ real diagonal and entries $\ge w_{\min}$,
\[
\sigma_k(W)=\lambda_{\min}(W)\ge w_{\min}^2\lambda_{\min}(G)
\ge\frac{w_{\min}^2}{J}\lambda_{\min}(G_b)
\ge\frac{w_{\min}^2}{kJ(M'+1)}
  \left(\frac{\rho\Delta}{\pi k^2\sqrt{d}}\right)^{2k-2},
\]
which proves \eqref{eq:sigmak-W}.

\medskip
\textbf{Step 4 (Combine for $\sigma_k(C)$).}
Since $C=\mathcal A W\mathcal A^*$,
\[
\sigma_k(C)\ge \sigma_k(\mathcal A)^2\,\sigma_k(W).
\]
Combining \eqref{eq:phi-final} and \eqref{eq:sigmak-W},
\[
\sigma_k(C)
\ge\frac{1}{k}\left(\frac{\tau\Delta}{\pi k^2\sqrt{d}}\right)^{2k-2}
\cdot\frac{w_{\min}^2}{kJ(M'+1)}
  \left(\frac{\rho\Delta}{\pi k^2\sqrt{d}}\right)^{2k-2}
=\frac{w_{\min}^2}{k^2J(M'+1)}
  \left(\frac{\tau\Delta}{\pi k^2\sqrt{d}}\right)^{2k-2}
  \left(\frac{\rho\Delta}{\pi k^2\sqrt{d}}\right)^{2k-2},
\]
which proves \eqref{eq:sigmak-C}. Setting $\tau=\rho$ gives the stated special case. This
completes the proof.
\end{proof}

\subsection{Proof of Theorem \ref{thm:thresholding}}

\begin{proof}
Write the empirical Fourier covariance as $\hat C=C+\hat E$, where
$C=\mathcal A W\mathcal A^*$ is the population Fourier covariance and $\hat E$ is the
sampling error. Since $C$ is generated by the $k$ mixture components,
$\operatorname{rank}(C)\le k$, so
\begin{equation}
  \sigma_l(C)=0,\qquad l=k+1,\dots,L .
  \label{eq:rankC}
\end{equation}
By Weyl's inequality for singular values, for every $l$,
\begin{equation}
  \bigl|\hat\sigma_l-\sigma_l(C)\bigr|\le\|\hat E\|_2 .
  \label{eq:weyl}
\end{equation}

\medskip
\textbf{Step 1 (Control of $\|\hat E\|_2$).}
Apply Proposition~\ref{prop:covariance-perturbation}. 

For $n$ satisfying the condition \eqref{eqn: threshold condition 1}, we can verify that 
\[
4L \sqrt{\kappa_\Sigma}\sqrt{\frac{\ln(4N/\delta)}{n}} < \frac{\varepsilon}{2}, \;\; 4L{\kappa_\Sigma}\,\frac{\ln(4N/\delta)}{n} < \frac{\varepsilon}{2}.
\]
Therefore, 
\begin{equation}
  \|\hat E\|_2<\frac{\varepsilon}{2}+\frac{\varepsilon}{2} =\varepsilon .
  \label{eq:Efinal}
\end{equation}

\medskip
\textbf{Step 2 (Noise singular values, $l>k$).}
Fix $l\in\{k+1,\dots,L\}$. Combining \eqref{eq:weyl}, \eqref{eq:rankC} and
\eqref{eq:Efinal},
\[
  \hat\sigma_l\le\sigma_l(C)+\|\hat E\|_2=\|\hat E\|_2<\varepsilon .
\]
This holds simultaneously for all such $l$ on the event
\eqref{eq:Efinal}, i.e. with probability at least $1-\delta$, which proves the
first assertion.

\medskip
\textbf{Step 3 (Signal singular value, $l=k$).}
By \eqref{eq:weyl} with $l=k$ and \eqref{eq:Efinal},
\begin{equation}
  \hat\sigma_k\ge\sigma_k(C)-\|\hat E\|_2>\sigma_k(C)-\varepsilon.
  \label{eq:sigk-lb}
\end{equation}
On the event of Proposition~\ref{prop:lower-bound-sampling}, taking
$\tau=\rho$, the population covariance satisfies
\[
  \sigma_k(C)\ge\frac{w_{\min}^2}{k^2J(M'+1)}
    \left(\frac{\tau\Delta}{\pi k^2\sqrt d}\right)^{4k-4}.
\]
Hence condition \eqref{eqn: threshold condition 2} is precisely
$\varepsilon<\tfrac12\sigma_k(C)$, i.e. $\sigma_k(C)>2\varepsilon$. Substituting into
\eqref{eq:sigk-lb} proves the second assertion.

\medskip
\textbf{Probability accounting.}
The bound \eqref{eq:Efinal} holds with probability at least $1-\delta$ over the
$n$ samples, and the lower bound on $\sigma_k(C)$ holds with probability at
least $1-\delta$ over the random sampling of frequency vectors and their translations $\{t_j\},\{v_j\}$
(Proposition~\ref{prop:lower-bound-sampling}, valid since
$J\ge\lceil\log_2(2/\delta)\rceil$). The two events are independent, so by a
union bound both conclusions hold simultaneously with probability at least
$1-2\delta$.
\end{proof}

\subsection{Proof of Theorem \ref{thm:model-selection}}
\begin{proof}
Let
$\mathcal E_n
    =
    \left\{
        \|\widehat E\|_2\le\eta_n
    \right\}$.
By Proposition~\ref{prop:covariance-perturbation},
$\mathbb P(\mathcal E_n)\ge 1-\delta$.
We prove the result on the event $\mathcal E_n$.

First, condition~\eqref{eq:modelselection-n-revised} ensures that both
summands in \eqref{eq:perturbation-level-ratio} are smaller than $b_c/2$.
Indeed,
\[
    n>
    \frac{64L^2\kappa_\Sigma}{b_c^2}
    \ln\!\left(\frac{4N}{\delta}\right)  \Rightarrow 
    4L\sqrt{\kappa_\Sigma}
    \sqrt{\frac{\ln(4N/\delta)}{n}}
    <\frac{b_c}{2},
\]
while
\[
    n>
    \frac{8L\kappa_\Sigma}{b_c}
    \ln\!\left(\frac{4N}{\delta}\right)
\Rightarrow
    4L\kappa_\Sigma
    \frac{\ln(4N/\delta)}{n}
    <\frac{b_c}{2}.
\]
Consequently,
\begin{equation}
\label{eq:eta-bc-bound}
    \eta_n<b_c.
\end{equation}

By Weyl's inequality, on $\mathcal E_n$,
\begin{equation}
\label{eq:weyl-regularized-ratio}
    \bigl|\widehat{\sigma}_l-\sigma_l\bigr|
    \le\eta_n,
    \qquad l=1,\ldots,L.
\end{equation}

Since $b_c\le c\sigma_k/(1+c)$, \eqref{eq:eta-bc-bound} gives
$ (1+c)\eta_n<c\sigma_k$.
Therefore,
\[
    \sigma_k-\eta_n>c\eta_n=\varepsilon_*.
\]
It follows from \eqref{eq:weyl-regularized-ratio} that
\[
    \widehat{\sigma}_l
    \ge\sigma_k-\eta_n
    >\varepsilon_*,
    \qquad 1\le l\le k.
\]
Thus all signal indices, including $k$, belong to
$\widehat{\mathcal I}_{\varepsilon_*}$.

For the ratio at the true order, since
$\widehat{\sigma}_{k+1}\le\eta_n$ and
$\varepsilon_*=c\eta_n\le\eta_n$, we obtain
\begin{equation}
\label{eq:true-regularized-ratio-lower}
    \widehat R_k(\varepsilon_*)
    =
    \frac{\widehat{\sigma}_k}
         {\widehat{\sigma}_{k+1}\vee\varepsilon_*}
    \ge
    \frac{\sigma_k-\eta_n}{\eta_n}.
\end{equation}

For every $1\le l<k$, we have
\[
    \widehat{\sigma}_l\le\sigma_1+\eta_n
\]
and
\[
    \widehat{\sigma}_{l+1}
    \ge\sigma_k-\eta_n
    >\varepsilon_*.
\]
Hence the denominator is not altered by the threshold, and
\begin{equation}
\label{eq:interior-regularized-ratio-upper}
    \widehat R_l(\varepsilon_*)
    =
    \frac{\widehat{\sigma}_l}{\widehat{\sigma}_{l+1}}
    \le
    \frac{\sigma_1+\eta_n}{\sigma_k-\eta_n}.
\end{equation}
The right-hand side of
\eqref{eq:true-regularized-ratio-lower} is strictly larger than that of
\eqref{eq:interior-regularized-ratio-upper} whenever
\[
    \frac{\sigma_k-\eta_n}{\eta_n}
    >
    \frac{\sigma_1+\eta_n}{\sigma_k-\eta_n}.
\]
This is equivalent to
\[
    \sigma_k^2>(\sigma_1+2\sigma_k)\eta_n,
\]
which follows from
\[
    \eta_n<b_c
    \le\frac{\sigma_k^2}{\sigma_1+2\sigma_k}.
\]
Therefore,
\begin{equation}
\label{eq:true-ratio-dominates-signal}
    \widehat R_k(\varepsilon_*)
    >
    \max_{1\le l<k}\widehat R_l(\varepsilon_*).
\end{equation}

It remains to control the ratios whose numerators correspond to noise
singular values. For every $l>k$, Weyl's inequality and monotonicity give
\[
    \widehat{\sigma}_l
    \le\widehat{\sigma}_{k+1}
    \le\eta_n.
\]
Therefore,
\begin{equation}
\label{eq:noise-ratio-bound}
    \widehat R_l(\varepsilon_*)
    =
    \frac{\widehat{\sigma}_l}
         {\widehat{\sigma}_{l+1}\vee\varepsilon_*}
    \le
    \frac{\widehat{\sigma}_{k+1}}{\varepsilon_*}
    \le
    \frac{\eta_n}{c\eta_n}
    =
    \frac1c.
\end{equation}
On the other hand, because
$\eta_n<c\sigma_k/(1+c)$,
\[
    \frac{\sigma_k-\eta_n}{\eta_n}
    >
    \frac1c.
\]
Together with \eqref{eq:true-regularized-ratio-lower} and
\eqref{eq:noise-ratio-bound}, this yields
\begin{equation}
\label{eq:true-ratio-dominates-noise}
    \widehat R_k(\varepsilon_*)
    >
    \max_{k<l\le L-1}\widehat R_l(\varepsilon_*).
\end{equation}

Together, \eqref{eq:true-ratio-dominates-signal} and
\eqref{eq:true-ratio-dominates-noise} show that the unique maximizer is
$l=k$. Hence $\widehat k=k$ on $\mathcal E_n$. Since
$\mathbb P(\mathcal E_n)\ge1-\delta$, the result follows.
\end{proof}

\subsection{Proof of Proposition \ref{prop:imaging-stability}}

\begin{proof}
Note that $C$ and $\hat C$ are both Hermitian and that $\sigma_k:=\sigma_k(C)>0$. We write the singular value decomposition of $C$ and $\hat C$ as:
    \begin{equation}
    \label{eqn: block matrix}
        C = \begin{bmatrix}
            U_1 & U_2
        \end{bmatrix}
        \begin{bmatrix}
            \Sigma_1 & 0\\
            0 & 0
        \end{bmatrix}
        \begin{bmatrix}
            U_1^* \\
            U_2^*
        \end{bmatrix}, \quad
        \hat C = \begin{bmatrix}
            \hat U_1 & \hat U_2
        \end{bmatrix}
        \begin{bmatrix}
            \hat \Sigma_1 & 0\\
            0 & \hat \Sigma_2
        \end{bmatrix}
        \begin{bmatrix}
            \hat U_1^* \\
            \hat U_2^*
        \end{bmatrix},
    \end{equation}
where $\Sigma_1$ is an invertible $k\times k$ diagonal matrix.

\medskip
\textbf{Step 1 (Reduction to a projector difference).}
For any $\mu$, the reverse triangle inequality gives
\[
  \bigl|\hat J(\mu)-J(\mu)\bigr|
  = \bigl|\,\|\mathcal P_{\hat U_2}\mathbf a_L(\mu)\|_2
          -\|\mathcal P_{U_2}\mathbf a_L(\mu)\|_2\,\bigr|
  \le \|(\mathcal P_{\hat U_2}-\mathcal P_{U_2})\mathbf a_L(\mu)\|_2
  \le \|\mathcal P_{\hat U_2}-\mathcal P_{U_2}\|_2\,\|\mathbf a_L(\mu)\|_2.
\]
Since
$\|\mathbf a_L(\mu)\|_2=\sqrt L$ for every $\mu$,
\begin{equation}
\label{eq:pf-step1}
  \sup_{\mu}\bigl|\hat J(\mu)-J(\mu)\bigr|
  \le \sqrt L\,\|\mathcal P_{\hat U_2}-\mathcal P_{U_2}\|_2 .
\end{equation}

\medskip
\textbf{Step 2 (Splitting the projector difference).}
Using the resolution of identity $I=\mathcal P_{U_1}+\mathcal P_{U_2}$ and
$I=\mathcal P_{\hat U_1}+\mathcal P_{\hat U_2}$,
\[
  \mathcal P_{\hat U_2}-\mathcal P_{U_2}
  = (\mathcal P_{U_1}+\mathcal P_{U_2})\mathcal P_{\hat U_2}-\mathcal P_{U_2}
  = \mathcal P_{U_1}\mathcal P_{\hat U_2}
    - \mathcal P_{U_2}(I-\mathcal P_{\hat U_2})
  = \mathcal P_{U_1}\mathcal P_{\hat U_2}
    - \mathcal P_{U_2}\mathcal P_{\hat U_1}.
\]
By the fact that multiplying by a matrix with
orthonormal columns (from the left) or orthonormal rows (from the right) leaves
the spectral norm unchanged,
\[
  \|\mathcal P_{U_1}\mathcal P_{\hat U_2}\|_2
   = \|U_1U_1^*\hat U_2\hat U_2^*\|_2 = \|U_1^*\hat U_2\|_2,
  \qquad
  \|\mathcal P_{U_2}\mathcal P_{\hat U_1}\|_2
   = \|U_2^*\hat U_1\|_2,
\]
so that
\begin{equation}
\label{eq:pf-step2}
  \|\mathcal P_{\hat U_2}-\mathcal P_{U_2}\|_2
  \le \|U_1^*\hat U_2\|_2 + \|U_2^*\hat U_1\|_2 .
\end{equation}

\textbf{Step 3 (Estimate of $\|U_1^*\hat U_2\|_2$ and $\|U_2^*\hat U_1\|_2)$.}
Recall that $\hat C\hat U_1=\hat U_1\hat\Sigma_1$ and $\hat C\hat U_2=\hat U_2\hat\Sigma_2$. 

Since $\mathrm{rank}(C)=k$, $U_2^*C=0$.  Therefore, 
$$
U_2^*\hat U_1 \hat\Sigma_1 = U_2^*\hat C\hat U_1= U_2^*(C+\hat E)\hat U_1= U_2^*\hat E\hat U_1. 
$$ 
When $\hat\sigma_k>0$,
$\hat\Sigma_1$ is invertible, so $U_2^*\hat U_1 = U_2^*\hat E\hat U_1\hat\Sigma_1^{-1}$. 
Using $\|U_2^*\|_2=\|\hat U_1\|_2=1$ and $\|\hat\Sigma_1^{-1}\|_2=\hat\sigma_k^{-1}$, we obtain
\begin{equation}
\label{eq:pf-bound1}
  \|U_2^*\hat U_1\|_2 \le \frac{\|\hat E\|_2}{\hat\sigma_k}.
\end{equation}

On the other hand, since $U_1^*C=\Sigma_1U_1^*$,
\[
  \Sigma_1 U_1^*\hat U_2 + U_1^*\hat E\hat U_2 = U_1^*( C+\hat E) \hat U_2 = U_1^* \hat C \hat U_2 = U_1^*\hat U_2\hat\Sigma_2
  \quad\Longrightarrow\quad
  \Sigma_1 U_1^*\hat U_2 = U_1^*\hat U_2\hat\Sigma_2 - U_1^*\hat E\hat U_2 .
\]
Fix a unit vector $v\in\mathbb C^{L-k}$. Because the smallest singular value of
$\Sigma_1$ is $\sigma_k$ and the largest singular value of $\hat\Sigma_2$ is
$\hat\sigma_{k+1}$,
\[
  \sigma_k\|U_1^*\hat U_2 v\|_2
  \le \|\Sigma_1 U_1^*\hat U_2 v\|_2
  \le \|U_1^*\hat U_2\|_2\,\hat\sigma_{k+1} + \|\hat E\|_2 .
\]
Taking the supremum over $\|v\|_2=1$ gives
$\sigma_k\|U_1^*\hat U_2\|_2 \le \hat\sigma_{k+1}\|U_1^*\hat U_2\|_2 + \|\hat E\|_2$,
i.e. whenever $\sigma_k>\hat\sigma_{k+1}$,
\begin{equation}
\label{eq:pf-bound2}
  \|U_1^*\hat U_2\|_2 \le \frac{\|\hat E\|_2}{\sigma_k - \hat\sigma_{k+1}}.
\end{equation}

\medskip
\textbf{Step 4 (Weyl's inequality and assembly).}
As $\hat C = C+\hat E$ with both matrices Hermitian, Weyl's inequality yields
$|\hat\sigma_i-\sigma_i|\le\|\hat E\|_2$ for all $i$. Using $\sigma_{k+1}=0$,
\[
  \hat\sigma_{k+1}\le\sigma_{k+1}+\|\hat E\|_2 = \|\hat E\|_2,
  \qquad
  \hat\sigma_k\ge\sigma_k-\|\hat E\|_2 .
\]
Assume now $\|\hat E\|_2<\sigma_k$. Then $\hat\sigma_k\ge\sigma_k-\|\hat E\|_2>0$
(so $\hat\Sigma_1$ is indeed invertible and \eqref{eq:pf-bound1} applies), and
$\sigma_k-\hat\sigma_{k+1}\ge\sigma_k-\|\hat E\|_2>0$
(so \eqref{eq:pf-bound2} applies). Both denominators are bounded below by
$\sigma_k-\|\hat E\|_2$, hence combining
\eqref{eq:pf-step1}, \eqref{eq:pf-step2}, \eqref{eq:pf-bound1} and
\eqref{eq:pf-bound2},
\[
  \sup_\mu\bigl|\hat J(\mu)-J(\mu)\bigr|
  \le \sqrt L\,\|\mathcal P_{\hat U_2}-\mathcal P_{U_2}\|_2 \le \;\sqrt L\left(\frac{\|\hat E\|_2}{\sigma_k-\hat\sigma_{k+1}}
                 + \frac{\|\hat E\|_2}{\hat\sigma_k}\right)
  \le \frac{2\sqrt L\,\|\hat E\|_2}{\sigma_k - \|\hat E\|_2},
\]
which is \eqref{eq:deterministic-imaging}. 
\end{proof}

\subsection{Proof of Proposition \ref{prop:local coercivity condition}}

\begin{proof}
Recall that $\mathcal A = [\mathbf a_L(\mu_1),\cdots,\mathbf a_L(\mu_k)] \in \C^{L\times k}$, $U_1 = \spn\{\mathbf a_L(\mu_1),\cdots,\mathbf a_L(\mu_k)\}$ is the column space of $\mathcal A$. Let $\mathcal{P}_{U_2} = I_L - \mathcal{P}_{U_1}$ denote the orthogonal projection onto $U_2$. Then $J(\mu) = \norm{\mathcal{P}_{U_2} \mathbf a_L(\mu)}_2$. 

\textbf{Step 1:} We localize $J$ near a true mean. Let $\mu^* = \mu_j$ and write $\mu = \mu_j + \delta$ with $\delta \in \R^d$. Entrywise, $[\mathbf a_L(\mu_j+\delta)]_l = e^{\iota\innerproduct{\mu_j}{t_l}} e^{\iota\innerproduct{\delta}{t_l}} = e^{\iota\innerproduct{\mu_j}{t_l}}(1 + \iota\innerproduct{\delta}{t_l} + O(\norm{\delta}^2))$, with the remainder uniform in $l$ since $|\innerproduct{\delta}{t_l}| \le \norm{\delta}_2 \norm{t_l}_2$ is bounded. Collecting entries,
    \[
        \mathbf a_L(\mu_j + \delta) = \mathbf a_L(\mu_j) + D_j \delta + O(\norm{\delta}^2), \quad (D_j)_{l,:} = \iota\, e^{\iota\innerproduct{\mu_j}{t_l}} t_l^\mathrm{T},
    \]
where $D_j:=D \mathbf a_L(\mu_j) \in \C^{L\times d}$ is the Jacobi matrix of $\mathbf a_L$ at $\mu_j$. Since $\mathbf a_L(\mu_j) \in U_1$, we have $\mathcal{P}_{U_2}\mathbf a_L(\mu_j) = 0$, hence
    \[
        J(\mu_j + \delta) = \norm{\mathcal{P}_{U_2} D_j \delta}_2 + O(\norm{\delta}^2) = \norm{T_j \delta}_2 + O(\norm{\delta}^2),
    \]
where $T_j := \mathcal{P}_{U_2} D_j : \R^d \to U_2$. If $T_j$ is injective, then $\sigma_{\min}(T_j) > 0$, and for $\norm{\delta}_2$ small enough that the remainder is dominated, $J(\mu_j + \delta) \ge \frac{1}{2}\sigma_{\min}(T_j)\norm{\delta}_2$. It therefore suffices to prove that each $T_j$ is injective; the constant $\kappa = \frac{1}{2}\min_{1\le j\le k}\sigma_{\min}(T_j)$ and a sufficiently small neighborhood $\mathcal{N}$ then yield the claim.

\textbf{Step 2:} We reduce the injectivity of $T_j$ to a rank condition. For $\delta \in \R^d$,
    \[
        T_j \delta = 0 \iff \mathcal{P}_{U_2}(D_j \delta) = 0 \iff D_j \delta \in U_1 = \mathrm{col}(\mathcal A) \iff \exists\, \alpha \in \C^k:\ D_j \delta = \mathcal A \alpha.
    \]
Thus $T_j \delta = 0$ has only trivial solution $\delta = 0$ whenever the $L \times N$ matrix $[D_j \mid \mathcal A]$ has full column rank:
    \[
        \mathrm{rank}\,[D_j \mid \mathcal A] = N. \numberthis \label{eqn: rank condition}
    \]
In particular, \eqref{eqn: rank condition} implies that $T_j$ is injective.

\textbf{Step 3:} We note that the columns of $[D_j \mid \mathcal A]$ are the evaluations, at $t_1,\dots,t_L$, of the $N$ functions
    \[
        g_m(t) = e^{\iota\innerproduct{\mu_m}{t}}\ (m=1,\dots,k), \quad h_p(t) = \iota\, t^{(p)} e^{\iota\innerproduct{\mu_j}{t}}\ (p=1,\dots,d),
    \]
where $t^{(p)}$ is the $p$-th coordinate of $t$.  In view of this fact, we need only to prove $T_j$ is injective for the case $L=N$. 

Each of the $N$ functions is of the form $q(t)e^{\iota\innerproduct{\nu}{t}}$, with $q\equiv 1$ and frequency $\mu_m$ for the $g_m$, and $q(t) = \iota t^{(p)}$ and frequency $\mu_j$ for the $h_p$. Since exponential polynomials with distinct frequencies are linearly independent, any identity
    \[
        \sum_{m\neq j} \alpha_m e^{\iota\innerproduct{\mu_m}{t}} + \left(\alpha_j + \iota\innerproduct{\beta}{t}\right) e^{\iota\innerproduct{\mu_j}{t}} \equiv 0
    \]
forces each modal coefficient to vanish: $\alpha_m = 0$ for $m\neq j$, and $\alpha_j + \iota\innerproduct{\beta}{t} \equiv 0$, whence $\beta = 0$ and $\alpha_j = 0$. Therefore $g_1,\dots,g_k,h_1,\dots,h_d$ are linearly independent as functions. This shows that the rank condition \eqref{eqn: rank condition} holds on the continuous level, i.e., when the sampling points $t_j$'s form a continuum. 

\textbf{Step 4:} In what follows, we focus on the case $L=N$. We show that \eqref{eqn: rank condition} holds at the cost of a measure-zero exceptional set. 
The key is to analyze the following square matrix
    \[
        \Psi(t_1,\dots,t_N) := [D_j \mid \mathcal A] = \bigl[\varphi_i(t_l)\bigr]_{1\le l\le N,\,1\le i\le N} 
        \in \C^{N\times N},
    \]
 where $\varphi_1,\dots,\varphi_N$ denote the functions $g_1,\dots,g_k,h_1,\dots,h_d$ from Step~3. 
Consider the determinant
    \[
        W(s_1,\dots,s_N) := \det\bigl[\varphi_i(s_l)\bigr]_{1\le i,l\le N},
    \]
regarded as a function of $(s_1,\dots,s_N) \in (\R^d)^N$. Since each $\varphi_i$ is entire, 
$W$ is real-analytic on $(\R^d)^N \cong \R^{dN}$.

\textbf{Step 5:}  We show that $W \not\equiv 0$, and argue by induction on $N$. For $N=1$, $\varphi_1 \not\equiv 0$ 
gives some $s_1$ with $\varphi_1(s_1) \neq 0$. For the inductive step, choose 
$s_1,\dots,s_{N-1}$ with $M := \det[\varphi_i(s_l)]_{1\le i,l\le N-1} \neq 0$, and expand 
$W(s_1,\dots,s_{N-1},s)$ along its last column:
    \[
        W(s_1,\dots,s_{N-1},s) = \sum_{i=1}^N (-1)^{i+N} C_i\, \varphi_i(s),
    \]
where the cofactors $C_i$ are independent of $s$ and the coefficient of $\varphi_N$ is 
$C_N = M \neq 0$. This is a nontrivial linear combination of $\varphi_1,\dots,\varphi_N$, so 
by their linear independence (Step~3) it is not identically zero; hence some $s_N$ gives 
$W(s_1,\dots,s_N) \neq 0$, proving $W \not\equiv 0$.

\textbf{Step 6:} Because $\Psi$ is square, $\mathrm{rank}\,\Psi < N$ if and only if $\det \Psi = 0$, so the 
exceptional set is
    \[
        Z_j := \bigl\{(t_1,\dots,t_N) : \mathrm{rank}\,[D_j \mid \mathcal A] < N\bigr\} 
        = \bigl\{(t_1,\dots,t_N) : W(t_1,\dots,t_N) = 0\bigr\} \subset \R^{dN}.
    \]
As the zero set of a real-analytic function that is not identically zero, $Z_j$ is closed, 
has Lebesgue measure zero, and has empty interior. Consequently, 
condition \eqref{eqn: rank condition} holds for every $(t_1,\dots,t_N) \notin Z_j$.

\textbf{Step 7:} Let $Z=\cup_{1\leq j \leq k} Z_j$. Then $Z$ is also closed, of Lebesgue measure zero, and with 
empty interior. For $(t_1,\dots,t_N) \notin Z$, 
every $T_j$ is injective, and $\kappa = \frac{1}{2}\min_{1\le j\le k}\sigma_{\min}(T_j) > 0$. By Step 1, for each $\mu^* = \mu_j$ there is a neighborhood $\mathcal{N}$ on which $J(\mu) \ge \kappa\norm{\mu - \mu^*}_2$. This finishes the proof.
\end{proof}

\subsection{Proof of Theorem \ref{cor:param_error}}

\begin{proof}
Fix a true component mean $\mu^*\in\{\mu_1,\dots,\mu_k\}$. By
Proposition~\ref{prop:local coercivity condition}, there exist a constant
$\kappa>0$ and a neighborhood $\mathcal N$ of $\mu^*$ such that
\begin{equation}\label{eq:coercivity-ball}
    J(\mu)\ge\kappa\|\mu-\mu^*\|_2,
    \qquad \mu\in\mathcal N.
\end{equation}
Choose $r_0>0$ such that
$\overline{B}(\mu^*,r_0)\subset\mathcal N$, and write
\[
    \varepsilon:=
    \sup_{\mu\in\mathbb R^d}\bigl|\hat J(\mu)-J(\mu)\bigr|.
\]

\medskip
\noindent\textbf{Step 1 (Deterministic error bound).}
On the event $\{\|\hat E\|_2<\sigma_k(C)\}$,
Proposition~\ref{prop:imaging-stability} gives
\begin{equation}\label{eq:fluct}
    \varepsilon
    \le
    \frac{2\sqrt L\,\|\hat E\|_2}
         {\sigma_k(C)-\|\hat E\|_2}.
\end{equation}
Since $\hat J$ is continuous and $\overline{B}(\mu^*,r_0)$ is compact,
$\hat J$ attains its minimum over this closed ball. Let
\[
    \hat\mu
    \in
    \operatorname*{argmin}_{\mu\in\overline{B}(\mu^*,r_0)}
    \hat J(\mu).
\]
Because $\mu^*\in\overline{B}(\mu^*,r_0)$ and $J(\mu^*)=0$,
\begin{equation}\label{eq:min-chain}
    \hat J(\hat\mu)
    \le
    \hat J(\mu^*)
    \le
    J(\mu^*)+\varepsilon
    =
    \varepsilon.
\end{equation}
Combining $|\hat J(\hat\mu)-J(\hat\mu)|\le\varepsilon$ with
\eqref{eq:min-chain} yields $J(\hat\mu)\le2\varepsilon$. Since
$\hat\mu\in\overline{B}(\mu^*,r_0)\subset\mathcal N$, the coercivity estimate
\eqref{eq:coercivity-ball} applies:
\begin{equation}\label{eq:core-bound}
    \kappa\|\hat\mu-\mu^*\|_2
    \le
    J(\hat\mu)
    \le
    2\varepsilon,
    \qquad\text{and hence}\qquad
    \|\hat\mu-\mu^*\|_2
    \le
    \frac{2\varepsilon}{\kappa}.
\end{equation}
Substituting \eqref{eq:fluct} into \eqref{eq:core-bound} gives
\begin{equation}\label{eq:E-bound}
    \|\hat\mu-\mu^*\|_2
    \le
    \frac{4\sqrt L\,\|\hat E\|_2}
         {\kappa\bigl(\sigma_k(C)-\|\hat E\|_2\bigr)}.
\end{equation}

\medskip
\noindent\textbf{Step 2 (Local minimality).}
Let $\tau_0$ be as in \eqref{eq:param-tau0}, and consider the event
\[
    \mathcal E_0:=\{\|\hat E\|_2\le\tau_0\}.
\]
On $\mathcal E_0$, the first term in \eqref{eq:param-tau0} gives
\[
    \sigma_k(C)-\|\hat E\|_2
    \ge
    \frac{\sigma_k(C)}{2}.
\]
Consequently, \eqref{eq:E-bound} becomes
\begin{equation}\label{eq:core-rate}
    \|\hat\mu-\mu^*\|_2
    \le
    \frac{8\sqrt L\,\|\hat E\|_2}
         {\kappa\sigma_k(C)}.
\end{equation}
By the second term in \eqref{eq:param-tau0},
\[
    \frac{8\sqrt L\,\|\hat E\|_2}{\kappa\sigma_k(C)}
    \le
    \frac{8\sqrt L\,\tau_0}{\kappa\sigma_k(C)}
    \le
    \frac{r_0}{2}
    <
    r_0.
\]
Thus, $\hat\mu$ lies in the interior of
$\overline{B}(\mu^*,r_0)$. Since it minimizes $\hat J$ over this ball,
$\hat\mu$ is a genuine local minimizer of $\hat J$ on $\mathbb R^d$.

\medskip
\noindent\textbf{Step 3 (High-probability control).}
Let $Q:=L(M+1)$. By Proposition~\ref{prop:covariance-perturbation}, with probability at least
$1-\delta$,
\begin{equation}\label{eq:thm33-concentration}
    \|\hat E\|_2
    \le
    4L\sqrt{\kappa_\Sigma}
    \sqrt{\frac{\ln(4Q/\delta)}{n}}
    +
    4L\kappa_\Sigma
    \frac{\ln(4Q/\delta)}{n}.
\end{equation}
It is sufficient to require each term on the right-hand side to be at most
$\tau_0/2$. This is guaranteed whenever
\[
    n
    \ge
    \max\left\{
        \frac{64L^2\kappa_\Sigma\ln(4Q/\delta)}{\tau_0^2},
        \frac{8L\kappa_\Sigma\ln(4Q/\delta)}{\tau_0}
    \right\}
    =
    n_0,
\]
where $n_0$ is defined in \eqref{eq:n0-explicit}. Therefore, for every
$n\ge n_0$, the event $\mathcal E_0$ holds with probability at least
$1-\delta$.

We also note that $\sigma_k(C)\le\|C\|_2\le L$. Indeed, every latent Fourier
vector satisfies $\|y_m\|_2^2\le L$, and hence
\[
    \|C\|_2
    \le
    \frac{1}{M+1}\sum_{m=0}^M\|y_m\|_2^2
    \le
    L.
\]
It follows that $\tau_0\le L/2$, and therefore
\[
    \frac{
        64L^2\kappa_\Sigma\ln(4Q/\delta)/\tau_0^2
    }{
        8L\kappa_\Sigma\ln(4Q/\delta)/\tau_0
    }
    =
    \frac{8L}{\tau_0}
    \ge
    16.
\]
Thus, the first term in the maximum defining $n_0$ dominates.

\medskip
\noindent\textbf{Step 4 (Parametric convergence rate).}
For every $n\ge n_0$, combining \eqref{eq:core-rate} and
\eqref{eq:thm33-concentration} gives, with probability at least $1-\delta$,
\begin{align*}
    \|\hat\mu-\mu^*\|_2
    \le{}&
    \frac{32L^{3/2}\sqrt{\kappa_\Sigma}}
         {\kappa\sigma_k(C)}
    \sqrt{\frac{\ln(4Q/\delta)}{n}}
    \\
    &+
    \frac{32L^{3/2}\kappa_\Sigma}
         {\kappa\sigma_k(C)}
    \frac{\ln(4Q/\delta)}{n}.
\end{align*}
When the model parameters and the Fourier measurement design are fixed, the
first term is of order $n^{-1/2}$ and the second is of order $n^{-1}$.
Consequently,
\[
    \|\hat\mu-\mu^*\|_2
    =
    O_p(n^{-1/2}).
\]
This completes the proof.
\end{proof}

\subsection{Proof of Lemma \ref{prop:gradient}}
\begin{proof}
For brevity of notation, we write $\mathbf a(\mu):=\mathbf a_L(\mu)$, whose $l$-th entry is $e^{\iota\innerproduct{\mu}{t_l}}$. Since $\hat U_1 \hat U_1^*$ is the orthogonal projection onto the column space of $\hat U_1$, the matrix $I_L - \hat U_1 \hat U_1^*$ is Hermitian, and its $(m,l)$ entry is $\delta_{ml} - r_m r_l^*$, where $\delta_{ml}$ is the Kronecker delta. We compute the objective function as
    \begin{align*}
        f(\mu) 
        &= \norm{\hat U_1 \hat U_1^* \mathbf a(\mu) - \mathbf a(\mu)}_2^2 \\
        &= \left(\hat U_1 \hat U_1^* \mathbf a(\mu) - \mathbf a(\mu)\right)^* \left(\hat U_1 \hat U_1^* \mathbf a(\mu) - \mathbf a(\mu)\right) \\
        &= \mathbf a(\mu)^* (I_L - \hat U_1 \hat U_1^*) \mathbf a(\mu) \\
        &= \sum_{m=1}^L \sum_{l=1}^L e^{-\iota \innerproduct{\mu}{t_m}} (\delta_{ml} - r_m r_l^*) \, e^{\iota \innerproduct{\mu}{t_l}} \\
        &= \sum_{m=1}^L \sum_{l=1}^L (\delta_{ml} - r_m r_l^*) \, e^{\iota \innerproduct{\mu}{t_l - t_m}}.
    \end{align*}
    Taking the gradient with respect to $\mu$, and using $\nabla_\mu e^{\iota\innerproduct{\mu}{t_l - t_m}} = \iota \, e^{\iota\innerproduct{\mu}{t_l - t_m}}(t_l - t_m)$, we obtain
    \begin{align*}
        \nabla_\mu f(\mu) 
        &= \sum_{m=1}^L \sum_{l=1}^L (\delta_{ml} - r_m r_l^*) \, \nabla_\mu e^{\iota \innerproduct{\mu}{t_l - t_m}} \\
        &= \iota \sum_{m=1}^L \sum_{l=1}^L (\delta_{ml} - r_m r_l^*) \, e^{\iota\innerproduct{\mu}{t_l - t_m}} (t_l - t_m).
    \end{align*}
    The diagonal terms $m = l$ vanish since $t_l - t_m = 0$, so
    \[
        \nabla_\mu f(\mu) = -\iota \sum_{m=1}^L \sum_{l=1}^L r_m r_l^* \, e^{\iota\innerproduct{\mu}{t_l - t_m}} (t_l - t_m),
    \]
    which completes the proof.
\end{proof}

\subsection{Proof of Lemma~\ref{lem:local_regularity}}

\begin{proof}
Write $P:=\mathcal P_{U_2}$ and $\hat P:=\mathcal P_{\hat U_2}$ for the
population and empirical orthogonal projectors. 
We denote  
\[
F(\mu)=\mathbf a_L(\mu)^{*}P\,\mathbf a_L(\mu)
\]
and note $F=J^{2}$, $f=\mathbf a_L(\mu)^{*}\hat P\,\mathbf a_L(\mu)=\hat J^{2}$. 
Introduce the \emph{subspace error}
\[
   \eta \;:=\; \bigl\|\hat P-P\bigr\|_2 .
\]

\medskip
\noindent\textbf{Step 1 (smoothness of $F$).}
The feature map $\mathbf a_L$ is $C^\infty$ with derivatives uniformly bounded on
$\bar B(\mu_j,1)$; set
\[
    B\;:=\;\max_{0\le \ell\le 3}\ \sup_{\mu\in \bar B(\mu_j,1)}
        \bigl\|D^{\ell}\mathbf a_L(\mu)\bigr\|\;<\;\infty,
\]
where $D^{\ell}\mathbf a_L(\mu)$ denotes the $\ell$-th order derivative
(Fréchet derivative) of the vector-valued map $\mathbf a_L$ at $\mu$. 
Since $F=\mathbf a_L^{*}P\mathbf a_L$ is a fixed quadratic form in $\mathbf a_L$ with
$\|P\|=1$, the objective $F$ is $C^{3}$ on this ball and its Hessian is
Lipschitz: there is $C_1>0$, depending only on $B$, with
\begin{equation}\label{eq:F-hess-lip}
    \bigl\|\nabla^2 F(\mu)-\nabla^2 F(\mu_j)\bigr\|
    \;\le\; C_1\,\|\mu-\mu_j\|_2 ,
    \qquad \mu\in \bar B(\mu_j,1).
\end{equation}
Let $D_j=D\mathbf a_L(\mu_j)$ and recall that $T_j=PD_j$. Since
$P\mathbf a_L(\mu_j)=0$, Taylor expansion gives, for $h\in\mathbb R^d$,
\[
    P\mathbf a_L(\mu_j+h)=T_jh+O(\|h\|_2^2),
    \qquad
    F(\mu_j+h)=\|T_jh\|_2^2+O(\|h\|_2^3).
\]
Thus, with $H_j:=\nabla^2F(\mu_j)$, where derivatives are taken with respect
to the real parameter $\mu$,
\[
    v^\top H_jv=2\|T_jv\|_2^2,
    \qquad v\in\mathbb R^d.
\]
Equivalently, $H_j=2\operatorname{Re}(T_j^*T_j)$. The singular-value bounds
therefore imply
\begin{equation}\label{eq:population-hessian-bounds}
    m_jI_d\preceq H_j\preceq M_jI_d .
\end{equation}

\medskip
\noindent\textbf{Step 2 (propagation of the subspace error to the Hessian).}
For any $\mu$, subtracting the two quadratic forms gives
\[
    f(\mu)-F(\mu)=\mathbf a_L(\mu)^{*}(\hat P-P)\,\mathbf a_L(\mu).
\]
Differentiating with respect to the real parameter $\mu$ and applying the
product rule, $\nabla^2(f-F)(\mu)$ is a real symmetric matrix whose entries
are finite sums of terms involving $\mathbf a_L$, its first two derivatives,
and the fixed operator $\hat P-P$. Consequently, on $\bar B(\mu_j,1)$,
\begin{equation}\label{eq:hess-perturb}
    \bigl\|\nabla^2 f(\mu)-\nabla^2 F(\mu)\bigr\|
    \;\le\; C_2\,\bigl\|\hat P-P\bigr\|_2 \;=\; C_2\,\eta ,
\end{equation}
with $C_2$ depending only on $B$. In particular $f$ is $C^2$ on
$\bar B(\mu_j,1)$.

By Proposition~\ref{prop:imaging-stability},
on the event $\{\|\hat E\|_2<\sigma_k(C)\}$ the subspace error is controlled by
\begin{equation}\label{eq:eta-bound}
   \eta \;\le\; \frac{c\,\|\hat E\|_2}{\sigma_k(C)-\|\hat E\|_2},
\end{equation}
for an absolute constant $c>0$; and by
Proposition~\ref{prop:covariance-perturbation},
$\|\hat E\|_2=O_p(n^{-1/2})$, whence $\eta=O_p(n^{-1/2})$.

\medskip
\noindent\textbf{Step 3 (Hessian sandwich).}
Set $C_0:=C_1+C_2$. For every $\mu\in B(\mu_j,\rho_{0,j})$ with
$\rho_{0,j}\le 1$, combining \eqref{eq:F-hess-lip} and
\eqref{eq:hess-perturb} through the triangle inequality gives
\[
    \bigl\|\nabla^2 f(\mu)-H_j\bigr\|
    \;\le\;\underbrace{\|\nabla^2 f(\mu)-\nabla^2 F(\mu)\|}_{\le\,C_2\eta}
        +\underbrace{\|\nabla^2 F(\mu)-H_j\|}_{\le\,C_1\|\mu-\mu_j\|_2\le C_1\rho_{0,j}}
    \;\le\; C_0\,(\rho_{0,j}+\eta).
\]
Since $\nabla^2 f(\mu)$ and $H_j$ are real symmetric,
\eqref{eq:population-hessian-bounds} and the preceding operator-norm bound
give
\begin{equation}\label{eq:hess-sandwich}
    \bigl(m_j-C_0(\rho_{0,j}+\eta)\bigr)I_d
    \;\preceq\;\nabla^2 f(\mu)\;\preceq\;
    \bigl(M_j+C_0(\rho_{0,j}+\eta)\bigr)I_d,
    \qquad \mu\in B(\mu_j,\rho_{0,j}) .
\end{equation}

\medskip
\noindent\textbf{Step 4 (choice of $\rho_{0,j}$ and control of the Hessian).}
Choose $\rho_{0,j}\le 1$ and $\eta_0>0$ small enough that
\[
    \overline{B}(\mu_j,\rho_{0,j})\subset\mathcal N_j,
    \qquad
    C_0(\rho_{0,j}+\eta_0)\le\tfrac12 m_j.
\]
By \eqref{eq:eta-bound} and
Proposition~\ref{prop:covariance-perturbation}, there exists
\[
   n_{\mathrm H}
   =n_{\mathrm H}(\delta,\Sigma,\{\mu_1,\dots,\mu_k\},\mathbf a_L)
\]
such that, for all $n\ge n_{\mathrm H}$,
\[
    \mathbb P(\eta\le\eta_0)\ge 1-\frac{\delta}{2}.
\]
On the event $\mathcal E_{\mathrm H}:=\{\eta\le\eta_0\}$,
\eqref{eq:hess-sandwich} yields the lower bound
$\nabla^2 f(\mu)\succeq\tfrac12 m_j\,I_d$ and the upper bound
\[
   \nabla^2 f(\mu)\preceq\bigl(M_j+\tfrac12 m_j\bigr)I_d\preceq 2M_j\,I_d
   \qquad(\text{using }m_j\le M_j),
\]
for all $\mu\in B(\mu_j,\rho_{0,j})$, which is precisely the asserted sandwich. Hence
$f$ is $\tfrac12 m_j$-strongly convex and $2M_j$-smooth on the convex set
$B(\mu_j,\rho_{0,j})$.

\medskip
\noindent\textbf{Step 5 (existence and uniqueness of the minimizer).}
Apply Theorem~\ref{cor:param_error} with confidence level $\delta/2$. After
increasing a deterministic threshold $n_{\mathrm E}$ if necessary, the theorem
provides an event $\mathcal E_{\mathrm E}$ with
\[
    \mathbb P(\mathcal E_{\mathrm E})\ge1-\frac{\delta}{2}
\]
on which $\hat J$, and hence $f=\hat J^2$, has a local minimizer
$\hat\mu_j\in B(\mu_j,\rho_{0,j})$. The same theorem gives
\[
    \|\hat\mu_j-\mu_j\|_2=O_p(n^{-1/2}).
\]
Set $n_0:=\max\{n_{\mathrm H},n_{\mathrm E}\}$. By the union bound,
\[
    \mathbb P(\mathcal E_{\mathrm H}\cap\mathcal E_{\mathrm E})
    \ge 1-\delta .
\]
On this intersection, $\hat\mu_j$ is an interior local minimizer of the
$C^2$ function $f$, and hence $\nabla f(\hat\mu_j)=0$. The strong convexity
established in Step~4 implies that a critical point is the unique minimizer
of $f$ over $B(\mu_j,\rho_{0,j})$. This proves the assertion.
\end{proof}

\subsection{Proof of Lemma~\ref{lem:global_coercivity}}

\begin{proof}
By definition, $J(\mu)=0$ if and only if $\mathbf a_L(\mu)\in U_1$. Suppose some
$\mu\notin\{\mu_1,\dots,\mu_k\}$ satisfied $\mathbf a_L(\mu)\in U_1$. Then
$\mathbf a_L(\mu),\mathbf a_L(\mu_1),\dots,\mathbf a_L(\mu_k)$ would be $k+1$ steering
vectors at distinct parameters that are linearly dependent, contradicting the
nondegeneracy assumption. Hence
\[
  J(\mu)>0\qquad\text{for every }
  \mu\in\Theta\setminus\{\mu_1,\dots,\mu_k\}.
\]

Next, the set
$K:=\Theta\setminus\bigcup_{l=1}^{k}\tilde{\mathcal N}_l$ is closed and bounded, hence compact, and since $\rho_0>0$ we have
$K\cap\{\mu_1,\dots,\mu_k\}=\varnothing$. As $J$ is continuous on the compact
set $K$, it attains its minimum $m:=\min_{\mu\in K}J(\mu)=J(\mu^\ast)$ at some
$\mu^\ast\in K$. Because $\mu^\ast\notin\{\mu_1,\dots,\mu_k\}$,  $m=J(\mu^\ast)>0$, so $J_\star=m/2\in(0,m)$.

Finally, let $\mu\in\Theta$ satisfy $J(\mu)\le J_\star=m/2<m$. If $\mu\in K$,
then $J(\mu)\ge m$, a contradiction; hence $\mu\notin K$, that is,
$\mu\in\bigcup_{l=1}^{k}\tilde{\mathcal N}_l$. This is the asserted inclusion.
\end{proof}

\subsection{Proof of Theorem~\ref{thm:basin}}

\begin{proof}
Throughout, denote 
\[
   \varepsilon:=\|\hat J-J\|_\infty,
   \qquad
   \varepsilon_\star:=\tfrac12\sigma_{\min}\rho_1-\tau>0.
\]
By the construction of the coercivity neighborhoods,
\begin{equation}\label{eqn:quad_lower}
    J(\mu)\ge\tfrac12\sigma_{\min}(T_l)\|\mu-\mu_l\|_2,
    \qquad \mu\in\tilde{\mathcal N}_l,\quad l=1,\dots,k.
\end{equation}

\medskip
\noindent\textbf{Step 0 (sample size $\Rightarrow$ imaging error below the margin).}
Since $J\le\sqrt L$, letting the distance to $\mu_l$ increase to $\rho_0$ in
\eqref{eqn:quad_lower} gives
$\tfrac12\sigma_{\min}\rho_0\le\sqrt L$. Because
$\rho_1\le\rho_0/2$, it follows that
$\varepsilon_\star<\tfrac12\sqrt L$. Define
\[
   Q:=L(M+1),
   \qquad
   \beta_\star:=\frac{\sigma_k(C)\varepsilon_\star}{4\sqrt L},
\]
and
\begin{equation}\label{eqn:basin-n0}
   n_{\rm img}
   :=
   \max\left\{
      \frac{64L^2\kappa_\Sigma}{\beta_\star^2},
      \frac{8L\kappa_\Sigma}{\beta_\star}
   \right\}
   \ln\left(\frac{8Q}{\delta}\right).
\end{equation}
Since $\varepsilon_\star<\tfrac12\sqrt L$, we have
$\beta_\star<\sigma_k(C)/2$. Moreover, $\sigma_k(C)\le\|C\|_2\le L$, so the
first term in the maximum in \eqref{eqn:basin-n0} dominates the second and
\[
   n_{\rm img}
   =
   O\!\left(
      \frac{L^3\kappa_\Sigma\ln(Q/\delta)}
           {\sigma_k(C)^2\varepsilon_\star^2}
   \right).
\]

By Proposition~\ref{prop:imaging-stability}, on the event
$\{\|\hat E\|_2<\sigma_k(C)\}$ the uniform imaging error obeys
\begin{equation}\label{eqn:eps-bound}
   \varepsilon\;=\;\|\hat J-J\|_\infty
   \;\le\;\frac{2\sqrt L\,\|\hat E\|_2}
                    {\sigma_k(C)-\|\hat E\|_2}.
\end{equation}
On the event $\mathcal E_{\mathrm{pert}}:=\{\|\hat E\|_2\le\beta_\star\}$, the definition of
$\beta_\star$ gives
$\sigma_k(C)-\|\hat E\|_2\ge\tfrac12\sigma_k(C)$, so
\eqref{eqn:eps-bound} gives
$\varepsilon\le4\sqrt L\,\|\hat E\|_2/\sigma_k(C)\le\varepsilon_\star$. Hence
\begin{equation}\label{eqn:margin-met}
   \mathcal E_{\mathrm{pert}}\ \Longrightarrow\
   \tau+\varepsilon
   \;\le\;\tau+\varepsilon_\star
   \;=\;\tfrac12\,\sigma_{\min}\rho_1 .
\end{equation}
Apply Proposition~\ref{prop:covariance-perturbation} with confidence level
$\delta/2$. With probability at least $1-\delta/2$,
\[
   \|\hat E\|_2
   \le4L\sqrt{\kappa_\Sigma}
          \sqrt{\frac{\ln(8Q/\delta)}{n}}
      +4L\kappa_\Sigma\frac{\ln(8Q/\delta)}{n}.
\]
The two terms on the right are each at most $\beta_\star/2$ under
\eqref{eqn:basin-n0}. Thus, for
$n\ge n_{\rm img}$,
$\mathbb P(\mathcal E_{\mathrm{pert}})\ge1-\delta/2$. By the definition of
$n_{\rm reg}(\delta/2)$, there is an event $\mathcal R$ on which the
conclusion of Lemma~\ref{lem:local_regularity} holds for every component, with
$\mathbb P(\mathcal R)\ge1-\delta/2$ for
$n\ge n_{\rm reg}(\delta/2)$. The union bound gives
\[
   \mathbb P(\mathcal E_{\mathrm{pert}}\cap\mathcal R)\ge1-\delta .
\]
Taking $n_0:=n_{\rm reg}(\delta/2)\vee n_{\rm img}$ gives the sample-size order
stated in the theorem. We work on $\mathcal E_{\mathrm{pert}}\cap\mathcal R$ for the remainder
of the proof.

\medskip
\noindent\textbf{Step 1 (the score threshold forces $\hat J(x_i)$ small).}
By definition of the score, $s(x_i)=L-\hat J(x_i)^2$, so
\eqref{eqn:score_threshold} gives
\[
    \hat J(x_i)\;=\;\sqrt{L-s(x_i)}\;\le\;\tau .
\]
Combining with $\|\hat J-J\|_\infty=\varepsilon$,
\begin{equation}\label{eqn:J-small}
    J(x_i)\;\le\;\hat J(x_i)+\varepsilon\;\le\;\tau+\varepsilon
    \;\overset{\eqref{eqn:margin-met}}{\le}\;\tfrac12\,\sigma_{\min}\,\rho_1 .
\end{equation}

\medskip
\noindent\textbf{Step 2 ($x_i$ falls inside a coercivity neighborhood).}
Since $x_i\in\Theta$ and
$\rho_1\le 2J_\star/\sigma_{\min}$ we have
$\tfrac12\sigma_{\min}\rho_1\le J_\star$, so \eqref{eqn:J-small} gives
$J(x_i)\le J_\star$. Lemma~\ref{lem:global_coercivity} therefore implies
$x_i\in\tilde{\mathcal N}_l$ for some $l$.

\medskip
\noindent\textbf{Step 3 (locating and bounding the distance).}
Fix the index $l$ from Step~2. Applying the local coercivity bound
\eqref{eqn:quad_lower} on $\tilde{\mathcal N}_l$ and then \eqref{eqn:J-small},
\[
    \tfrac12\,\sigma_{\min}(T_l)\,\norm{x_i-\mu_l}_2
    \;\le\; J(x_i)\;\le\;\tau+\varepsilon,
\]
so, using $\sigma_{\min}(T_l)\ge\sigma_{\min}$ and \eqref{eqn:margin-met},
\begin{equation}\label{eqn:dist-bound}
    \norm{x_i-\mu_l}_2
    \;\le\;\frac{2}{\sigma_{\min}(T_l)}\,(\tau+\varepsilon)
    \;\le\;\frac{2}{\sigma_{\min}}\cdot\tfrac12\,\sigma_{\min}\rho_1
    \;=\;\rho_1 .
\end{equation}

\medskip
\noindent\textbf{Step 4 (uniqueness of the nearest mean).}
By \eqref{eqn:dist-bound}, $x_i\in\mathcal B_l$. Because
$\rho_1\le\Delta/3$, these closed classification balls are pairwise disjoint,
so this index is unique. Denote it by $j$. For any $l\ne j$, the triangle
inequality gives
\[
    \norm{x_i-\mu_l}_2
    \;\ge\;\norm{\mu_j-\mu_l}_2-\norm{x_i-\mu_j}_2
    \;\ge\;\Delta-\rho_1 .
\]
Since $\Delta-\rho_1>\rho_1\ge\|x_i-\mu_j\|_2$, the mean $\mu_j$ is the unique
nearest true mean.

\medskip
\noindent\emph{Conclusion.}
Thus $x_i\in\mathcal B_j\subseteq\tilde{\mathcal N}_j$, and by
Lemma~\ref{lem:local_regularity} (whose regularity event we secured in Step~0)
the empirical objective $f=\hat J^2$ is $\tfrac12 m_j$-strongly convex and
$2M_j$-smooth on $\tilde{\mathcal N}_j$. All bounds hold simultaneously for every candidate $x_i\in\Theta$ crossing the score threshold.
\end{proof}

\subsection{Proof of Lemma~\ref{lem:score_hit}}

\begin{proof}
Since $J(\mu^*)=0$ and $\|\hat J-J\|_\infty\le\tau/2$,
$ \hat J(\mu^*)\le\frac{\tau}{2}$. For every $x\in\mathbb R^d$,
\begin{align*}
    |\hat J(x)-\hat J(\mu^*)|
    &\le \|\mathcal P_{\hat U_2}
          (\mathbf a_L(x)-\mathbf a_L(\mu^*))\|_2 \\
    &\le \|\mathbf a_L(x)-\mathbf a_L(\mu^*)\|_2.
\end{align*}
Moreover,
\begin{align*}
    \|\mathbf a_L(x)-\mathbf a_L(\mu^*)\|_2^2
    &=\sum_{l=1}^L
      \left|e^{\iota\langle t_l,x\rangle}
      -e^{\iota\langle t_l,\mu^*\rangle}\right|^2 \\
    &=4\sum_{l=1}^L
      \sin^2\!\left(\frac{\langle t_l,x-\mu^*\rangle}{2}\right) \\
    &\le \sum_{l=1}^L|\langle t_l,x-\mu^*\rangle|^2 
    \le L_a^2\|x-\mu^*\|_2^2.
\end{align*}
Hence $|\hat J(x)-\hat J(\mu^*)|
\le L_a\|x-\mu^*\|_2$.
Thus, if $\|x-\mu^*\|_2\le r_{\rm cert}$, then
\[
    \hat J(x)\le\frac{\tau}{2}+L_a r_{\rm cert}=\tau,
    \qquad
    s(x)=L-\hat J(x)^2\ge L-\tau^2.
\]
The probability that one observation belongs to the target component and lies
in $B(\mu^*,r_{\rm cert})$ is $w^*p_{r_{\rm cert}}$. Hence
\[
    \Pr\!\left(\min_{1\le i\le n}\|x_i-\mu^*\|_2>r_{\rm cert}\right)
    \le (1-w^*p_{r_{\rm cert}})^n
    \le e^{-nw^*p_{r_{\rm cert}}}
    \le\delta,
\]
which proves the claim.
\end{proof}

\subsection{Proof of Theorem~\ref{thm:gd_convergence}}

\begin{proof}
Let
\[
    R_j:=\norm{\mu^{(0)}-\hat\mu_j}_2,
    \qquad
    \mathcal D_j:=\overline B(\hat\mu_j,R_j).
\]
For every $\mu\in\mathcal D_j$, Theorem~\ref{thm:basin}, the triangle
inequality, and \eqref{eqn:gd_interior} give
\[
    \norm{\mu-\mu_j}_2
    \le R_j+\norm{\hat\mu_j-\mu_j}_2
    \le \rho_1+2\norm{\hat\mu_j-\mu_j}_2
    <\rho_0.
\]
Hence $\mathcal D_j\subset\tilde{\mathcal N}_j$. On this convex set,
Lemma~\ref{lem:local_regularity} yields
\[
    \tfrac12m_jI_d\preceq\nabla^2f(\mu)\preceq2M_jI_d .
\]
Since $\hat\mu_j$ is an interior minimizer, $\nabla f(\hat\mu_j)=0$. For
$\mu\in\mathcal D_j$, define
\[
    H_\mu
    :=\int_0^1\nabla^2f\bigl(\hat\mu_j
            +s(\mu-\hat\mu_j)\bigr)\,ds .
\]
The segment lies in $\mathcal D_j$, and therefore
\[
    \tfrac12m_jI_d\preceq H_\mu\preceq2M_jI_d,
    \qquad
    \nabla f(\mu)=H_\mu(\mu-\hat\mu_j).
\]
For $0<\gamma\le1/(2M_j)$, all eigenvalues of
$I_d-\gamma H_\mu$ lie in
$[0,1-\tfrac12\gamma m_j]$. Thus
\[
    \norm{\mu-\gamma\nabla f(\mu)-\hat\mu_j}_2
    \le
    \Bigl(1-\tfrac12\gamma m_j\Bigr)
    \norm{\mu-\hat\mu_j}_2 .
\]
The gradient map consequently sends $\mathcal D_j$ into itself. Since
$\mu^{(0)}\in\mathcal D_j$, induction shows that every iterate lies in
$\mathcal D_j\subset\tilde{\mathcal N}_j$, and iterating the preceding inequality
proves \eqref{eqn:linear_rate}.
\end{proof}

\subsection{Proof of Proposition \ref{prop: pca error}}
\begin{proof}
Set
\[
    A:=\sum_{i=1}^k w_i\mu_i\mu_i^\mathrm T,
    \qquad
    M:=\E\!\left[\frac1nX^\mathrm TX\right]
      =A+\sigma^2I_d,
    \qquad
    \hat M:=\frac1nX^\mathrm TX.
\]
Let $\hat P:=\hat V\hat V^\mathrm T$, and let $P$ be a rank-$k$
orthogonal projector satisfying $PA=A$. Since $\hat P$ is the leading
rank-$k$ spectral projector of $\hat M$,
\[
    \operatorname{tr}(\hat P\hat M)\ge \operatorname{tr}(P\hat M).
\]
Moreover,
\begin{align}
    \sum_{i=1}^k w_i\norm{\mu_i-\hat P\mu_i}_2^2
    &=\operatorname{tr}\bigl((I_d-\hat P)A\bigr) \nonumber\\
    &=\operatorname{tr}(PM)-\operatorname{tr}(\hat PM) \nonumber\\
    &\le \operatorname{tr}\bigl(P(M-\hat M)\bigr)
       +\operatorname{tr}\bigl(\hat P(\hat M-M)\bigr) \nonumber\\
    &\le 2k\norm{\hat M-M}_2.
    \label{eqn:pca-variational-bound}
\end{align}

For a generic observation $x$ and every $u\in\R^d$,
\[
    \norm{\innerproduct{u}{x}}_{\psi_2}
    \le C_0(R+\sigma)\norm{u}_2,
    \qquad
    \norm{\innerproduct{u}{x}}_{L_2}
    =(u^\mathrm TMu)^{1/2}
    \ge \sigma\norm{u}_2.
\]
Here
\[
    \norm{Y}_{\psi_2}
    :=\inf\left\{s>0:\E\exp\!\left(\frac{Y^2}{s^2}\right)\le2\right\},
    \qquad
    \norm{Y}_{L_2}:=(\E Y^2)^{1/2}.
\]
Consequently,
\[
    \norm{\innerproduct{u}{x}}_{\psi_2}
    \le K\norm{\innerproduct{u}{x}}_{L_2},
    \qquad
    K:=\frac{C_0(R+\sigma)}{\sigma}.
\]
Applying Theorem~4.7.1 and Remark~4.7.3 of
\cite{vershynin2018high}, with probability at least $1-\delta$,
\begin{align*}
    \norm{\hat M-M}_2
    &\le C_1K^2\norm{M}_2
    \left(
        \sqrt{\frac{d+\ln(2/\delta)}{n}}
        +\frac{d+\ln(2/\delta)}{n}
    \right)\\
    &\le C_2\frac{(R+\sigma)^2(R^2+\sigma^2)}{\sigma^2}
    \left(
        \sqrt{\frac{d+\ln(2/\delta)}{n}}
        +\frac{d+\ln(2/\delta)}{n}
    \right),
\end{align*}
where $\norm{M}_2\le R^2+\sigma^2$. Absorbing $\ln 2$ into the
universal constant and combining this estimate with
\eqref{eqn:pca-variational-bound} proves
\eqref{eqn:pca-error-bound}.
\end{proof}

\section{Auxiliary Lemmas}

\begin{lemma} (Theorem 1 in \cite{gautschi1962inverses}).
\label{lemma:van}
Let $x_i \neq x_j$ for $i\neq j$. 
Let 
    \[
        V_k = \begin{bmatrix}
        		1&1&\cdots&1 \\
                    x_1&x_2&\cdots&x_k \\
                    \vdots&\vdots&\ddots&\vdots\\
                    x_1^{k-1}&x_2^{k-1}&\cdots&x_k^{k-1}
        	\end{bmatrix} 
    \]
be a Vandermonde matrix. 
Then
    \begin{equation}
    \label{eqn:2normVan}
        ||V_k^{-1}||_\infty \leq \max_{1\leq j\leq k} \prod_{i=1,i\neq j}^k \frac{1+|x_i|}{|x_i-x_j|}.
    \end{equation}
\end{lemma}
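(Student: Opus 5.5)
We will prove the bound by identifying the rows of $V_k^{-1}$ with the coefficient vectors of the Lagrange interpolation basis polynomials, and then bounding the absolute coefficient sums of these polynomials.

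\textbf{Step 1 (rows of the inverse).} For a coefficient vector $c=(c_1,\dots,c_k)^\mathrm T\in\C^k$, define $p_c(x)=\sum_{a=1}^k c_a x^{a-1}$. Since $(V_k)_{ab}=x_b^{a-1}$, we have
\[
V_k^\mathrm T c=\bigl(p_c(x_1),\dots,p_c(x_k)\bigr)^\mathrm T .
\]
Let
\[
\ell_j(x)=\prod_{i\neq j}\frac{x-x_i}{x_j-x_i}
\]
be the $j$-th Lagrange basis polynomial. It has degree $k-1$ and satisfies $\ell_j(x_b)=\delta_{jb}$, which is well defined because the nodes are distinct. Let $c^{(j)}$ be its coefficient vector. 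Then $V_k^\mathrm T c^{(j)}=e_j$, so $c^{(j)}$ is the $j$-th column of $(V_k^\mathrm T)^{-1}$. Transposing, $c^{(j)\mathrm T}$ is the $j$-th row of $V_k^{-1}$.

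\textbf{Step 2 (coefficient bound).} The numerator $\prod_{i\neq j}(x-x_i)$ expands as
\[
\prod_{i\neq j}(x-x_i)=\sum_{r=0}^{k-1}(-1)^{r}e_r\,x^{k-1-r},
\]
where $e_r$ is the $r$-th elementary symmetric polynomial of $\{x_i\}_{i\neq j}$. By the triangle inequality, $|e_r|$ is at most the same elementary symmetric polynomial evaluated at $\{|x_i|\}_{i\neq j}$. Summing over $r$ therefore gives
\[
\sum_{r=0}^{k-1}|e_r|\le\prod_{i\neq j}(1+|x_i|).
\]
Dividing by $\prod_{i\neq j}|x_j-x_i|$ yields
\[
\|c^{(j)}\|_1\le\prod_{i\neq j}\frac{1+|x_i|}{|x_i-x_j|}.
\]

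\textbf{Step 3 (conclusion).} The matrix $\infty$-norm is the maximum absolute row sum, so by Step 1 it equals $\max_j\|c^{(j)}\|_1$. Step 2 then gives \eqref{eqn:2normVan}.

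There is no genuine obstacle in this argument. The only step requiring care is the transpose bookkeeping in Step 1: one must confirm that the coefficient vectors of the Lagrange polynomials appear as \emph{rows} of $V_k^{-1}$ and not as columns. This matters because the $\infty$-norm is the row-sum norm, so misplacing them would yield a bound on the wrong norm.
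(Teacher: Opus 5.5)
Your proof is correct: the rows of $V_k^{-1}$ are exactly the coefficient vectors of the Lagrange basis polynomials. The elementary-symmetric-function bound then gives the stated row-sum estimate. The paper does not prove this lemma itself but cites Theorem~1 of \cite{gautschi1962inverses}; Gautschi's proof there uses this same Lagrange-coefficient argument, so your proposal follows the standard route.
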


\end{document}